\newsavebox{\boxifnotempty}
\newcommand{\displayifnotempty}[3]{\sbox\boxifnotempty{#2}\setbox0=\hbox{\usebox{\boxifnotempty}\unskip}%
\ifdim\wd0=0pt
\else
 #1\usebox{\boxifnotempty}#3%
\fi%
}
\newcommand{\ifempty}[2]{\setbox0=\hbox{#1\unskip}%
\ifdim\wd0=0pt%
 #2%
\fi%
}
\newcommand{\ifnotempty}[2]{\setbox0=\hbox{#1\unskip}%
\ifdim\wd0>0pt%
 #2%
\fi%
}
\newcommand*\newstoreddef[1]{
  \BeforeClosingMainAux{%
    \immediate\write\@auxout{%
      \string\restoredef{#1}{\csname #1\endcsname}%
    }%
  }%
}
\newcommand*{\restoredef}[2]{
  \expandafter\gdef\csname stored@#1\endcsname{#2}%
}
\newcommand*{\storeddef}[1]{
  \@ifundefined{stored@#1}{0}{\csname stored@#1\endcsname}%
}
\newcommand{\real}[1]{\mathbb{R}^{#1}{}}
\newcommand{\bmat}[1]{\begin{bmatrix}#1\end{bmatrix}}
\newcommand{\smallbmat}[1]{\left[\begin{smallmatrix}#1\end{smallmatrix}\right]}
\newcommand{\transpose}{^\mathrm{T}}
\newcommand{\defeq}{\doteq}
\DeclarePairedDelimiter{\norm}{\lVert}{\rVert}
\newcommand{\de}{\mathrm{d}}
\DeclareMathOperator*{\argmin}{\arg\!\min}
\newcommand{\subjectto}{\textrm{subject to }}
\providecommand{\cC}{\mathcal{C}}
\providecommand{\cH}{\mathcal{H}}
\providecommand{\cL}{\mathcal{L}}
\newcommand{\newcolorlabel}[2]{%
  \expandafter\newcommand\csname #1\endcsname[1]{%
    \colorbox{#2}{\color{white}\textsf{\textbf{##1}}}}%
}
\newcommand{\newcommenter}[2]{%
  \expandafter\newcommand\csname #1\endcsname[1]{%
    \fcolorbox{#2}{#2}{\color{white}\textsf{\textbf{#1}}}
    {\color{#2}##1}}%
  \expandafter\newcommand\csname at#1\endcsname{%
    \fcolorbox{#2}{#2}{\color{white}\textsf{\textbf{@#1}}}
    {\color{#2}}}%
  \expandafter\newcommand\csname #1hl\endcsname[2]{%
    \colorbox{#2}{\color{white}\textsf{\textbf{#1}}}\sethlcolor{Azure2}\hl{##2}~%
    \expandafter\ifx\csname commentarrow\endcsname\relax$\leftarrow$\else \commentarrow[#2]\fi~%
    {\color{#2}##1}}%
  \expandafter\newcommand\csname #1st\endcsname[2]{%
    \colorbox{#2}{\color{white}\textsf{\textbf{#1}}}\sout{##2}~%
    \expandafter\ifx\csname commentarrow\endcsname\relax$\leftarrow$\else \commentarrow[#2]\fi~%
    {\color{#2}##1}}%
}
\newenvironment{lenumerate}[2][]
{\begin{enumerate}[label=(#2\arabic*),leftmargin=0.2in,itemindent=0.15in,#1]}
{\end{enumerate}}
\setlist*[enumerate,1]{label={\itshape\arabic*)}}
\newcommand{\paragraphswithstop}{%
\let\copyparagraph\paragraph%
\renewcommand\paragraph[1]{\copyparagraph{##1.}}%
}
\tikzset{
  dim above/.style={to path={\pgfextra{
        \pgfinterruptpath
        \draw[>=latex,|->|] let
        \p1=($(\tikztostart)!1.5em!90:(\tikztotarget)$),
        \p2=($(\tikztotarget)!1.5em!-90:(\tikztostart)$)
        in(\p1) -- (\p2) node[pos=.5,sloped,above]{#1};
        \endpgfinterruptpath
      }
    }
  },
  dim double above/.style={to path={\pgfextra{
        \pgfinterruptpath
        \draw[>=latex,|->|] let
        \p1=($(\tikztostart)!3em!90:(\tikztotarget)$),
        \p2=($(\tikztotarget)!3em!-90:(\tikztostart)$)
        in(\p1) -- (\p2) node[pos=.5,sloped,above]{#1};
        \endpgfinterruptpath
      }
    }
  },
  dim below/.style={to path={\pgfextra{
        \pgfinterruptpath
        \draw[>=latex,|->|] let 
        \p1=($(\tikztostart)!-1em!-90:(\tikztotarget)$),
        \p2=($(\tikztotarget)!-1em!90:(\tikztostart)$)
        in (\p1) -- (\p2) node[pos=.5,sloped,below]{#1};
        \endpgfinterruptpath
      }
    }
  },
}
\tikzset{
    right angle quadrant/.code={
        \pgfmathsetmacro\quadranta{{1,1,-1,-1}[#1-1]}     
        \pgfmathsetmacro\quadrantb{{1,-1,-1,1}[#1-1]}},
    right angle quadrant=1, 
    right angle length/.code={\def\rightanglelength{#1}},   
    right angle length=2ex, 
    right angle symbol/.style n args={3}{
        insert path={
            let \p0 = ($(#1)!(#3)!(#2)$) in     
                let \p1 = ($(\p0)!\quadranta*\rightanglelength!(#3)$), 
                \p2 = ($(\p0)!\quadrantb*\rightanglelength!(#2)$) in 
                let \p3 = ($(\p1)+(\p2)-(\p0)$) in  
            (\p1) -- (\p3) -- (\p2)
        }
    }
}
\newcommand{\pgfextractangle}[3]{%
    \pgfmathanglebetweenpoints{\pgfpointanchor{#2}{center}}
                              {\pgfpointanchor{#3}{center}}
    \global\let#1\pgfmathresult  
}
\newcommand{\commentarrow}[1][Azure4]{\tikz[baseline=-3pt]{\node[shape border uses incircle, fill=#1,rotate=180,single arrow, inner sep=1pt, minimum size=6pt, single arrow head extend=2pt]{};}}
\tikzset{ax/.style={-latex,line width=2pt}}
\tikzset{camera/.style={fill=Sienna1,fill opacity=0.5},%
image plane/.style={draw=RoyalBlue3,line width=2pt}}
\begin{document}

\title[Article Title]{Safe and Stable Teleoperation of Quadrotor UAVs under Haptic Shared
Autonomy}

\author*[1]{\fnm{Dawei} \sur{Zhang}}\email{dwzhang@bu.edu}

\author[1]{\fnm{Roberto} \sur{Tron}\email{tron@bu.edu}}

\affil[1]{\orgdiv{Mechanical Engineering}, \orgname{Boston University}, \orgaddress{\street{110 Cummington Mall}, \city{Boston}, \postcode{02215}, \state{MA}, \country{USA}}}

\abstract{We present a novel approach that aims to address both safety and stability of a haptic teleoperation system within a framework of Haptic Shared Autonomy (HSA). 
We use Control Barrier Functions (CBFs) to generate the control input that follows the user's input as closely as possible while guaranteeing safety. In the context of stability of the human-in-the-loop system, we limit the force feedback perceived by the user via a small $\cL_2$-gain, which is achieved by limiting the control and the force feedback via a differential constraint. Specifically, with the property of HSA, we propose two pathways to design the control and the force feedback: Sequential Control Force (SCF) and Joint Control Force (JCF). Both designs can achieve safety and stability but with different responses to the user's commands.
We conducted experimental simulations to evaluate and investigate the properties of the designed methods. We also tested the proposed method on a physical quadrotor UAV and a haptic interface.}

\keywords{Teleoperation, Haptic feedback, Haptics}

\maketitle

\section{Introduction}\label{sec1}

Teleoperation serves as a pivotal tool, enabling human operators to engage in activities within dangerous or challenging environments where autonomous control is impractical. This capability is particularly crucial in tasks that demand human intuition and adaptability, such as search-and-rescue operations in intricate and cluttered environments. In the context of UAVs, teleoperation presents a unique set of challenges, primarily stemming from the inherent difficulty in safely and precisely controlling the UAV. This challenge is exacerbated by the restricted field of view, leading to diminished situational awareness for the operator, thereby demanding innovative solutions to enhance control precision and operational safety in UAV teleoperation scenarios.~\citep{mccarley2005human, Brandt2010}. 

Haptic feedback emerges as a promising technique to remedy the aforementioned challenges. A notable paradigm leveraging haptic feedback is Haptic Shared Control (HSC), wherein haptic signals serve as informative cues regarding the robot's behavior and the surrounding environment through force feedback. Taking this approach often reduces dangerous collisions during teleoperation, and has been shown to increase operator situational awareness \citep{Lam2009, Brandt2010, zhang2020haptic}. 

In the framework of HSC, the human operator can either adhere to haptic suggestions or override them as needed. In teleoperation of UAVs, it has been repeatedly shown that, although HSC can improve a user's ability to fly the UAV, instances of human operators unintentionally crashing the vehicles remain a recurrent challenge, hindering overall operational safety. To address this limitation, an alternative paradigm, Haptic Shared Autonomy (HSA), has been recently proposed. HSA guarantees safety by applying autonomous methods as a supervisory controller and utilizing haptic feedback to inform the user about the discrepancy between the original command and the command that is modified by the controller. This approach contributes to enhancing human-robot agreement and user satisfaction \citep{zhang2021haptic}. 
Various methods to develop such supervisory controllers have been explored in the literature \citep{schwarting2017parallel, Xu2018, broad2018learning}. In this paper, we leverage the framework of control barrier functions (CBFs) \citep{Ames2014,Ames2019,zhang2021haptic} to the design of the autonomous controller.

Beyond safety considerations, stability stands as a pivotal element in a teleoperation system, particularly in the presence of haptic feedback. As the controller generates force feedback in response to user commands, and users, in turn, adapt their inputs based on this force feedback, the closed-loop response of the overall system becomes susceptible to instabilities. These instabilities often manifest as uncontrolled and unexpected oscillations in the haptic interface. Our objective is to mitigate the aforementioned stability challenge through the application of a finite-$\cL_2$-gain approach. It is essential to note that this work does not cover other stability traits such as robustness to delays. While some of the previous work has considered this problem, existing approaches are confined to the HSC paradigm, and face limitations in terms of generalizability to diverse settings with varying system dynamics or employing different haptic feedback strategies (refer to Sec.~\ref{sec:related_work} for additional discussion). 

In this paper, our focus lies on addressing both safety and stability aspects in the design of an autonomous controller. This controller acts as an intermediary between the user and the robot, delivering force-based feedback to the user and managing the control delegated to the robot.

\subsection{Related Work}\label{sec:related_work}
HSC is a teleoperation paradigm that aims to help the human user safely navigate the robot through haptic cues while maintaining the user's control authority \citep{zhang2021haptic}. Numerous works within this paradigm concentrate on the algorithmic design of haptic feedback. For example, virtual fixtures (also known as virtual mechanisms \citep{joly1995imposing}) have been widely employed to generate haptic feedback, directing the human operator when commanding the robot to protected areas defined by these fixtures \citep{abbott2007haptic, bowyer2013active,rosenberg1993virtual,payandeh2002application,li2005constrained,turro2001haptically}. Regarding the teleoperation of UAVs, a parametric risk field (PRF) method was proposed by \cite{Lam2009} to generate force feedback based on the risk of collision. The work of \cite{Brandt2010} sets the magnitude of the haptic feedback to be proportional to the time that it would take the UAV to collide with an object in its environment. Our prior work introduced an approach that uses Control Barrier Functions (CBFs) \citep{Ames2014, Ames2019} to guide the human toward the input command that is closest to their current command and deemed to be safe \citep{zhang2020haptic}. These methods under HSC successfully improve the safety and transparency of a teleoperation system; however, a limitation of such approaches is that safety is not guaranteed when the human user overrides the safe guidance.

Instead of providing haptic cues, several studies ensure safety through a Shared Autonomy (SA) paradigm, wherein autonomous controllers can modify the human operator's control command in instances of disagreement between the human and the robot. For instance, \cite{Xu2018} employed CBFs as a supervisory controller that modifies the operator’s control input and guarantees the safe teleoperation of UAVs . Similar SA paradigms include outer-loop stabilization \citep{broad2018learning} and parallel autonomy. For example, Schwarting applied Nonlinear Model Predictive Control (NMPC) to guarantee the safety of human-controlled automated vehicles \citep{schwarting2017parallel}. Taking advantage of both HSC and SA, our prior work proposed a Haptic Shared Autonomy (HSA) paradigm that uses haptic feedback to reflect the inner state of the system when the disagreement between the user and the robot happens \citep{zhang2021haptic}. Although safety has been widely investigated in the literature above, these works lack the consideration of stability, which is another critical aspect of a haptic teleoperation system. 
To our knowledge, only a handful of works consider the stability of the human-robot-environment system, and predominantly within the HSC paradigm.
The stability analysis of haptic teleoperation systems is typically facilitated by properties of the specific system under consideration - an approach that may not generalize when extended such ideas to novel systems. For example, \cite{abbott2003analysis, abbott2006stable} used the eigenvalues of a discrete state-space model to analyze the stability of a forbidden-region virtual fixture (FRVF) system. Input-to-state (ISS) stability with Lyapunov Functions was also applied to prove the stability in the haptic teleoperation of UAVs \citep{rifai2011haptic, Omari2013}. These approaches allow for establishing strong stability guarantees but require a new ad-hoc analysis (design of the Lyapunov function, and proof of convergence) for every new combination of system dynamics and haptic feedback methods, which limits their applications, especially in HSA.

Since passivity provides a sufficient condition for stability, making the system passive provides a convenient approach to maintaining the stability of a teleoperation system \citep{Niemeyer2008}. This principle finds widespread application in bilateral teleoperation systems \citep{ryu2005time,hannaford2002time,kosuge1995tele,selvaggio2019passive,adams1999stable}. Lee et al. introduced the Passive-Set-Position-Modulation (PSPM) method, which modulates the input to enforce system passivity. They applied PSPM to the haptic teleoperation of multiple UAVs, ensuring system passivity over the Internet despite varying delays and packet loss  \citep{lee2010passive, lee2011haptic}. Similarly, wave variables \citep{niemeyer2004,Niemeyer2008,lam2007collision} and Port-Hamiltonian methods \citep{mersha2013bilateral,stramigioli2010novel} have been proposed to address issues arising from system delays. However, these approaches mostly concentrate on instabilities associated with delays and often overlook instability induced by human responses within the closed loop.  Furthermore, many of these methods make specific assumptions about the modeled environment and user behavior, constraining their adaptability to settings with diverse system dynamics and haptic feedback. Our recent work proposed a finite-$\cL_2$-gain approach that guarantees the stability in human-in-the-loop haptic teleoperation using a less restrictive differential constraint as compared with traditional passivity methods \citep{zhang2021stable}. This work was built upon a HSC paradigm without considering the properties of the system when shared autonomy \citep{zhang2021haptic} is integrated. Therefore, in this paper, we investigate the stability of a human-robot-environment haptic teleoperation system for quadrotor UAVs under a HSA framework, wherein safety is concurrently guaranteed.

\tikzset{system/.style={very thick, draw=DodgerBlue3}}

\begin{figure}
    \centering
    \includegraphics{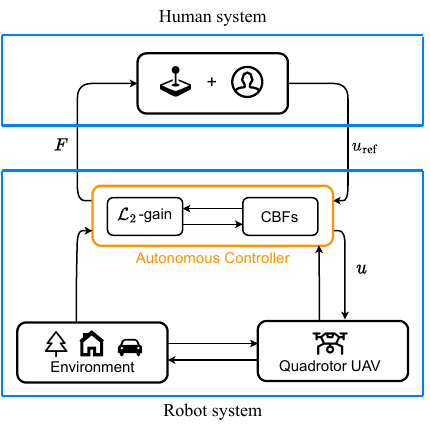}
    \caption{\small An architecture of the proposed haptic teleoperation system.}
    \label{fig:architecture}
\vspace{-10pt}
\end{figure}

\subsection{System modeling}
In this study, we consider a haptic teleoperation system in HSA whose architecture is shown in Fig.~\ref{fig:architecture}. In this system, the robot is navigated by the human operator to interact with the remote environment.
The haptic device can display force-based haptic feedback and capture the user's desired command from the controlled joystick. The desired command is passed to the robot system as a reference control signal $u_{\textrm{ref}}$. In turn, the autonomous controller (e.g., defined via $\cL_2$-gain and CBFs in this paper) provides a force $F$ to the user, which creates a feedback loop, as shown in Fig.~\ref{fig:feedback_connect}.

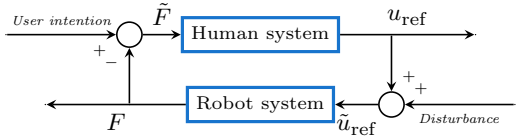
\begin{figure}[ht]
\centering
\begin{tikzpicture}[
point/.style={circle,inner sep=0pt,minimum size=0.5pt,fill=black},
point_to/.style={->,>=stealth,semithick},
squarednode/.style={rectangle, draw=black, fill=black!0, semithick, minimum size=2mm},
circlenode/.style={circle, draw=black, fill=black!0, semithick, minimum size=2mm},]
\matrix[row sep=-2mm,column sep=5mm] {
\node[point](p1){};&&\node[circlenode] (c1) [label= -175:\tiny $+$,label= -95:\tiny $-$]{}; & \node[squarednode,system] (Human) {\footnotesize Human system}; & \node[point](p2){};& \node[point](p3){};& & \\
& \node[point](p4){};&\node[point](p5){}; & \node[squarednode,system] (Quad) {\footnotesize Robot system}; & \node[circlenode] (c2)[label=5:\tiny $+$,label=85:\tiny $+$]{};& & \node[point](p6){}; & \\
};
\draw[point_to] (p1) to node[auto] {\tiny \textit{User intention}} (c1);
\draw[point_to] (c1) to node[auto] {\small $\tilde{F}$} (Human);
\draw[point_to] (Human) to node[auto] {\small $u_{\textrm{ref}}$} (p3);
\draw[point_to] (p6) to node[auto] {\tiny \textit{Disturbance}} (c2);
\draw[point_to] (c2) to node[auto] {\small $\tilde{u}_{\text{ref}}$} (Quad);
\draw[point_to] (Quad) to node[auto] {\small $F$} (p4);
\draw[point_to] (p5) to (c1);
\draw[point_to] (p2) to (c2);
\end{tikzpicture}

\caption{\small Feedback connection of the haptic teleoperation architecture. Note that the Human system and the Robot system are the same with the two systems highlighted by the blue boxes in Fig.~\ref{fig:architecture}.}
\label{fig:feedback_connect}
\vspace{-5pt}
\end{figure}
From a modeling perspective, we lump together the user and the physical haptic device as a single \emph{Human system}; this is because, from the point view of the supervisory controller, the output and input of the Human system are force feedback $F$ and the command $u_{\text{ref}}$, respectively, and it is not possible to separate the effects of each subsystem on their serial composition; this holds also for the stability analysis that we describe in this paper. 
For similar reasons, we lump together the robot and its response to the environment. Nonetheless, there are aspects of teleoperation that cannot be captured with mathematical models in practice, such as the intentions of the user and disturbances due to imperfect knowledge of the environment. In Fig.~\ref{fig:feedback_connect}, these are represented as \emph{User intention} and \emph{Disturbance} that enter between the human and robot connections, and have the potential to cause unstable behaviors of the overall closed-loop system. Then, the force feedback that goes into the human system is represented by $\tilde{F}$, and the reference control to the robot system becomes $\tilde{u}_{\text{ref}}$.

\subsection {Contributions}

To concurrently attain safety and stability, we establish two primary objectives in the design of both the force $F$ and the control $u$:
\begin{lenumerate}{G}
    \item\label{it:limit control} Generating a control $u$ that follows the user's input ``closely" (i.e., matches the user's intention) while guaranteeing safety; this is achieved through CBFs.
    \item\label{it:limit force} Limiting the force $F$ perceived by the user to achieve stability of the human-in-the-loop system; this is achieved by bounding the $\cL_2$-gain of the autonomous system via a differential constraint similar to~\citep{zhang2021stable}.
\end{lenumerate}

Our previous work only achieved goal~\ref{it:limit force} under the HSC paradigm, where $u_{\textrm{ref}}$ would be directly passed to the robot. 
This paper focuses on the stability under HSA, where safety is guaranteed, to achieve both Goal~\ref{it:limit control} and Goal~\ref{it:limit force}; as summarized in Table \ref{table_summary}, there is no previous work (including from the authors) which considers Goal~\ref{it:limit control}, Goal~\ref{it:limit force}, and HSA at the same time.

\begin{table}[t]
\centering
\caption {Organization of the previous work with respect to the present paper. Note that the two columns correspond to not considering (left) or considering (right) Goal~\ref{it:limit force}.}
\begin{tabular}{ |p{5mm}|p{2.8cm}|p{2.8cm}| }
\hline
  & Stability not guaranteed & Stability guaranteed \\ 
 \hline
 HSC & \cite{Lam2009, Brandt2010,zhang2020haptic,abbott2007haptic,payandeh2002application,li2005constrained}& \cite{lee2011haptic, abbott2006stable, rifai2011haptic, zhang2021stable} \\ 
 \hline
 HSA & \cite{zhang2021haptic} & This paper  \\  
 \hline 
\end{tabular}
\label{table_summary}
\end{table}
Regarding HSA, the human's control command can be altered by the autonomous controller to guarantee safety, therefore, we have two pathways to limit $u$ and $F$:
\begin{description}
    \item[Sequential Control Force (SCF):] Compute $u$ first, and then compute $F$ based on $u$;
    \item[Joint Control Force (JCF):] Compute $u$ and $F$ in the same optimization problem, so that $u$ can be chosen in a way that also limits $F$.
\end{description} 

The key contributions of this paper are listed as follows:
\begin{itemize}
    
    \item We propose two designs (SCF and JCF) that consider the properties of HSA; both designs can achieve the safety goal~\ref{it:limit control} and the stability goal~\ref{it:limit force}.
    \item We describe the differences in the responses to the user commands so that a particular design strategy can be picked according to the specific application and user preference. 
    \item In contrast to conventional approaches, our method offers flexibility in application to various nonlinear systems. This adaptability is realized by computing the necessary Lie derivatives for the dynamics and Control Barrier Functions.
    \item For each designed method, we provide the quadratic programming formulation or closed-form solutions to make the optimization problem solvable in real-time. This capability is crucial for robotic teleoperation scenarios, ensuring the practicality and efficiency of our proposed methods.
\end{itemize}

\section{Preliminaries}\label{sec:pre}
In this section, we comprehensively review and introduce key concepts pivotal for the subsequent sections of this paper.
\subsection{Control Barrier Functions}
\subsubsection{State Space Model}
Consider a dynamical system represented by the state space model
\begin{equation}\label{state model}
  \begin{aligned}
    \dot{x}&=f(x)+g(x)u
\end{aligned}
\end{equation}
where $x\in\real{n}$ is the state of the system, $u\in\real{d}$ represents the vector of control inputs , and $f: \real{n} \rightarrow \real{n}$, $g:\real{n}\rightarrow\real{n}\times\real{d}$ are locally Lipschitz.

In this paper, we specialize \eqref{state model} to the dynamics of a quadrotor. We assume that the quadrotor flies at relatively low speeds without aggressive maneuvers (which are exceedingly uncommon in a teleoperation setting) so that the roll and pitch angles of the quadrotor will remain small, and the yaw angles can be controlled independently to maintain the first-person view. Under such conditions, the dynamics of the UAV can be approximated by a double integrator, where the control input $u\in\real{d}$ corresponds to the acceleration command of the UAV. Additionally, the boundaries of the input device and the saturation limits of the actuators can be neglected.
Let $x=\smallbmat{x_p\\x_v}\in\real{2d}$ be the state of the quadrotor, where $x_p\in\real{d}$ represents its position and $x_v = \dot{x}_p\in\real{d}$ its velocity. The dynamics of the system can be written as:
\begin{equation}\label{eq:double integrator}
\begin{array}{l}
    \bmat{\dot{x}_p\\\dot{x}_v} = \bmat{0&I\\0&0}\bmat{x_p\\x_v} + \bmat{0\\I}u,
\end{array}
\end{equation}
where $I$ is an identity matrix of appropriate dimensions.

\subsubsection{Lie derivatives}
We denote the Lie derivative of a continuously differentiable function $h(x)$ along a vector field $f(x)$ as $L_{f} h(x)\defeq\frac{\partial h(x(t))}{\partial x(t)} f(x)$.
We denote with $L_{f}^{b} h(x)$ a Lie derivative of order $b$. The function $h$ has relative degree two with respect to the dynamics \eqref{state model} if $L_gh=0$, and $L_gL_fh$ is a non-singular matrix. In this case, we have $\ddot{h}=L_{f}^{2} h(x)+L_{g} L_{f} h(x)u$.
\subsubsection{Safety Set}
We define the safe set $\cH$ as the zero superlevel set of a continuously differentiable function $h(x)$:
\begin{equation}\label{eqn:safety_set}
\cH: = \left\{ x \in\real{n} : h (x)\geq 0 \right\}.
\end{equation}
For any initial condition $x_0 \in \cH$, there exists a maximum interval of existence $I(x_0) = [0, \tau_{max})$ such that $x(t)$ is the unique solution to system \eqref{state model} on $I(x_0)$.
\begin{definition}\label{def:forward invariance}
The set $\cH$ is forward invariant for system \eqref{state model} if for every $x_0 \in \cH$, $x(t) \in \cH$ for $x(0)=x_0$ and $\forall t \in I(x_0)$.
\end{definition}

\subsubsection{CBFs for Second Order Systems}

The goal of CBFs is to produce a control field $u$ that makes a \emph{safe set} $\cH\subset\real{n}$ forward invariant \citep{Ames2019}. In our problem settings, we define $h(x)$ as the Euclidean distance between the robot and the obstacles and utilize CBFs to avoid collisions. Let $h(x)$ be a twice differentiable function representing $\cH$, i.e. $h(x)>0$ on the interior of $\cH$, $h(x)=0$ on its boundary, and $h(x)<0$ otherwise. Assuming that $h(x)$ has relative degree two, we can use a second-order exponential control barrier function \citep{Nguyen2016} to impose constraints on $u$ that ensure safety (i.e., forward invariance of $\cH$):
\begin{equation}\label{eq:HOCBF}
 L_{f}^{2} h(x)+L_{g} L_{f} h(x) u
+K\bmat{h(x)& L_{f}h(x)}^T \geq 0,
\end{equation}
where $K\in\real{1\times 2}$ is a set of coefficients representing a Hurwitz polynomial. Given the system dynamics in~\eqref{eq:double integrator}, we can rewrite the constraint in \eqref{eq:HOCBF} as:
\begin{equation}\label{eq:CBF_constraint}
\tag{CBF}
   x\transpose_v\partial^2_{x_p} h x_v+ (\partial_{x_p} h)\transpose u+ k_1h + k_2(\partial_{x_p} h)\transpose x_v\geq 0.
\end{equation}

\subsection{$\cL_2$-gain}\label{subsec:l2_gain}
\begin{definition}\label{def:L two}
A map $\cC:u(t)\to y(t)$ between two signals has $\cL_2$-gain $\gamma\geq 0$ if there exists a constant $\beta\in\real{}$ such that
\begin{equation}\label{eq:L2 gain}
\norm{y}_2\leq \gamma \norm{u}_2+\beta.
\end{equation}
\end{definition}
Note that the map $\cC$ could be static (i.e., a simple function) or, more commonly, realized through a dynamical system. The importance of this concept is given by the \emph{small gain theorem} (reproduced below with a specialization to our settings that are described in Fig.~\ref{fig:feedback_connect}).
\begin{theorem}[Theorem 5.6, page 218,~\citep{khalil2002nonlinear}]Assume that both Human system and Robot system are finite-gain $\cL_2$ stable with $\cL_2$ gains of $\gamma_1$ and $\gamma_2$: $\norm{u_{\text{ref}}}_2 \leq \gamma_1\norm{\tilde{F}}_2 + \beta_1$ and $\norm{F}_2 \leq \gamma_2\norm{\tilde{u}_{\textrm{ref}}}_2 + \beta_2$. If $\gamma_1 \gamma_2<1$, then the feedback connection is finite-gain $\cL_2$ stable from the inputs $(\textit{User intention}, \textit{Disturbance})$ to the outputs $(\tilde{F}, \tilde{u}_{\textrm{ref}})$.
\end{theorem}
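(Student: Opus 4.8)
The plan is to follow the standard proof of the small-gain theorem, adapted to the specific sign conventions of the feedback connection in Fig.~\ref{fig:feedback_connect}. First I would read off the interconnection equations from the diagram: writing $e_1$ for the \emph{User intention} and $e_2$ for the \emph{Disturbance}, the summing junctions give $\tilde{F} = e_1 - F$ and $\tilde{u}_{\text{ref}} = u_{\text{ref}} + e_2$. Because finite-gain $\cL_2$ stability concerns signals that need not be a priori square-integrable, I would work in the extended space $\cL_{2e}$ and carry out all estimates on the truncations $(\cdot)_\tau$ over $[0,\tau]$; the hypotheses on the Human and Robot systems then hold for every truncation, and causality guarantees that the truncated bounds are meaningful.

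The core is a self-referential estimate obtained by chaining the two gain bounds through the interconnection. Applying the triangle inequality at the summing junctions gives $\norm{\tilde{F}_\tau}_2 \leq \norm{e_{1,\tau}}_2 + \norm{F_\tau}_2$ and $\norm{\tilde{u}_{\text{ref},\tau}}_2 \leq \norm{u_{\text{ref},\tau}}_2 + \norm{e_{2,\tau}}_2$. Substituting the Robot bound for $\norm{F_\tau}_2$, then the interconnection bound for $\norm{\tilde{u}_{\text{ref},\tau}}_2$, and finally the Human bound for $\norm{u_{\text{ref},\tau}}_2$, I would arrive at
\begin{equation*}
(1 - \gamma_1\gamma_2)\norm{\tilde{F}_\tau}_2 \leq \norm{e_{1,\tau}}_2 + \gamma_2\norm{e_{2,\tau}}_2 + \gamma_2\beta_1 + \beta_2.
\end{equation*}
The hypothesis $\gamma_1\gamma_2 < 1$ makes the leading coefficient strictly positive, so dividing through yields a bound on $\norm{\tilde{F}_\tau}_2$ that is affine in $\norm{e_{1,\tau}}_2$ and $\norm{e_{2,\tau}}_2$ with finite gain $1/(1-\gamma_1\gamma_2)$. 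A symmetric chain, started instead from $\tilde{u}_{\text{ref}}$, produces the analogous bound on $\norm{\tilde{u}_{\text{ref},\tau}}_2$, and combining the two gives a single finite-gain estimate from $(e_1,e_2)$ to $(\tilde{F},\tilde{u}_{\text{ref}})$ in the product norm.

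Finally, since every constant appearing in these bounds is independent of $\tau$, I would let $\tau\to\infty$: when the inputs $e_1,e_2$ lie in $\cL_2$, the monotone limit of the truncated norms shows the outputs lie in $\cL_2$ and satisfy the same inequality, which is exactly the definition of finite-gain $\cL_2$ stability of the feedback connection. The main obstacle is not the algebra, which is routine, but the bookkeeping around causality and the extended space: one must justify that the per-system gain inequalities transfer to every truncation and that the final $\tau\to\infty$ passage is legitimate, so that the conclusion is a genuine input-output stability statement and not merely an a priori estimate. A secondary point requiring care is verifying the signs at the two summing junctions against Fig.~\ref{fig:feedback_connect}, since an error there would alter the interconnection equations and hence the constants, though not the overall structure of the argument.
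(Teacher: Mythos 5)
Your proposal is correct, and it coincides with the proof behind the paper's statement: the paper does not prove this result itself but quotes it as Theorem 5.6 of \cite{khalil2002nonlinear}, and your argument --- truncated estimates in $\cL_{2e}$, chaining the two gain bounds through the summing junctions to get the $(1-\gamma_1\gamma_2)$ estimate, then letting $\tau\to\infty$ --- is exactly the standard small-gain proof given in that reference, with the sign conventions correctly read off Fig.~\ref{fig:feedback_connect}. The only point worth making explicit (it is a hypothesis in Khalil's statement, implicit here) is well-posedness of the feedback loop, i.e.\ that for every input pair the interconnection signals exist and are unique in the extended space, which is what turns your estimate from an a priori bound into a stability statement.
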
 

In this study, we assume that  the human's reactions to the force feedback correspond to a map with a finite $\cL_2$ gain.

\section{Methods}\label{sec:methods}
In this section, we first introduce an approach that limits the force feedback $F$ using finite-$\cL_2$-gain in an energy-based formulation via a linear inequality involving the time derivatives of $u$ and $F$. Then, we discuss the application of the CBF framework to our problem setting and introduce two novel design approaches for $u$ and $F$: the Sequential Control-Force (SCF) design and the Joint Control-Force (JCF) design. Finally, we briefly discuss the feasibility consideration of the proposed optimization problem.

\subsection{Limiting $F$ via Finite-$\cL_2$-gain}

In this section, we discuss an energy-based formulation that ensures the finite-gain $\cL_2$ stability from the users' desired velocity $x_{vd}$ to the force feedback $F$ that is perceived by the user. We also introduce the design of the storage function and the energy tank that are used to derive the differential constraint for the $\cL_2$ stability.
\subsubsection{Energy design}
We design the storage function to represent the mechanical energy stored in the system, therefore, we identify the function $V(x)$ as :
\begin{equation}\label{eq:storage function}
    V(x)=\frac{k_v}{2}\norm{Bx}^2=\frac{k_v}{2}x_v\transpose x_v,
\end{equation}
where $k_v$ is a positive constant parameter.
\subsubsection{Energy tank}
To make the constraint less restrictive, we introduce an energy tank $E$ that is used to store energy when the reference force naturally satisfies the stability constraint and releases energy when the reference force violates that same constraint. Formally, we view $E$ as another state in the system, with dynamics
\begin{equation}
  \label{eq:tank dynamics}
  \dot{E}=\varepsilon,
\end{equation}
where $\epsilon \in \real{}$ is an additional control we design for the energy flow of the tank.
\subsubsection{$\cL_2$ constraint on the force}
As introduced in \citep{zhang2021stable}, we use a differential constraint that implies a finite $\cL_2$ gain between $F$ and $x_{vd}$:
\begin{subequations}
    \begin{align}
        \frac{k}{2}\norm{F}^2+\varepsilon= \frac{1}{2k} \norm{x_{vd}}^2 - \dot{V}(u,x),&\label{eq:finite gain tank}\\
    \varepsilon\geq - \frac{E}{\Delta_t}.& \label{eq:Edot-nonnegative}
    \end{align}
\end{subequations}

\begin{proposition}\label{prop_1}
If conditions~\eqref{eq:finite gain tank} and~\eqref{eq:Edot-nonnegative} are satisfied for every $t\geq 0$, then the signals $F$ and $x_{vd}$ satisfy the $\cL_2$-gain definition~\ref{def:L two} with $\gamma = \frac{1}{k^2}$ and $\beta= \frac{2}{k} V(0) + E(0)$, where $V(0)$ is the initial energy of the system, and $E(0)$ is the initial energy of the energy tank.
\end{proposition}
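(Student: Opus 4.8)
The plan is to treat this as a dissipativity argument in which the tank energy $E$ is folded into the storage function, so that the two pointwise constraints integrate into a finite-$\cL_2$-gain bound. First I would combine the energy balance~\eqref{eq:finite gain tank} with the tank dynamics~\eqref{eq:tank dynamics}: substituting $\varepsilon=\dot{E}$ into~\eqref{eq:finite gain tank} and regrouping, the balance becomes $\dot{V}+\dot{E} = \frac{1}{2k}\norm{x_{vd}}^2 - \frac{k}{2}\norm{F}^2$. Defining the augmented storage $W \defeq V + E$, this is the exact dissipation relation
\[ \dot{W} = \frac{1}{2k}\norm{x_{vd}}^2 - \frac{k}{2}\norm{F}^2 . \]
The point of introducing the tank is precisely that this holds with \emph{equality}: the force constraint alone would only yield the corresponding inequality, and the tank absorbs the slack while keeping the supply rate exact.

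The second step is to certify that $W$ is a legitimate (non-negative) storage function, which is what makes the integrated relation usable. By~\eqref{eq:storage function} we have $V=\frac{k_v}{2}\norm{x_v}^2\ge 0$ automatically, so it remains to show the tank never empties, i.e. $E(t)\ge 0$ for all $t$. This is exactly the role of~\eqref{eq:Edot-nonnegative}: reading it as the differential inequality $\dot{E}\ge -E/\Delta_t$ and multiplying by the integrating factor $e^{t/\Delta_t}$ gives $\frac{\de}{\de t}\bigl(E\,e^{t/\Delta_t}\bigr)\ge 0$, hence $E(t)\ge E(0)\,e^{-t/\Delta_t}\ge 0$ whenever $E(0)\ge 0$. (In a discrete-time implementation the same constraint gives $E_{n+1}\ge 0$ by direct recursion.) Therefore $W(t)\ge 0$ for every $t\ge 0$.

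With non-negativity in hand, I would integrate the dissipation relation over $[0,T]$:
\[ W(T)-W(0) = \frac{1}{2k}\int_0^T \norm{x_{vd}}^2\,\de t - \frac{k}{2}\int_0^T \norm{F}^2\,\de t . \]
Discarding $W(T)\ge 0$ and inserting $W(0)=V(0)+E(0)$ gives $\frac{k}{2}\norm{F}_2^2 \le \frac{1}{2k}\norm{x_{vd}}_2^2 + V(0)+E(0)$ for every $T$, that is
\[ \norm{F}_2^2 \le \frac{1}{k^2}\norm{x_{vd}}_2^2 + \frac{2}{k}\bigl(V(0)+E(0)\bigr) . \]
Comparing against Definition~\ref{def:L two}, in its squared form (which is what produces the stated $\gamma=1/k^2$ rather than the $1/k$ one would read off the linear form after taking square roots), identifies the gain $\gamma=1/k^2$ and the additive bias $\beta$ in terms of the initial mechanical and tank energies; reconciling the exact numerical coefficient of the $E(0)$ term with the stated $\beta=\frac{2}{k}V(0)+E(0)$ is the final bookkeeping step.

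I expect the only non-routine step to be the tank non-negativity. The constraints~\eqref{eq:finite gain tank}--\eqref{eq:Edot-nonnegative} are imposed pointwise in time, and one must argue that the pointwise lower bound~\eqref{eq:Edot-nonnegative} propagates into the global invariant $E(t)\ge 0$ (via the integrating-factor argument above), so that $W(T)\ge 0$ may legitimately be dropped uniformly in $T$. Everything else — the algebraic regrouping into an exact dissipation relation and the integration — is mechanical.
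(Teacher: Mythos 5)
Your proof is correct and follows essentially the same route as the paper's: integrate the differential constraint \eqref{eq:finite gain tank}, use non-negativity of the stored energies to discard terms, and compare the resulting squared-norm inequality with Definition~\ref{def:L two}; your dissipativity packaging with the augmented storage $W = V + E$ is a cleaner presentation of the same computation. You are in fact more careful than the paper in two places. First, the paper merely asserts that \eqref{eq:Edot-nonnegative} implies $E(t)\geq 0$; your integrating-factor argument $E(t)\geq E(0)e^{-t/\Delta_t}\geq 0$ (valid provided the tank is initialized with $E(0)\geq 0$, which is implicit) supplies the missing justification. Second, your bookkeeping gives $\beta=\frac{2}{k}\bigl(V(0)+E(0)\bigr)$, i.e., the $E(0)$ term also carries the factor $\frac{2}{k}$, and this is the consistent constant: the paper's displayed integration multiplies \eqref{eq:finite gain tank} by $\frac{2}{k}$ on the $\norm{F}^2$, $\norm{x_{vd}}^2$, and $\dot V$ terms but silently drops that factor on $\int\dot E\,\de\tau$, which is how it arrives at the stated $\beta=\frac{2}{k}V(0)+E(0)$; when $k<2$ that stated constant is actually too small for the inequality to hold as written, although the discrepancy is immaterial to the qualitative finite-gain conclusion since any finite bias suffices. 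Finally, you and the paper share the same looseness in reading Definition~\ref{def:L two} in squared form: strictly, taking square roots yields gain $\frac{1}{k}$ and bias $\sqrt{\beta}$, so $\gamma=\frac{1}{k^2}$ must be understood as the gain between squared $\cL_2$ norms --- a point you at least flag explicitly, which the paper does not.
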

\begin{proof}
Taking the integral of both sides of constraint~\eqref{eq:finite gain tank} and using~\eqref{eq:Edot-nonnegative}, we can obtain
\begin{multline}
    \int_0^ t \norm{F(\tau)}^2 \de \tau + \int_0^ t \dot{E} \de \tau = \int_0^ t \norm{F(\tau)}^2 \de \tau \\
    + E(t)-E(0)
    \leq  \frac{1}{k^2}\int_0^t\norm{x_{vd}(\tau)}^2\de \tau+\frac{2}{k} V(0).
\end{multline}

Additionally, the constraint~\eqref{eq:Edot-nonnegative} implies that $E(t) \geq0$, which implies
\begin{multline}
    \int_0^ t \norm{F(\tau)}^2 \de \tau
    \leq \frac{1}{k^2}\int_0^t\norm{x_{vd}(\tau)}^2\de t \\+ \frac{2}{k} V(0) + E(0),
\end{multline}
from which the claim follows by comparing with Definition~\ref{def:L two}.
\end{proof}

Conditions~\eqref{eq:finite gain tank} and~\eqref{eq:Edot-nonnegative} can be equivalently written as:
\begin{equation}
\norm{F}^2 \leq \frac{2}{k}( \frac{E}{\Delta_t} +\frac{1}{2k} \norm{x_{vd}}^2 - \dot{V}(u,x)).
\end{equation}
Note that this is a quadratic and convex constraint, but is also singular with respect to $u$ (since there are no quadratic terms in $u$).
By substituting the storage function in \eqref{eq:storage function}, we can rewrite the constraint as:
\begin{equation}\label{eq:l2_force}
\tag{$\cL_2$-output}
  \norm{F}^2 \leq \frac{1}{k^2}( \frac{2kE}{\Delta_t} +x_{vd}\transpose x_{vd}) - \frac{2k_v}{k} x_v\transpose u.
\end{equation}

To ensure feasible solutions for the force $F$, we need to constrain the RHS of \eqref{eq:l2_force} to be non-negative, which leads to following constraint on $u$:
\begin{equation}\label{eq:kF feasibility constraint}
\tag{$\cL_2$-feasibility}
    \frac{2kE}{\Delta_t}+x_{vd}\transpose x_{vd} -2kk_vx_v\transpose u\geq 0,
\end{equation}
from which we can notice that a greater value of $k_v$ gives stricter limits on $u$. Note that it is always feasible when $u=0$ or when $u$ has an opposite direction with $x_v$ (i.e. $x_v\transpose u \geq 0$). 


\subsection{Reference controller}
We define a reference proportional controller as
\begin{equation}\label{eq:uref}
    u_{\textrm{ref}}= \frac{1}{\Delta_t}(x_{vd}-x_v),
\end{equation}
where $x_{vd}$ is the desired velocity set by the user, $x_v$ is the current velocity of the robot,  and $\Delta_t$ is a time constant representing how long $u_{\textrm{ref}}$ will be applied to the robot (i.e., $x_v$ will become $x_{vd}$ after $\Delta_t$, i.e., in a single step).

\subsection{Sequential Control-Force Design}
In this methodology, we systematically calculate the control signal $u$ followed by the sequential computation of the force $F$. During the computation of $u$, we account for both the \eqref{eq:CBF_constraint} constraint and the \eqref{eq:kF feasibility constraint} constraint. Conversely, in the computation of $F$, our considerations are solely directed towards the \eqref{eq:l2_force} constraint.

\subsubsection{Control design}
We use a CBF-QP formulation~\citep{Ames2019} to find the input $u$ that is closest to the reference $u_{\textrm{ref}}$ while satisfying the safety and feasibility constraints. Again, we note that while the force $F$ does not appear explicitly in this optimization problem, its feasibility is captured in the constraint of \eqref{eq:kF feasibility constraint}.
\begin{equation}\label{eq:CBF-Feasibility}
\begin{aligned}
u_{\text{scf}} = &\underset{u \in \mathbb{R}^{d}}{\argmin}{\frac{1}{2}\norm{u -u_{\text{ref}}}^{2}}\\
\subjectto & \eqref{eq:CBF_constraint}, \\
&\eqref{eq:kF feasibility constraint}.
\end{aligned}
\end{equation}

\subsubsection{Force design}
We design a reference force  $F_{\textrm{ref}}$ that depends on the discrepancy between $u_{\text{ref}}$ and the returned $u_{\text{scf}}$ as done in \citep{zhang2020haptic}:
\begin{equation}
    F_{\text{ref}}= u_{\textrm{scf}} -u_{\text{ref}}.
\end{equation}
Then we formulate a new force synthesis problem that returns a force that is as close as possible to the reference force $F_{\text{ref}}$ while also satisfying the $\cL_2$-gain constraint of \eqref{eq:l2_force}.
\begin{equation}\label{eq:optimal F with tank}
    \begin{aligned}
    F_{\text{scf}} = &\min_{F \in \mathbb{R}^{d}} \frac{1}{2}\norm{F-F_{\text{ref}}}^2\\
      \subjectto& \eqref{eq:l2_force}.
    \end{aligned}
\end{equation}

\subsection{Joint Control-Force Design}

In this section, we design an effective approach that solves the control input $u$ and the force feedback $F$ jointly through the following synthesis problem:
\begin{equation}\label{eq:JCF}
\begin{aligned}
u_{\text{jcf}}, F_{\text{jcf}}=&\argmin_{u,F \in \mathbb{R}^{d}} w_{\text{cbf}}\norm{u -u_{\text{ref}}}^{2}\\ &\hspace{5ex}+w_{\cL_2}\norm{F-(u-u_{\text{ref}})}^2\\
    \subjectto &\eqref{eq:CBF_constraint},\\
      &\eqref{eq:l2_force},
\end{aligned}
\end{equation}
where $w_{\text{cbf}}$ and $w_{\cL_2}$ are constant parameters to adjust the weights of $u$ and $F$ in the optimization. 

This optimization problem can be solved as a Second Order Cone Program (SOCP). However, SOCP solvers are typically much slower than QP solvers, thus limiting their application for control in real time. Moreover, we mostly focus on environments in $\real{2}$, where at most one or two obstacles are active at a time. For these reasons, we use a method to solve all cases that will happen under this formulation. We consider six cases that are listed as follows, and then pick the one that is both feasible and gives the minimum cost. For most of the cases, we provide a closed-form solution (albeit possibly requiring finding roots of a polynomial).
Informally, we consider a constraint active if removing it would change the solution for the specific problem instance. In general cases, this means that the constraint evaluates to zero at the solution. Technically, it means that it has a non-zero Lagrange multiplier \citep{Bertsekas99}.
\begin{lenumerate}{C}
    \item \label{C1} \textit{No constraint is active.} This becomes an unconstrained problem. By inspection, we get that $u=u_{\textrm{ref}}$ and $F=0$ give a cost of zero, which gives the optimal solution.
    
    \item \label{C2} \textit{The quadratic constraint is not active.} Since there is no active constraint on $F$, we can always select $F =u-u_{\textrm{ref}}$, so that the second term in the cost is always zero.
    
    The linear constraint can be obtained by replacing the inequality in~\eqref{eq:CBF_constraint} with equality. If there is only one obstacle, we can obtain a closed-form solution through the method of Lagrange multipliers in which we find the minima of the objective function in \eqref{eq:JCF} under the single equality constraint. We provide the details of solving this problem in Appendix \ref{appendix_1}. If there are multiple obstacles, the problem becomes a Quadratic Program, which can still be solved efficiently online with a CBF-QP formulation \citep{Ames2019}.

    \item \label{C3} \textit{Only the quadratic constraint is active.} The quadratic constraint can be obtained by replacing the inequality in \eqref{eq:l2_force} with equality.
    
    Similar to the previous one with only one linear constraint, we can use the method of Lagrange multipliers which, however, results in finding the roots of a third-order polynomial. When multiple solutions are feasible, we pick the one that gives the minimum cost. In Appendix~\ref{appendix_2}, we provide the details, including the calculation of $u$ and $F$ from the roots of the polynomial.

    \item \label{C4} \textit{The quadratic constraint and only one linear constraint are active.}
    At a high level, we first perform an algebraic transformation to remove the linear constraint, so that we obtain a problem of the same form as the previous case. In particular, we use the decomposition $u=U_\bot u_{\bot}+\partial_{x_p} h u_{\parallel}$, where $U_\bot\in \real{d-1}$ is an orthonormal basis for the orthogonal complement of $\partial_{x_p} h$ in $\real{d}$ (i.e., $U_\bot\transpose \partial_{x_p}h=0$ and $U_\bot\transpose U_\bot=I_{d-1}$). $U_\bot$ is computed using a Singular Value Decomposition. $u_{\parallel}\in\real{}$ and $U_\bot\in \real{d-1}$ represent the new coordinates for $u$.

Substituting the decomposition in \eqref{eq:CBF_constraint} with equality, we obtain
\begin{multline}
   x\transpose_v\partial^2_{x_p}h x_v + (\partial_{x_p} h)\transpose \partial_{x_p}h u_{\parallel}+ k_1h \\+ k_2(\partial_{x_p} h)\transpose x_v = 0,
\end{multline}
from which we can solve for $u_{\parallel}$ using the method of Lagrange multipliers, which is similar to the calculations in Appendix \ref{appendix_1}.

Then the optimization problem becomes
\begin{equation}\label{eq:QCQP reduced}
\begin{aligned}
\min_{u'\in\real{}, F\in\real{d} } &  w_{\text{cbf}}\norm{U_{\bot} u' + \partial_{x_p}h u_\parallel-u_{\text{ref}}}^2\\+ &w_{\cL_2}\norm{F-(U_{\bot} u' + \partial_{x_p}h u_\parallel-u_{\text{ref}})}^2\\
\subjectto&  \norm{F}^2 = \frac{1}{k^2}( \frac{2kE}{\Delta_t} +x_{vd}\transpose x_{vd}\\ &-2kk_vx_v\transpose (\partial_{x_p}h u_\parallel +U_\bot u')).
    \end{aligned}
\end{equation}

Given the properties of $U_\bot$, for any vector $v$, we can write $\norm{v}^2=\norm{U_\bot\transpose v}^2+\frac{1}{\norm{\partial_{x_p}h}^2}\norm{\partial_{x_p} h\transpose v}^2$. Applying this to \eqref{eq:QCQP reduced} and removing constant terms, we get

\begin{equation}
\begin{aligned}
\min_{u'\in\real{}, F\in\real{d}}& w_{\text{cbf}}\norm{ u' + U_{\bot}\transpose(\partial_{x_p}h u_\parallel-u_{\textrm{ref}})}^2\\+w_{\cL_2}\norm{U_{\bot}&\transpose F -\bigl( u' + U_{\bot}(\partial_{x_p}h u_\parallel-u_{\text{ref}})\bigr)}^2\\ +w_{\cL_2}&\frac{1}{\norm{\partial_{x_p}h}^2}\norm{\partial_{x_p} h\transpose F\\ &- \partial_{x_p}h\transpose \bigl(\partial_{x_p}h u_\parallel-u_{\text{ref}}) }^2\\
\subjectto & \norm{U_{\bot}\transpose F}^2+\frac{1}{\norm{\partial_{x_p}h}^2}\norm{\partial_{x_p} h\transpose F}^2 \\= &\frac{1}{k^2}( \frac{2kE}{\Delta_t} +x_{vd}\transpose x_{vd} \\ &-2kk_vx_v\transpose (\partial_{x_p}h u_\parallel +U_\bot u')).
    \end{aligned}
\end{equation}

We introduce the variables $u'_{\text{ref}}=U_\bot\transpose u_{\text{ref}} \in \real{d-1}$, $u''_{\text{ref}}=\frac{\partial_{x_p}h \transpose}{\norm{\partial{x_p}h}} u_{\textrm{ref}}\in \real{}$, $F'=U_\bot\transpose F\in\real{d-1}$, and $F''=\frac{\partial_{x_p}h \transpose}{\norm{\partial{x_p}h}} F\in\real{}$, such that $u_{\text{ref}}=U_\bot u'_{\text{ref}}+\frac{\partial_{x_p}h }{\norm{\partial{x_p}h}} u''_{\text{ref}}$, $F=U_\bot F'+\frac{\partial_{x_p}h}{\norm{\partial{x_p}h}}F''$. Then the optimization problem can be formulated as

\begin{equation}
\begin{aligned}
\min_{u',F',F''} &w_{\text{cbf}}\norm{ u' - u'_{\text{ref}}}^2\\&+w_{\cL_2}\norm{F'-( u' - u'_{\text{ref}})}^2 \\&+w_{\cL_2} \norm{F''-( u_\parallel-u''_{\text{ref}}) }^2\\
\subjectto&  \norm{F'}^2+\norm{F''}^2\\ = &\frac{1}{k^2}( \frac{2kE}{\Delta_t} +x_{vd}\transpose x_{vd} \\&-2kk_vx_v\transpose (\partial_{x_p}h u_\parallel +U_\bot u')),
    \end{aligned}
\end{equation}
from which we can obtain the values of $u'$, $u''$, $F'$, and $F''$ through the methods described for the previous cases (detailed calculations are provided in Appendix \ref{appendix_3}).

    \item \label{C5} \textit{The quadratic constraint and two linear constraints are active, or the coordinate dimension $d$ is greater than two, and there are d active linear constraints.}
    In this case, the $d$ linear constraints determine a single feasible point. By assumption, at this point the quadratic constraint is satisfied. Hence, if this case were to happen, it would be covered by case \ref{C2}.

    \item\label{C6} \textit{The coordinate dimension $d$ is greater than two; the quadratic and fewer than $d$ linear constraints are active.}
    We would need to consider all possible cases of the quadratic constraint being active with up to $d-1$ linear constraints. Each of these cases can be handled similarly to case \ref{C4} (where $u_\perp$ is defined to be in the nullspace of all active constraints). Since we consider only $d=2$, this case is out of the scope of this work. 
\end{lenumerate}

\subsection{Infeasibility Consideration}
For both SCF and JCF designs, the $\cL_2$ constraint \eqref{eq:l2_force} might conflict with the CBF constraint \eqref{eq:CBF_constraint}, which is caused by the fact that, to avoid collisions, the CBF controller might have to inject energy into the system (i.e. change $\dot{V}(u,x)$ faster than what is allowed by the input $x_{vd}$). When this case happens, we only keep the CBF safety constraint and set $F=0$. Assuming that the infeasible condition lasts for a finite period, the $\cL_2$ condition described in \ref{subsec:l2_gain} could still be satisfied, but with a larger $\beta$; i.e., following Proposition \ref{prop_1}, this corresponds to a solution where we initialized the tank with non-zero initial energy $E(0)$. One can keep track of this value, and report it to the user as an indication of how loose the $\cL_2$ bound might be.

\section{Validation}\label{sec:experiments}
In this section, we first introduce a simulation and an experiment to demonstrate the qualitative effectiveness of our $\cL_2$-gain constraint in avoiding instabilities in the system. We then provide quantitative results for simulations where a quadrotor is navigated in an environment to highlight the difference between the responses given by the SCF and JCF designs. Additionally, we provide a comparison between the SCF design and a baseline method using the traditional passivity constraint. Finally, we talk about our implementation on the real quadrotor and haptic device to demonstrate the feasibility of the proposed method.

\subsection{Examples of instability and $\cL_2$-gain constraint}
\subsubsection{Simulated human response}

In order to show an example of the instability that may happen in our system, we approximate the human's responses to the force as a spring-damper model, which can be described as:
\begin{equation}\label{eq:human model}
    \Ddot{x}_{vd} + p\Dot{x}_{vd} + q(x_{vd}-x_{v0}) = F,
\end{equation}
where $x_{vd}$ is the commanded velocity, $x_{v0}$ is a human's set velocity, and $F$ is the force feedback that is perceived by the human. \begin{figure}[t]
\centering
\subfloat[Without $\cL_2$-gain constraint]{{\label{instability_a}}
\includegraphics[width=0.45\textwidth,trim={0cm 0cm 0cm 0cm},clip]{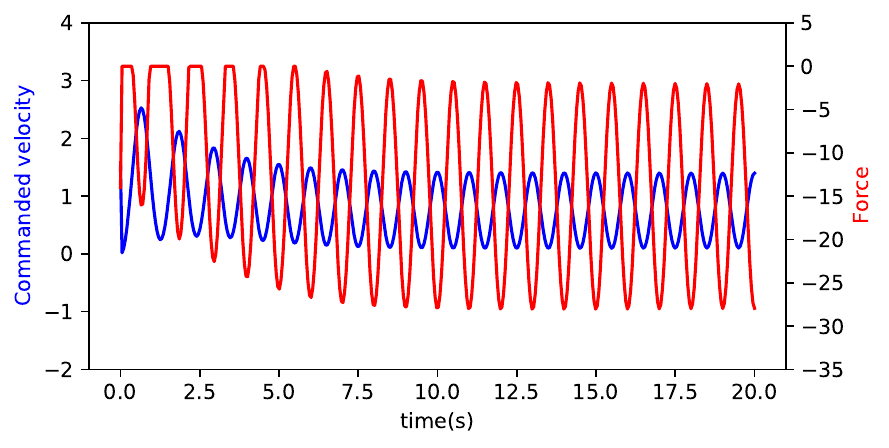}}
\\
\subfloat[With $\cL_2$-gain constraint]{{\label{instability_b}}
\includegraphics[width=0.45\textwidth,trim={0cm 0cm 0cm 0cm},clip]{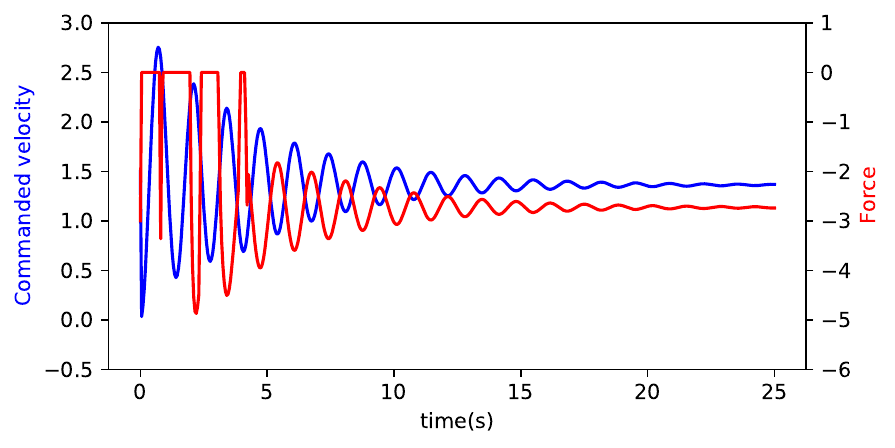}}

\caption{Comparison of stability performance between the condition with small $\cL_2$-gain method and the condition without small $\cL_2$-gain method.}
\label{Fig:instability}
\end{figure} Here, we use a stiffness parameter $q$ and a damping parameter $p$ to approximate how aggressive the responses to the force will be (respectively, how intensely and how quickly the human pushes back the force). Since the actual human responses are in general time-varying and nonlinear, equation. \eqref{eq:human model} will provide, at best, a very coarse approximation of reality; nonetheless, it is sufficient to show how a typical case of instability can arise.

We first couple the model \eqref{eq:human model} with the SCF version of our haptic system (the results for the JCF formulation are qualitatively similar), but without considering the $\cL_2$-gain constraint of \eqref{eq:l2_force} while generating the force $F$. 
Fig.~\ref{instability_a} shows the resulting force feedback $F$ and the human's commanded velocity $x_{vd}$: as apparent from the plots, the system enters into bounded but sustained oscillation. This type of response is generally undesired by the user. Fig.~\ref{instability_b} shows similar plots for the case where the small $\cL_2$-gain constraint is enforced: in this case, the amplitude of the oscillations decays exponentially. The time constant of the decay can be changed by tuning the $\cL_2$ gain $k$.

\subsubsection{Real human response}

\begin{figure}[ht]
\centering
\label{trajectory}
\includegraphics[width=0.9\columnwidth,trim={0.2cm 0.2cm 0cm 0cm},clip]{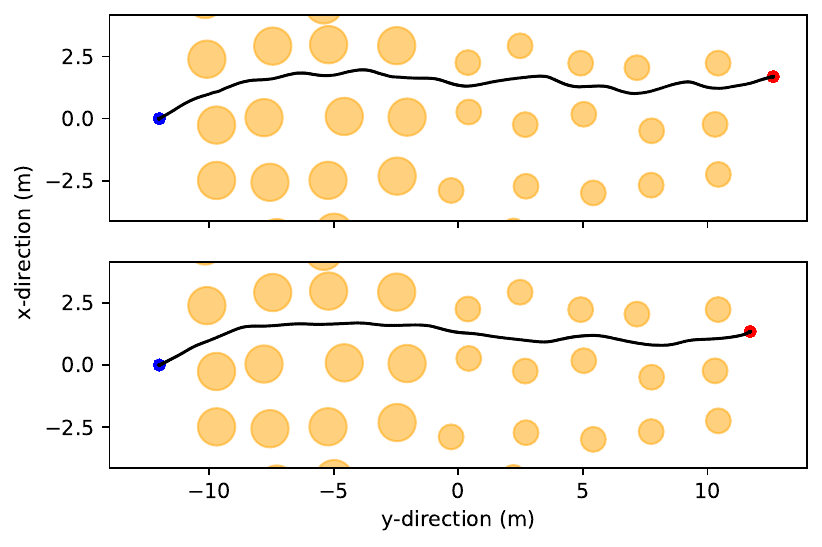}
\caption{Trajectories of the quadrotor under the condition without $\cL_2$-gain method (above) and the condition with $\cL_2$-gain method (below). The blue dot represents the start position while the red dot the end position. Yellow circles denote the obstacles.}
\label{Fig:trajectory}
\end{figure}

\begin{figure}[t]
\centering
\vspace{-10pt}
\subfloat[x-direction, without $\cL_2$-gain method]{{\label{instability_a_human}}
\includegraphics[width=0.45\textwidth,trim={0cm 0cm 0cm 0cm},clip]{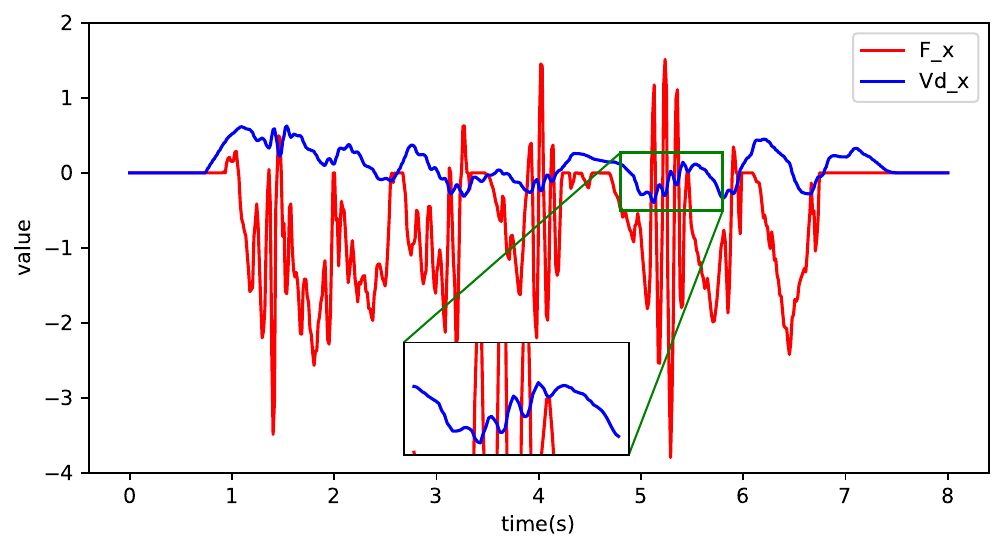}}
\vspace{-10pt}
\\
\subfloat[y-direction, without $\cL_2$-gain method]{{\label{instability_b_human}}
\includegraphics[width=0.45\textwidth,trim={0cm 0cm 0cm 0cm},clip]{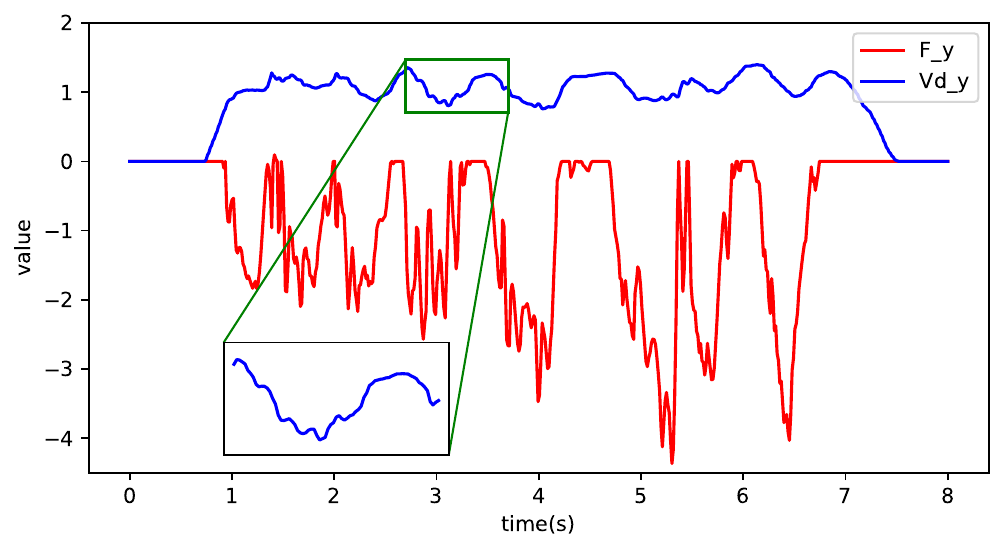}}
\\
\vspace{-10pt}
\subfloat[x-direction, with $\cL_2$-gain method]{{\label{instability_c_human}}
\includegraphics[width=0.45\textwidth,trim={0cm 0cm 0cm 0cm},clip]{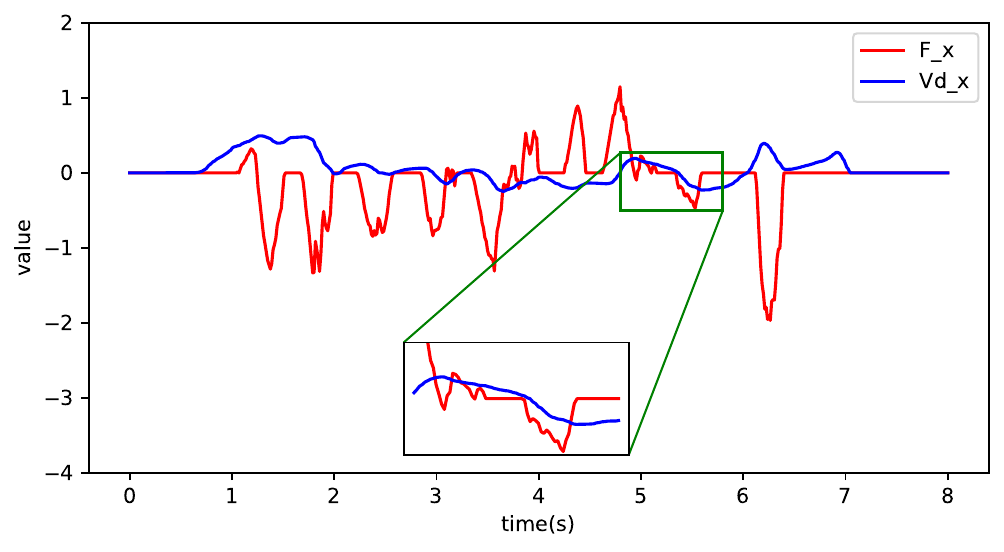}}
\vspace{-10pt}
\\
\subfloat[y-direction, with $\cL_2$-gain method]{{\label{instability_d_human}}
\includegraphics[width=0.45\textwidth,trim={0cm 0cm 0cm 0cm},clip]{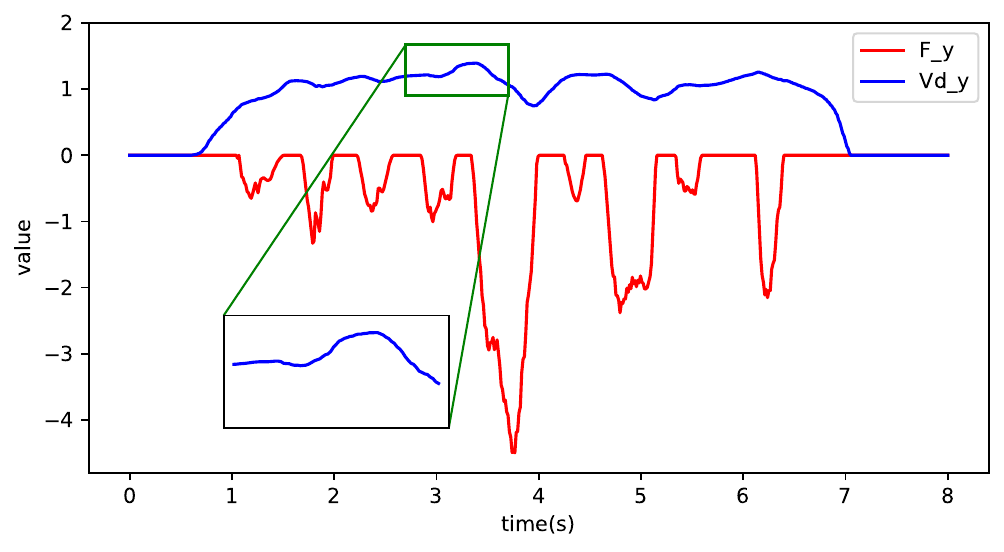}}
\caption{Results of the human's commanded velocity and the force feedback in a two-dimensional simulation.}
\label{Fig:instability_human}
\vspace{-5pt}
\end{figure}
We record real user responses when a simulated quadrotor is flown in an environment with multiple obstacles, as shown in Fig.~\ref{Fig:trajectory}. Since the human's responses will be affected by the force feedback and the same user may also have slightly different responses, it is impossible to provide a low-level quantitative comparison between different autonomous controllers. Instead, we offer a qualitative evaluation of our proposed method by collecting the data from two separate experiments, which however have the same initial conditions and where the same user roughly exhibits the same behavior. Fig.~\ref{Fig:trajectory} shows the trajectories under the conditions with and without the $\cL_2$-gain constraint. 
Fig.~\ref{Fig:instability_human} shows the force feedback and the human's desired velocity (which also represents how the human responds to the force feedback); the trajectory appears to be smoother when the $\cL_2$-gain constraint is active (Fig.~\ref{instability_c_human} and Fig.~\ref{instability_d_human}) than when it is not (Fig.~\ref{instability_a_human} and Fig.~\ref{instability_b_human}).  
Looking more closely at the plots of the $x$ and $y$ components of the force and velocity  as a function of time, we see unexpected oscillations, when the small $\cL_2$-gain method is not applied (Fig.~\ref{instability_a_human} and Fig.~\ref{instability_b_human}); these oscillations almost disappear when the small $\cL_2$-gain constraint is enforced (Fig.~\ref{instability_c_human} and Fig.~\ref{instability_d_human}).  Please refer also to the supplementary video for additional details.

These qualitative results show that the constraint \eqref{eq:l2_force} offers a practical and effective way to avoid instabilities in the system. 

\subsection{One-dimensional experiment}\label{subsec:1-d}

\subsubsection{Experiment} We implement each one of the proposed approaches in a simulated environment where a UAV is navigated in a space without obstacles, and then toward a wall-shaped obstacle located \unit[$6$]{$m$} away from the starting position, as shown in Fig.~\ref{fig:simulation}. To better investigate the properties of the proposed methods, we preset the user's input as a one-dimensional trapezoidal signal in the y direction shown in Fig.~\ref{fig:simulation}. During the experiment, we record the states of the UAV, the generated force feedback $F$, the generated control $u$ and the reference control $u_{\text{ref}}$, and the energy of the system.

In this experiment, we use the distance from the UAV to the wall as the barrier function, which has the form:
\begin{equation}
    h(x)= -x_p + 6,
\end{equation}
where $x_p \in \real{}$ is the position of the UAV in this one-dimensional setup. The dynamics of the UAV is the same as the double integrator as described by \ref{eq:double integrator}.
\begin{figure}[t]
  \centering
  \includegraphics[width=0.9\columnwidth]{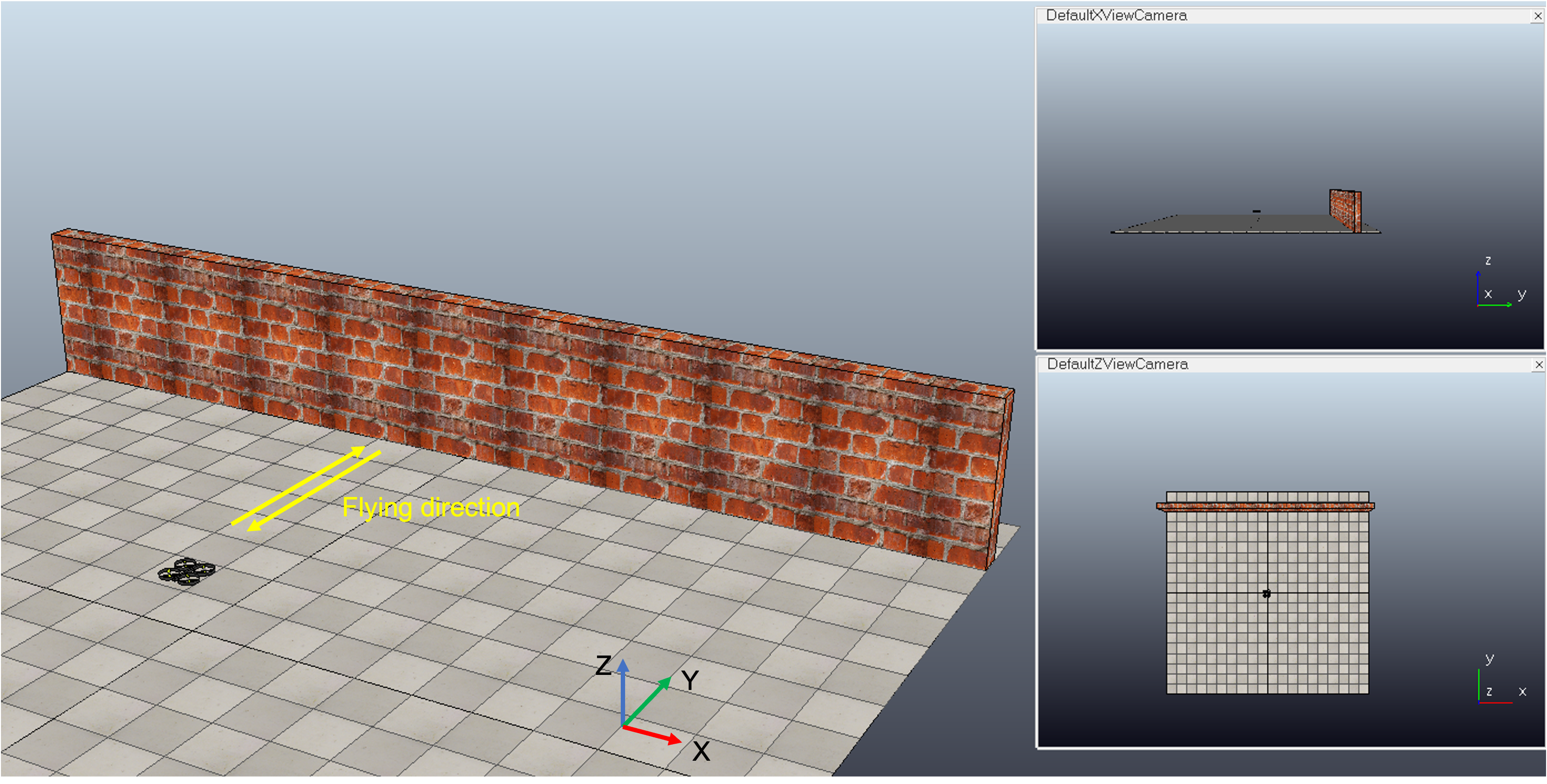}
  \caption{\small A quadrotor UAV is navigated to approach a wall.}
  \label{fig:simulation}
\end{figure}

\subsubsection{Results and discussion}
\begin{figure}[b]
  \centering
  \includegraphics[width=0.9\columnwidth,trim={0cm 0cm 0cm 0cm,clip}]{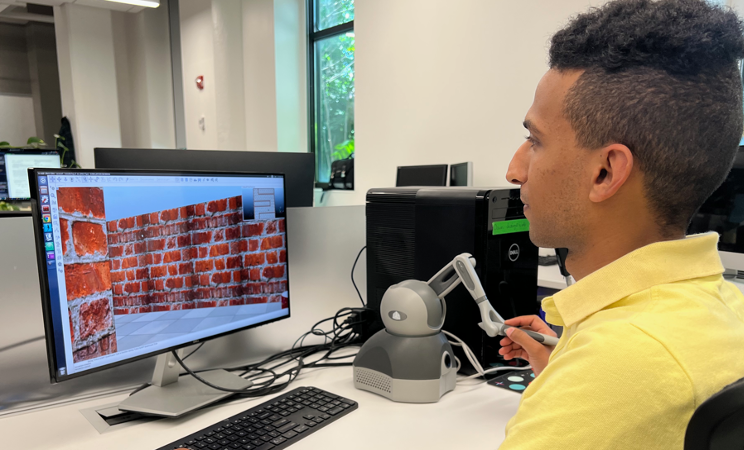}
  \caption{\small Experimental setup of the two-dimensional simulation.}
  \label{Fig:setup}
\end{figure}
\begin{figure*}[!ht]
\vspace{-20pt}
\centering
\subfloat[]
{
\begin{tikzpicture}
\node[inner sep=0pt] (pica)[label ={[rotate=90,left=-4pt,anchor =south]180:\scriptsize $k_v=1$}]{\label{subfig:SCF_a}
\includegraphics[trim={0.5cm 0cm 0cm 0cm,clip},width=0.22\textwidth]{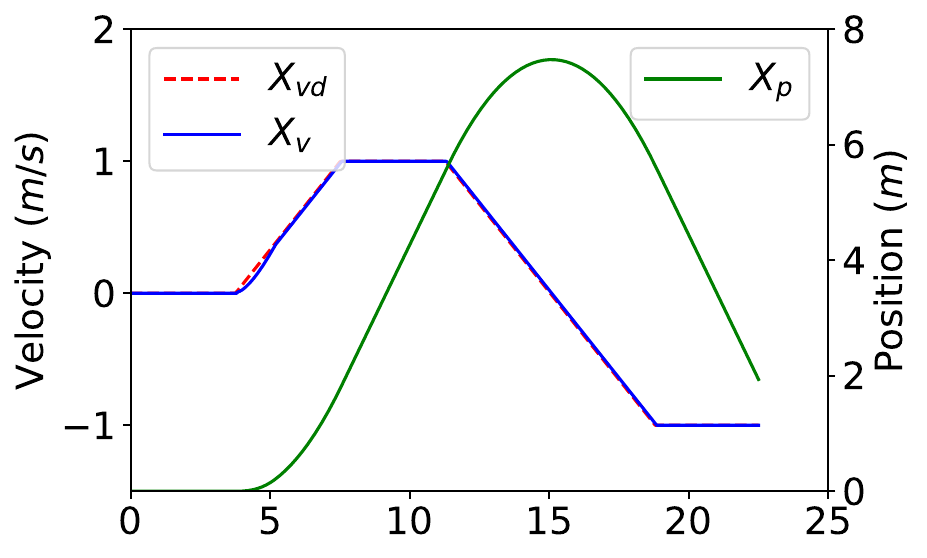}};
\end{tikzpicture}
}
\subfloat[]{\label{subfig:SCF_b}
\includegraphics[width=0.22\textwidth,trim={0.5cm 0cm 0cm 0cm}]{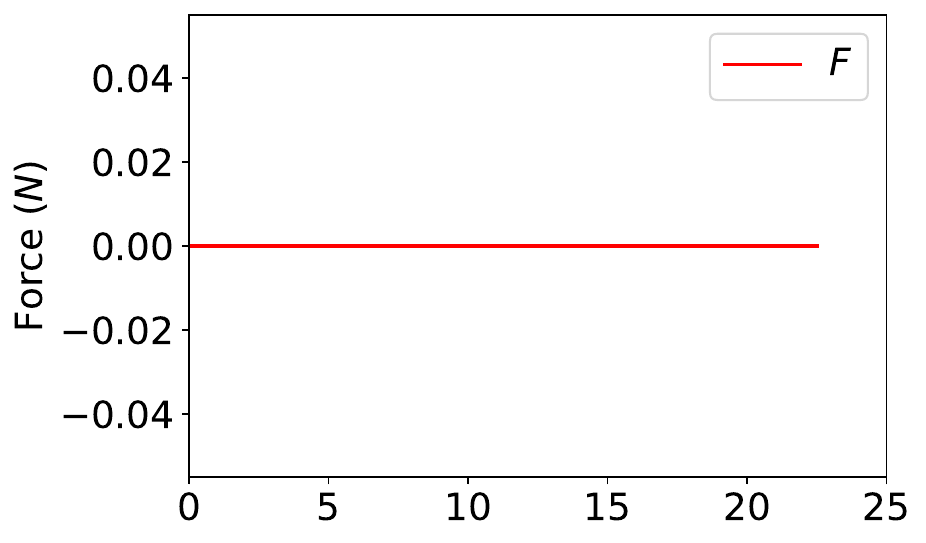}
}
\subfloat[]{\label{subfig:SCF_c}
\includegraphics[width=0.22\textwidth,trim={0cm 0cm 0cm 0cm}]{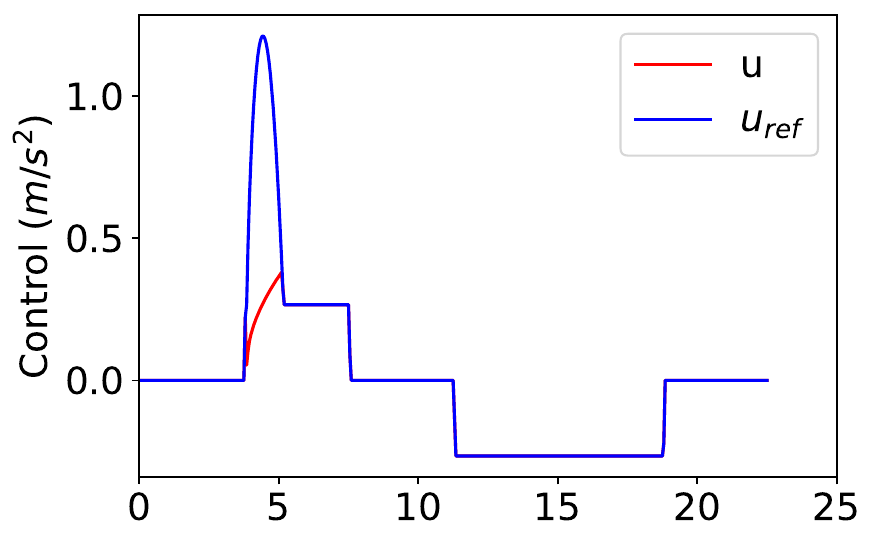}
}
\subfloat[]{
\includegraphics[width=0.22\textwidth,trim={0cm 0cm 0cm 0cm}]{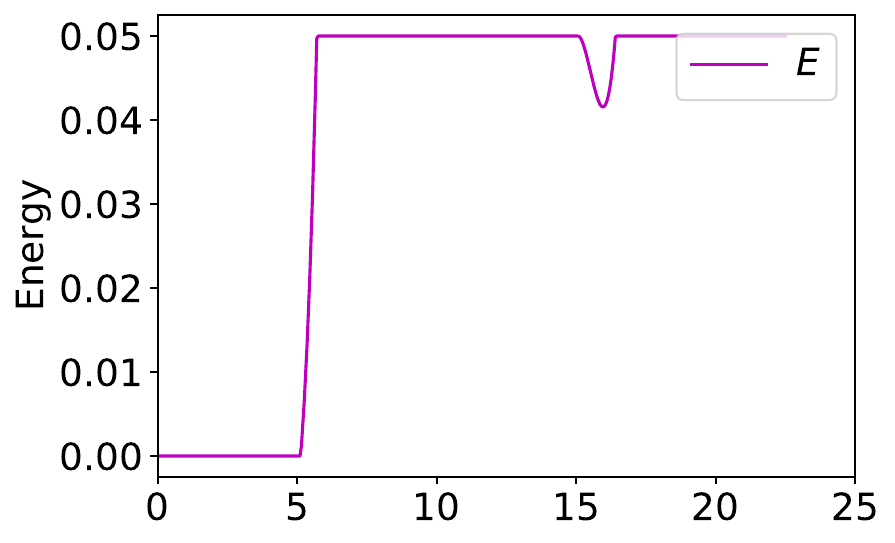}
}
\vspace{-3pt}
\\
\subfloat[]
{
\begin{tikzpicture}
\node[inner sep=0pt] (pica)[label ={[rotate=90,left=-4pt,anchor =south]180:\scriptsize $k_v=5$}]{\label{subfig:SCF_e}
\includegraphics[trim={0.5cm 0cm 0cm 0cm,clip},width=0.22\textwidth]{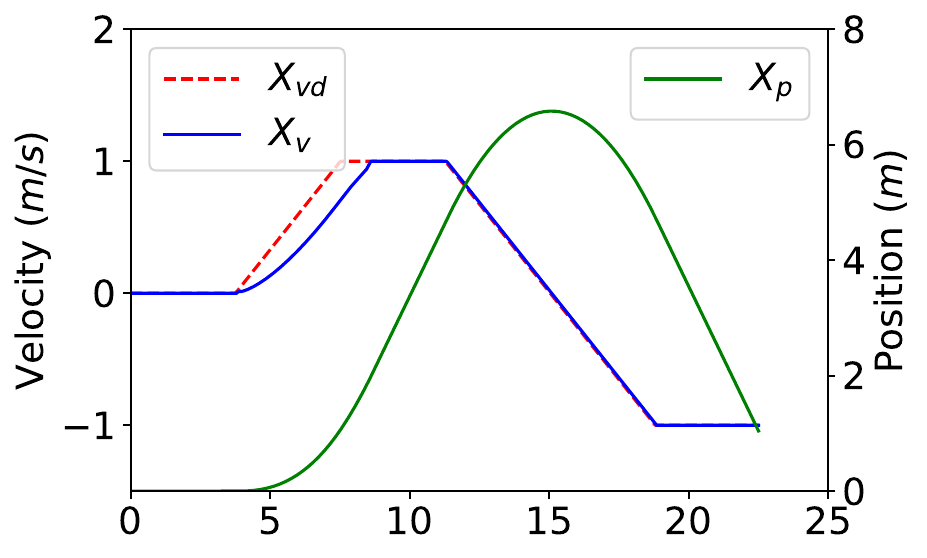}};
\draw[red, thick] (-0.9,-0.2)node[below] {$A$} -- (-0.35,-0.2)--(-0.35, 0.7)--(-0.9, 0.7)--(-0.9,-0.2);
\end{tikzpicture}
}
\subfloat[]{\label{subfig:SCF_f}
\includegraphics[width=0.22\textwidth,trim={0.5cm 0cm 0cm 0cm}]{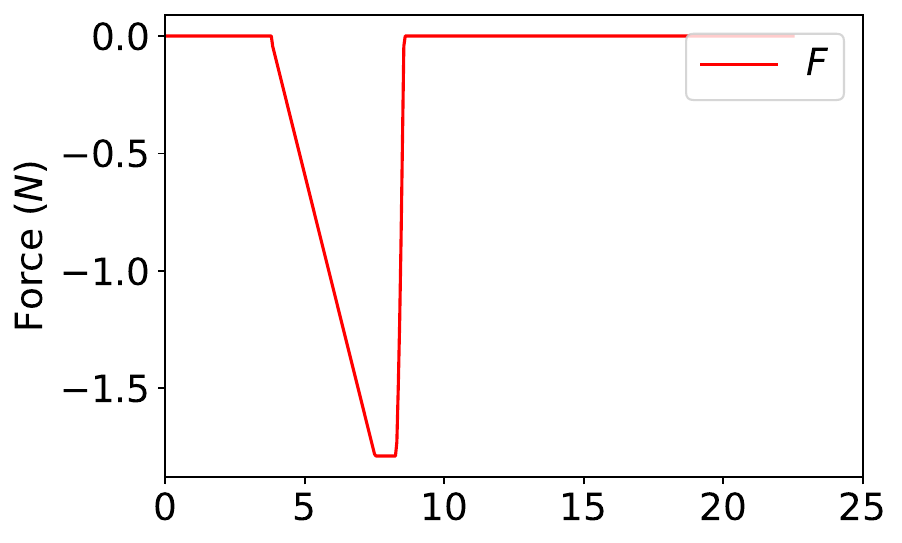}
}
\subfloat[]{
\includegraphics[width=0.22\textwidth,trim={0cm 0cm 0cm 0cm}]{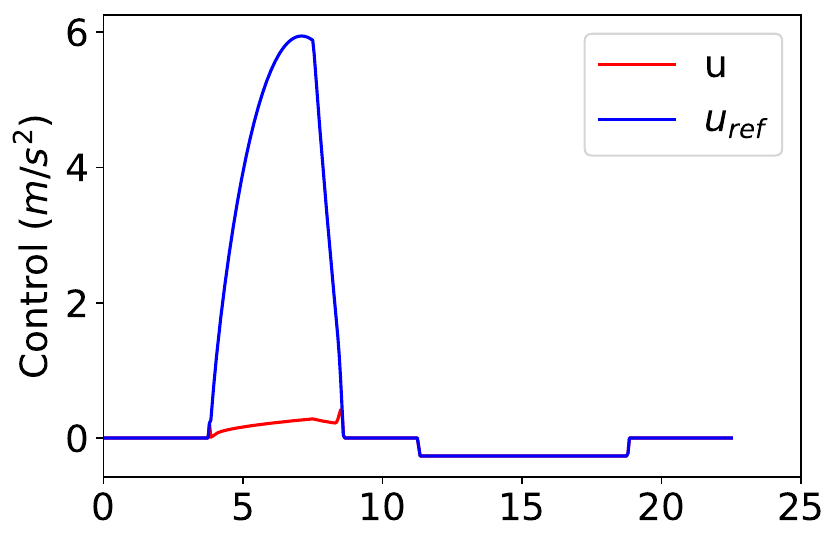}
}
\subfloat[]{
\includegraphics[width=0.22\textwidth,trim={0cm 0cm 0cm 0cm}]{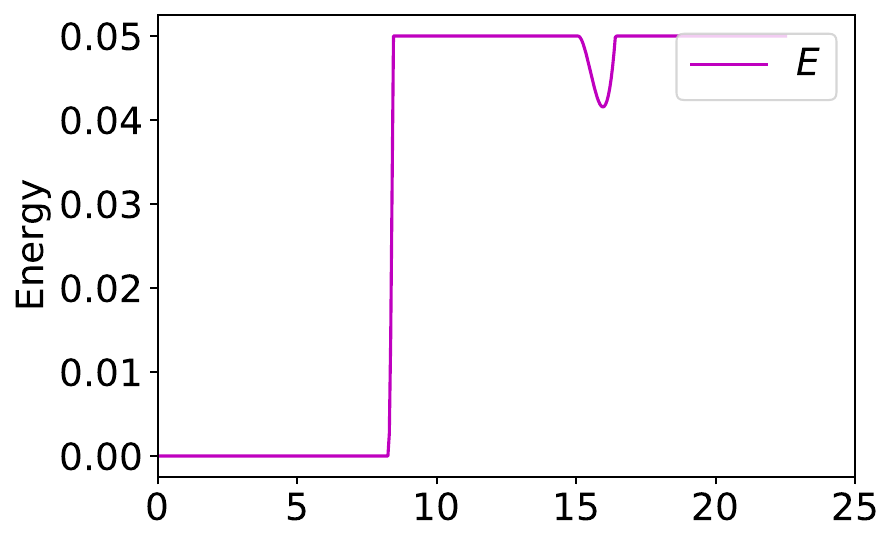}
}
\caption{\small Results of the SCF design when there are no obstacles. The first row shows results with $k_v=1$ while the second row with $k_v=5$. For each column, it respectively shows the state of the UAV, the force feedback $F$, the generated control $u$ and the reference control $u_{\rm{ref}}$, and the energy of the system. }
\label{Fig:SCF_free}
\vspace{-10pt}
\end{figure*}
\begin{figure*}[!ht]
\centering
\subfloat[]
{
\begin{tikzpicture}
\node[inner sep=0pt] (pica)[label ={[rotate=90,left=-4pt,anchor =south]180:\scriptsize $k_v=1$}]{\label{subfig:JCF_a}
\includegraphics[trim={0.5cm 0cm 0cm 0cm,clip},width=0.22\textwidth]{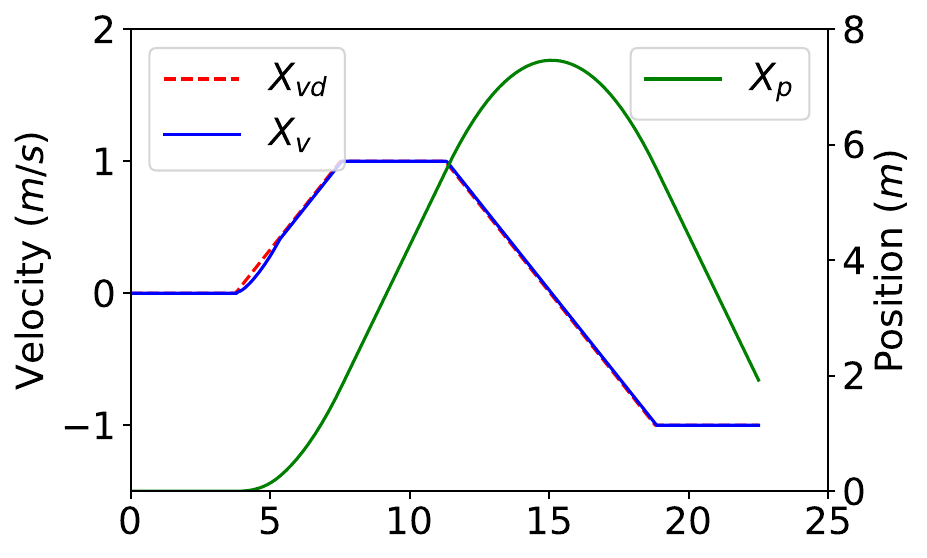}};
\draw[red, thick] (-1.1,-0.2)node[below] {$B$} -- (-0.6,-0.2)--(-0.6, 0.3)--(-1.1, 0.3)--(-1.1,-0.2);
\end{tikzpicture}
}
\subfloat[]{
\label{subfig:JCF_b}
\includegraphics[width=0.22\textwidth,trim={0.5cm 0cm 0cm 0cm}]{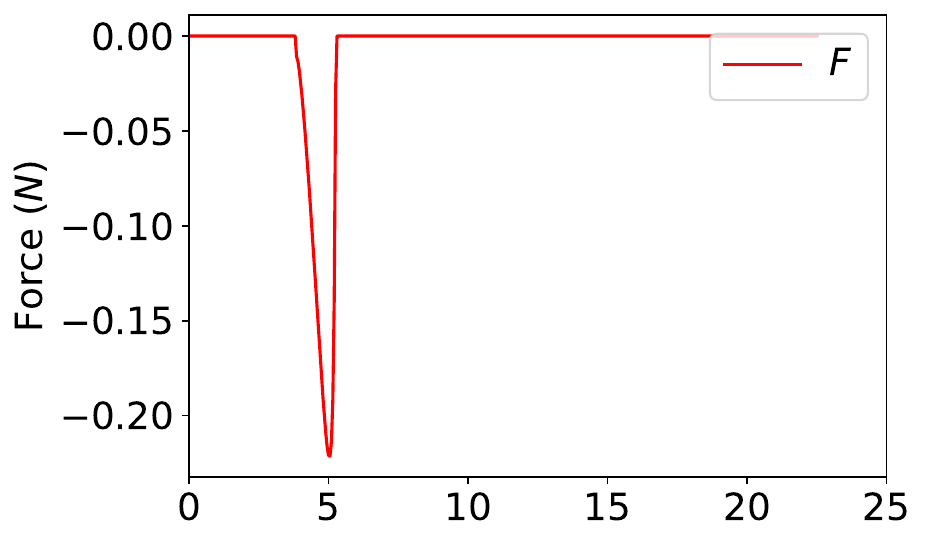}
}
\subfloat[]{\label{subfig:JCF_c}
\includegraphics[width=0.22\textwidth,trim={0cm 0cm 0cm 0cm}]{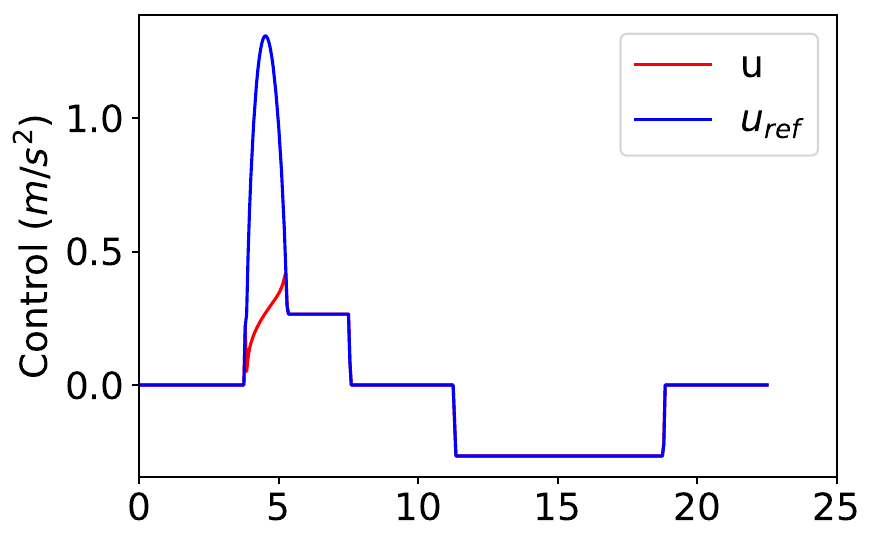}
}
\subfloat[]{\label{subfig:JCF_d}
\includegraphics[width=0.22\textwidth,trim={0cm 0cm 0cm 0cm}]{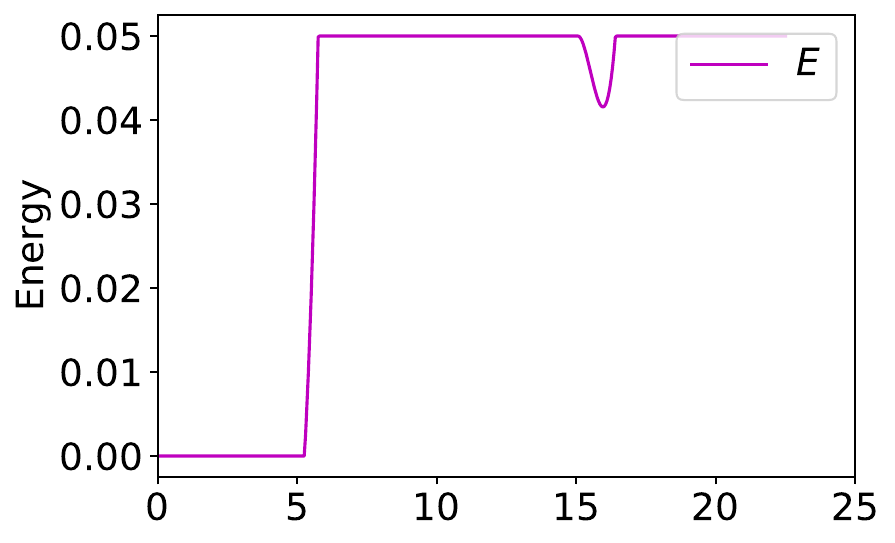}
}
\vspace{-3pt}
\\
\subfloat[]
{
\begin{tikzpicture}
\node[inner sep=0pt] (e)[label ={[rotate=90,left=-4pt,anchor =south]180:\scriptsize $k_v=5$}]{\label{subfig:JCF_e}
\includegraphics[trim={0.5cm 0cm 0cm 0cm,clip},width=0.22\textwidth]{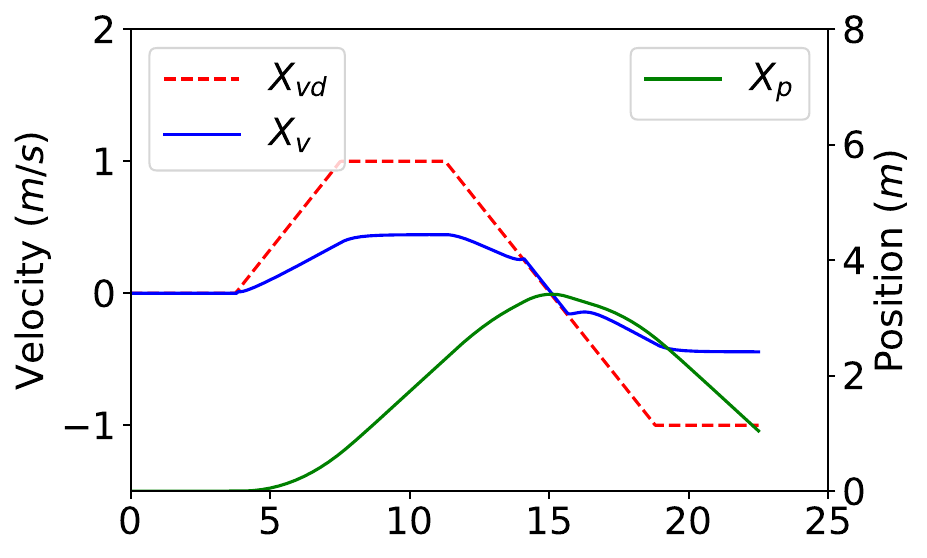}};
\draw[red, thick] (-1,-0.2)node[below] {$C$} -- (0.2,-0.2)--(0.2, 0.7)--(-1, 0.7)--(-1,-0.2);
\end{tikzpicture}
}
\subfloat[]{
\begin{tikzpicture}
\node[inner sep=0pt] (f){\label{subfig:JCF_f}
\includegraphics[trim={0cm 0cm 0cm 0cm,clip},width=0.22\textwidth]{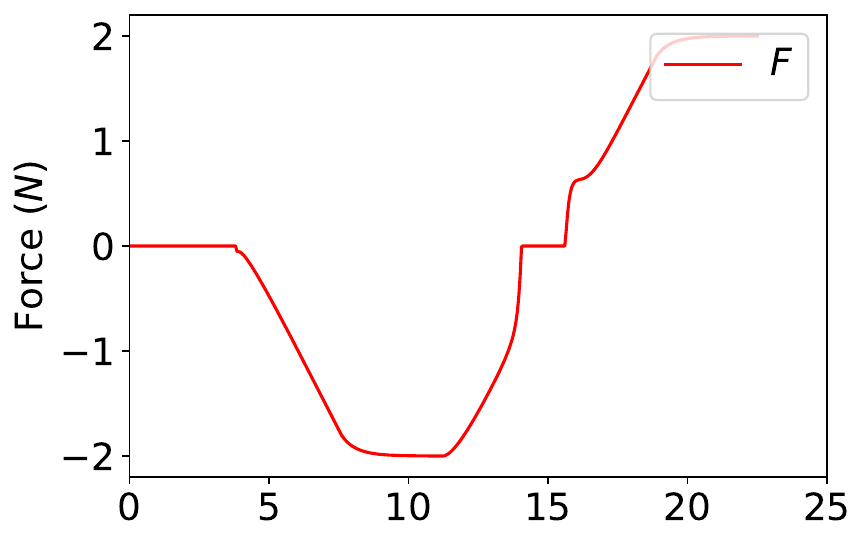}};
\draw[red, thick] (-0.7,-0.8) -- (0.5,-0.8)--(0.5, 0.2)--(-0.7, 0.2)node[above] {$D$}--(-0.7,-0.8);
\end{tikzpicture}

}
\subfloat[]{\label{subfig:JCF_g}
\includegraphics[width=0.22\textwidth,trim={0cm 0cm 0cm
0cm}]{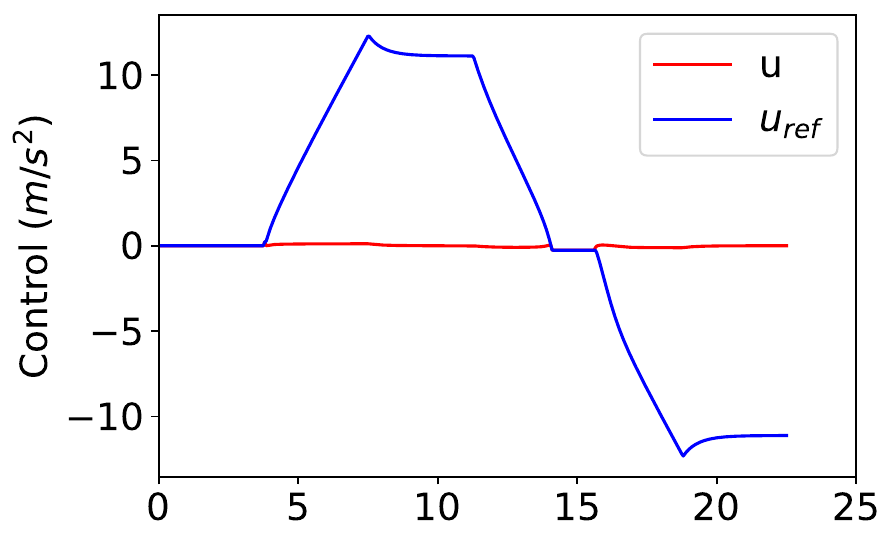}
}
\subfloat[]{\label{subfig:JCF_h}
\includegraphics[width=0.22\textwidth,trim={0cm 0cm 0cm 0cm}]{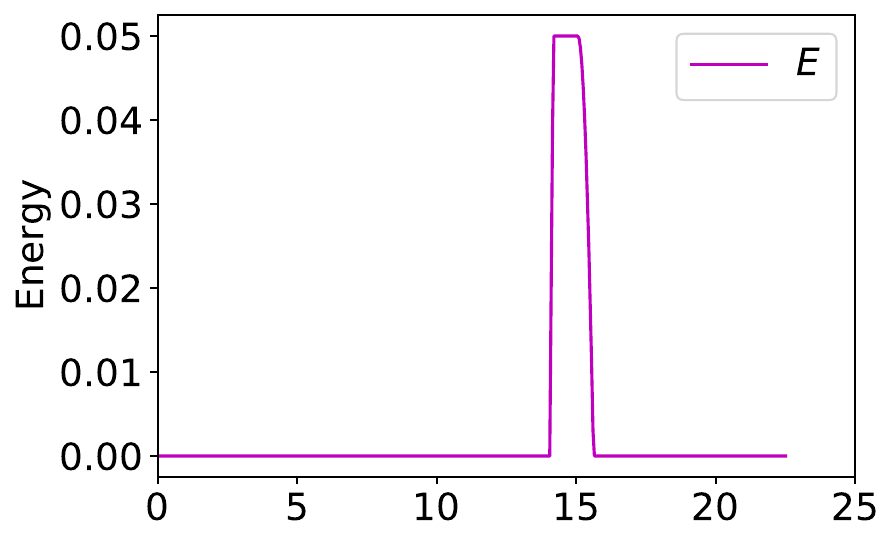}
}
\caption{\small Results of the JCF design when there are no obstacles. The first row shows results with $k_v=1$ while the second row with $k_v=5$. For each column, it respectively shows the state of the UAV, the force feedback $F$, the generated control $u$ and the reference control $u_{\rm{ref}}$, and the energy of the system. }
\label{Fig:JCF_free}
\end{figure*}

Fig. \ref{Fig:SCF_free} and Fig.~\ref{Fig:JCF_free} are results for SCF and JCF when there are no obstacles. As we can see from \ref{subfig:SCF_b} and \ref{subfig:SCF_f}, when applying the SCF design, there is no force being generated with $k_v = 1$ while there is an ``inertia-like'' resistance force feedback with $k_v = 5$ when the UAV does not reach the desired velocity (e.g. from \unit[$4$]{$s$} to \unit[$8$]{$s$}, as marked by the red box A in Fig.~\ref{subfig:SCF_e}).

When applying the JCF design, resisting force feedback is generated with both $k_v = 1$ and $k_v = 5$, as shown in Fig.~\ref{subfig:JCF_b} and Fig.~\ref{subfig:JCF_f}. We can also see that a greater value of $k_v$ results in greater modifications of the control input, and larger force feedback for a relatively longer time (e.g. from \unit[$3$]{$s$ to \unit[$14$]{$s$}} in Fig.~\ref{subfig:JCF_f}, as highlighted by the red box D).

\begin{figure*}[t]
\raggedright
\subfloat[]
{
\begin{tikzpicture}
\node[inner sep=0pt] (pica)[label ={[rotate=90,left=-4pt,anchor =south]180:\scriptsize $k_v=1$}]{\label{Fig:SCF_obs_a}
\includegraphics[trim={0.5cm 0cm 0cm 0cm,clip},width=0.22\textwidth]{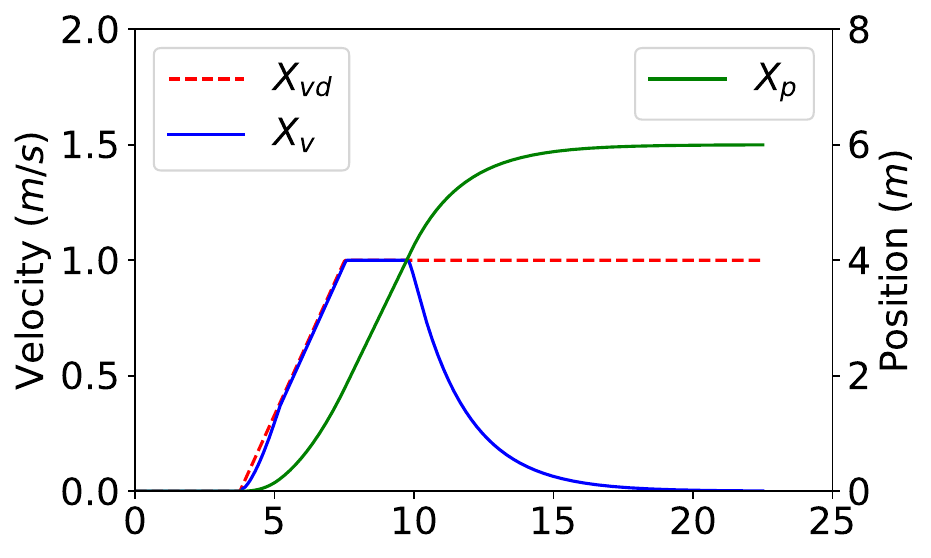}};
\draw[red, thick] (-0.2,0.3) -- (0.9,0.3)node[below] {$E$}--(0.9, 0.7)--(-0.2, 0.7)--(-0.2, 0.3);
\end{tikzpicture}
}
\subfloat[]{
\begin{tikzpicture}
\node[inner sep=0pt] (b){\label{Fig:SCF_obs_b}
\includegraphics[trim={0.5cm 0cm 0cm 0cm,clip},width=0.22\textwidth]{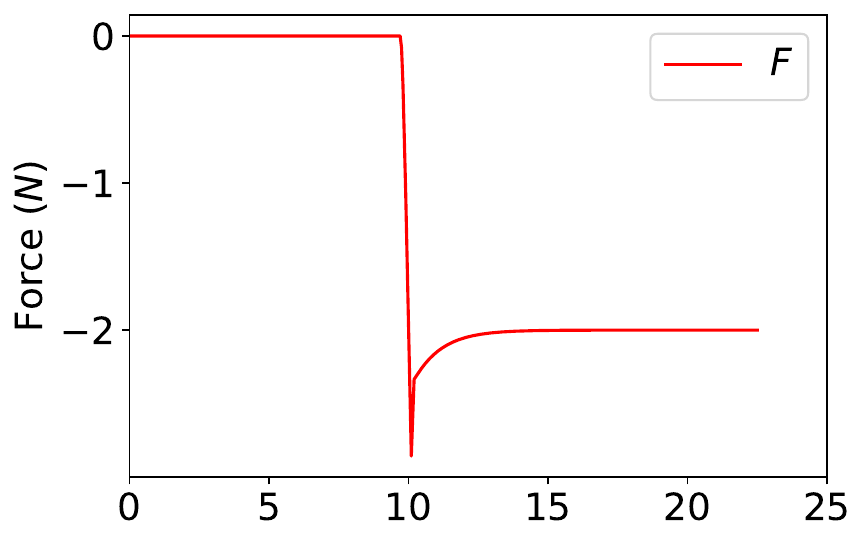}};
\draw[red, thick] (-0.3,-0.8) -- (0.4,-0.8)--(0.4, -0.1)node[above] {$F$}--(-0.3, -0.1)--(-0.3,-0.8);
\end{tikzpicture}
}
\subfloat[]{
\includegraphics[width=0.22\textwidth,trim={0cm 0cm 0cm 0cm}]{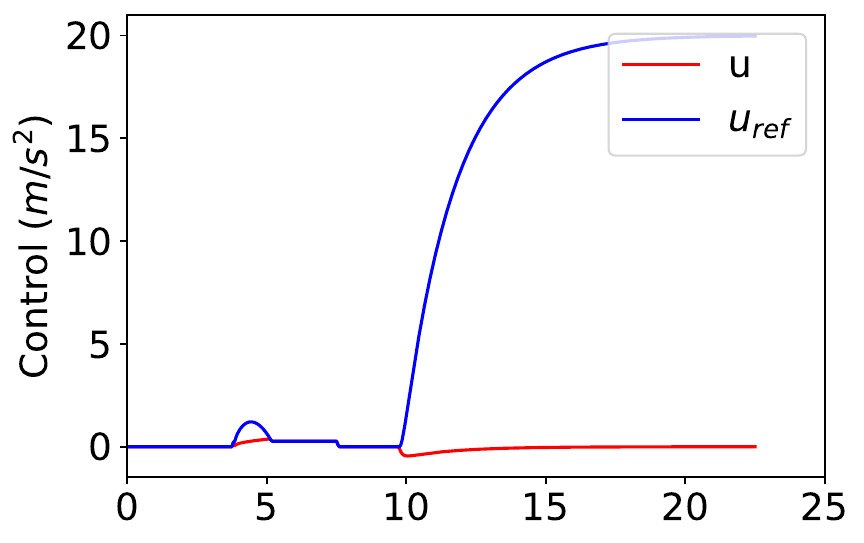}
}
\subfloat[]{
\includegraphics[width=0.22\textwidth,trim={0cm 0cm 0cm 0cm}]{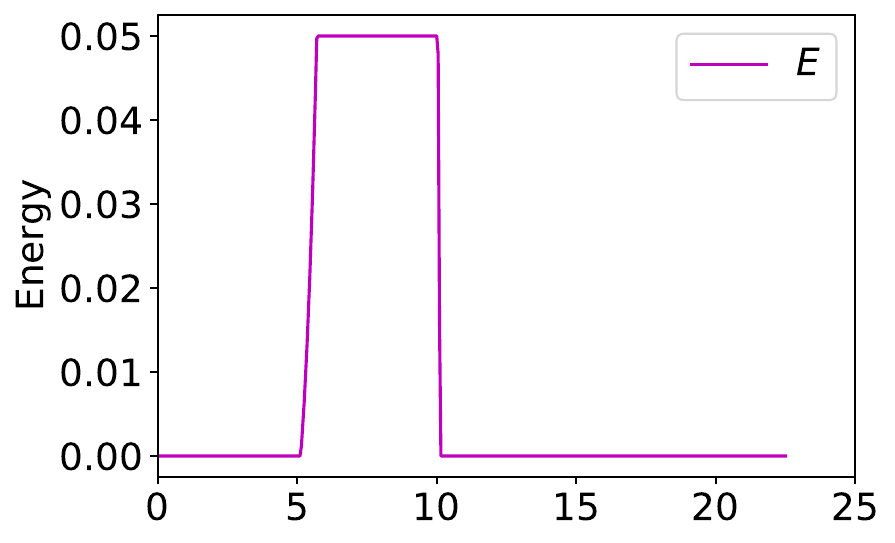}
}
\vspace{-3pt}
\\
\subfloat[]
{
\begin{tikzpicture}
\node[inner sep=0pt] (pica)[label ={[rotate=90,left=-1pt,anchor =south]180:\scriptsize $k_v=5$}]{
\includegraphics[trim={0.5cm 0cm 0cm 0cm,clip},width=0.22\textwidth]{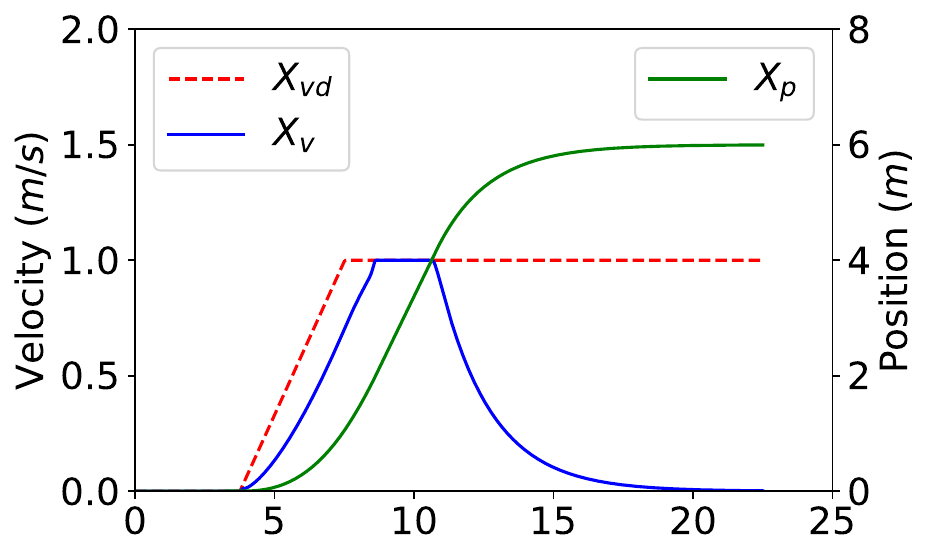}};
\end{tikzpicture}
}
\subfloat[]{
\begin{tikzpicture}
\node[inner sep=0pt] (f){\label{Fig:SCF_obs_f}
\includegraphics[trim={0.5cm 0cm 0cm 0cm,clip},width=0.22\textwidth]{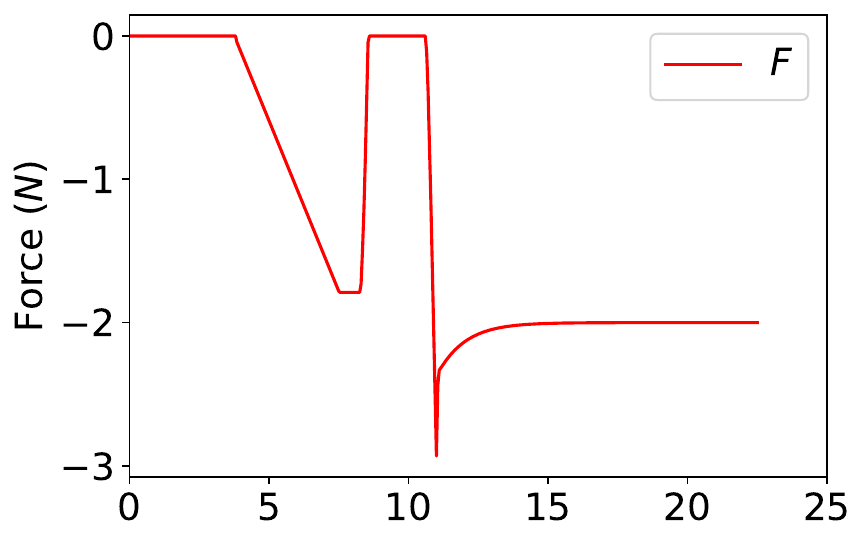}};
\draw[red, thick] (-0.15,-0.8) -- (0.4,-0.8)--(0.4, -0.1)node[above] {$G$}--(-0.15, -0.1)--(-0.15,-0.8);
\end{tikzpicture}
}
\subfloat[]{
\includegraphics[width=0.22\textwidth,trim={0cm 0cm 0cm 0cm}]{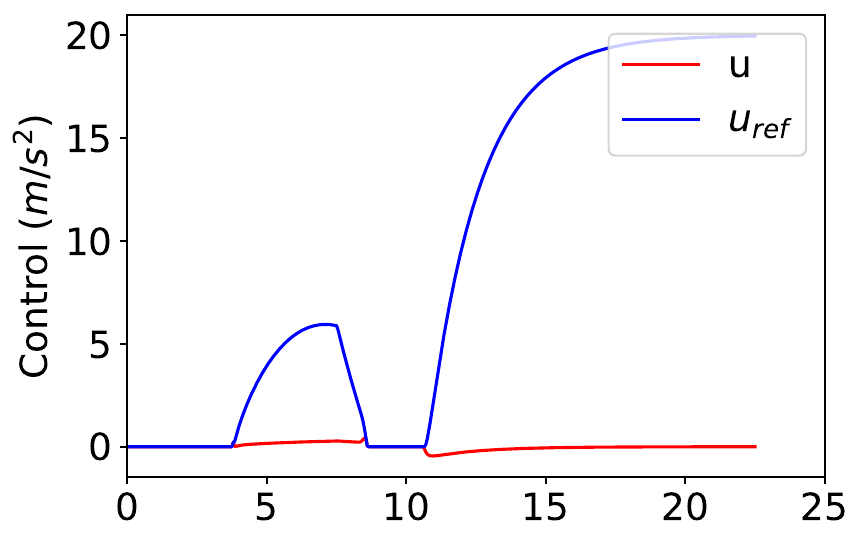}
}
\subfloat[]{
\includegraphics[width=0.22\textwidth,trim={0cm 0cm 0cm 0cm}]{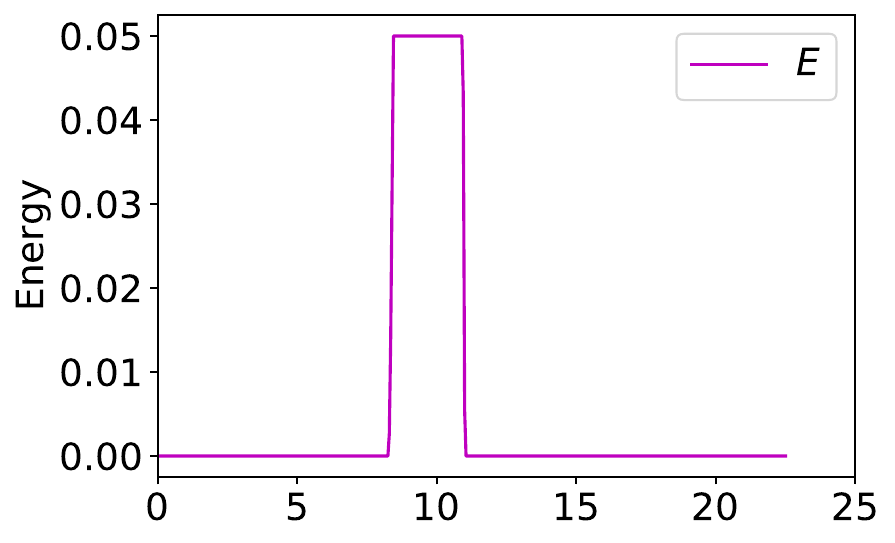}
}
\\
\subfloat[]
{
\begin{tikzpicture}
\node[inner sep=0pt] (pica)[label ={[rotate=90,left=-4pt,anchor =south]180:\scriptsize Passivity}]{\label{Fig:passivity_i}
\includegraphics[trim={0.5cm 0cm 0cm 0cm,clip},width=0.22\textwidth]{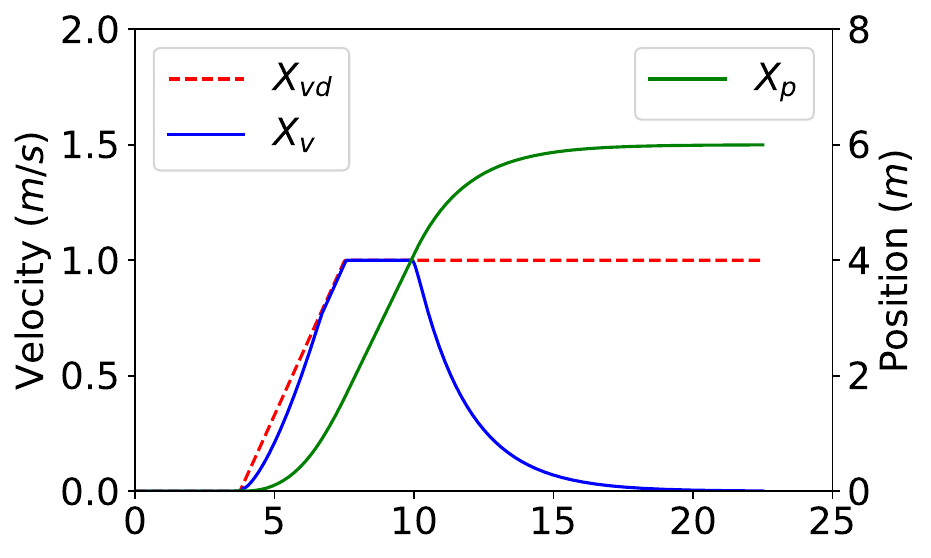}};
\end{tikzpicture}
}
\subfloat[]{
\begin{tikzpicture}
\node[inner sep=0pt]{\label{Fig:passivity_j}
\includegraphics[trim={0.5cm 0cm 0cm 0cm,clip},width=0.22\textwidth]{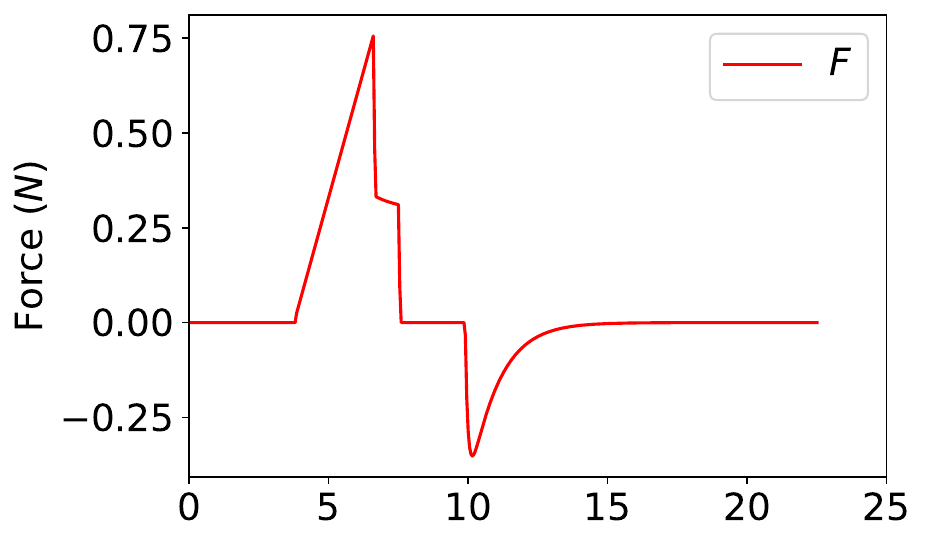}};
\end{tikzpicture}
}
\subfloat[]{\label{Fig:passivity_k}
\includegraphics[width=0.22\textwidth,trim={0cm 0cm 0cm 0cm}]{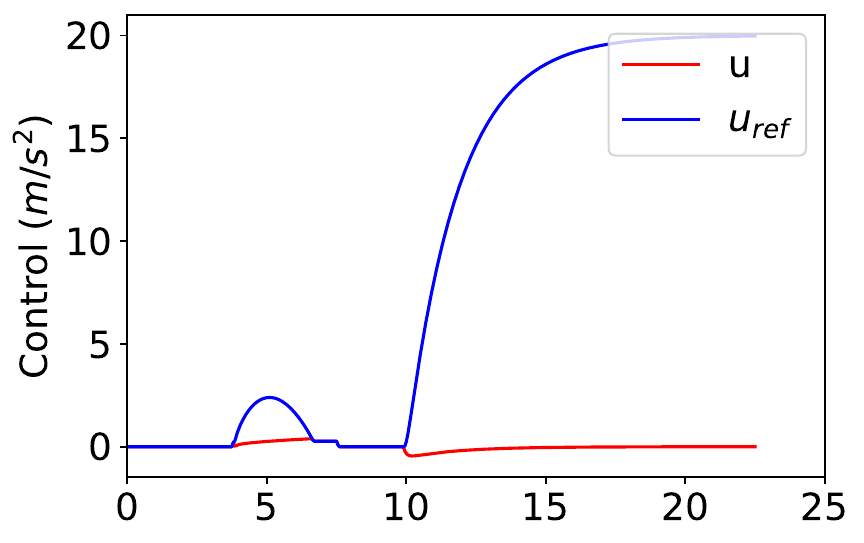}
}

\caption{\small Results of the SCF design when there is a wall at \unit[$6$]{$m$} away. The first row shows results with $k_v=1$ while the second row with $k_v=5$. For each column, it respectively shows the state of the UAV, the force feedback $F$, the generated control $u$ and the reference control $u_{\text{ref}}$, and the energy of the system. The third row shows the results of applying the traditional passivity constraint in our system setup. Note that the passivity constraint does not have the design of the energy tank, but it is still comparable with the $\cL_2$ constraint with $E_{max}=0$.}
\label{Fig:SCF_obs}
\vspace{-15pt}
\end{figure*}
\begin{figure*}[!ht]
\centering
\subfloat[]
{
\begin{tikzpicture}\label{Fig:JCF_obs_a}
\node[inner sep=0pt] (pica)[label ={[rotate=90,left=-1pt,anchor =south]180:\scriptsize $k_v=1$}]{
\includegraphics[trim={0.5cm 0cm 0cm 0cm,clip},width=0.22\textwidth]{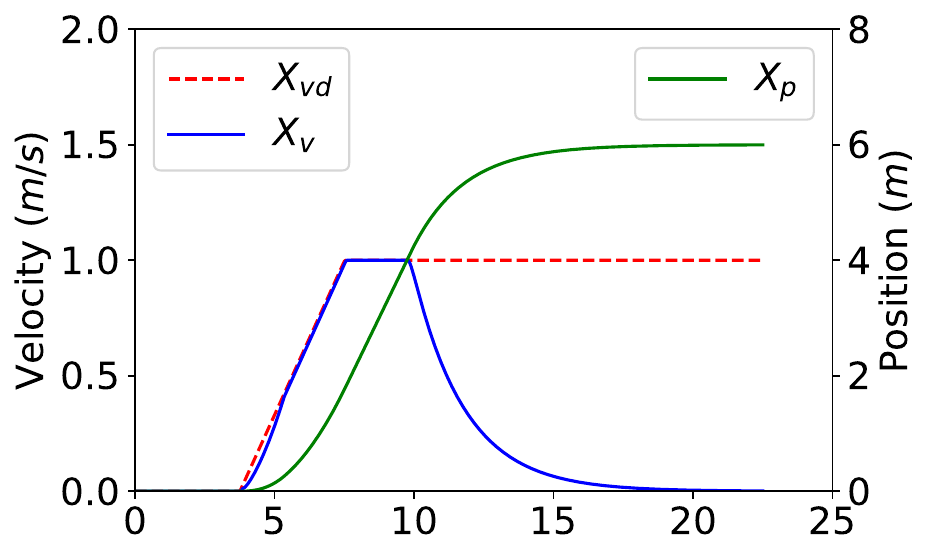}};
\draw[red, thick] (-0.2,0.3) -- (0.9,0.3)node[below] {$H$}--(0.9, 0.7)--(-0.2, 0.7)--(-0.2, 0.3);
\end{tikzpicture}
}
\subfloat[]{
\begin{tikzpicture}
\node[inner sep=0pt] (b){\label{Fig:JCF_obs_b}
\includegraphics[trim={0.5cm 0cm 0cm 0cm,clip},width=0.22\textwidth]{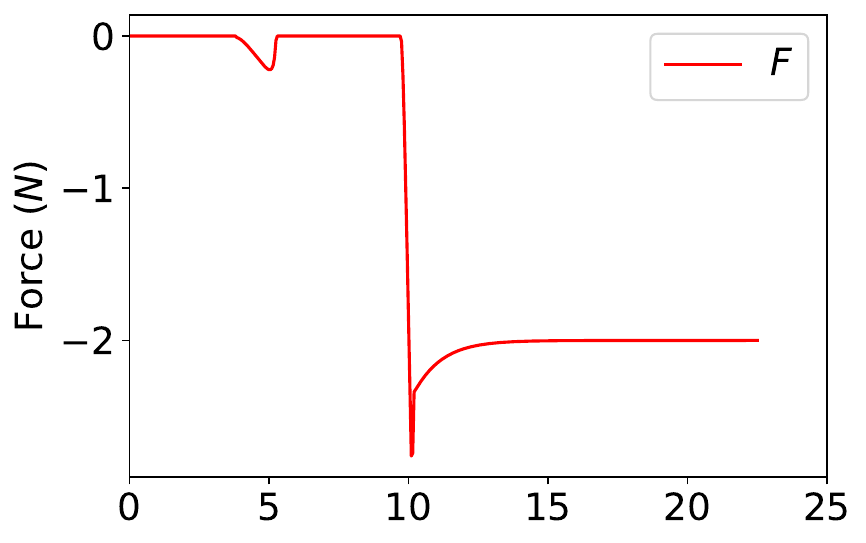}};
\draw[red, thick] (-0.3,-0.8) -- (0.4,-0.8)--(0.4, -0.1)node[above] {$I$}--(-0.3, -0.1)--(-0.3,-0.8);
\end{tikzpicture}
}
\subfloat[]{
\includegraphics[width=0.22\textwidth,trim={0cm 0cm 0cm 0cm}]{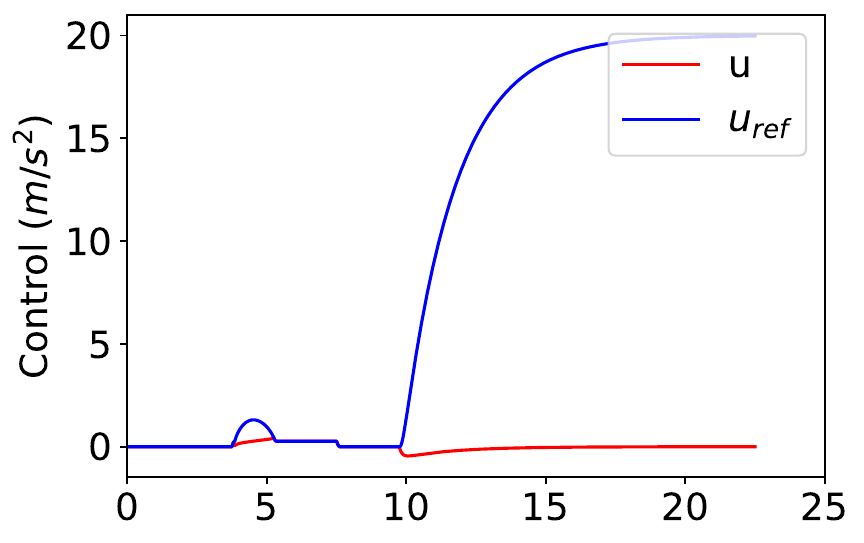}
}
\subfloat[]{
\includegraphics[width=0.22\textwidth,trim={0cm 0cm 0cm 0cm}]{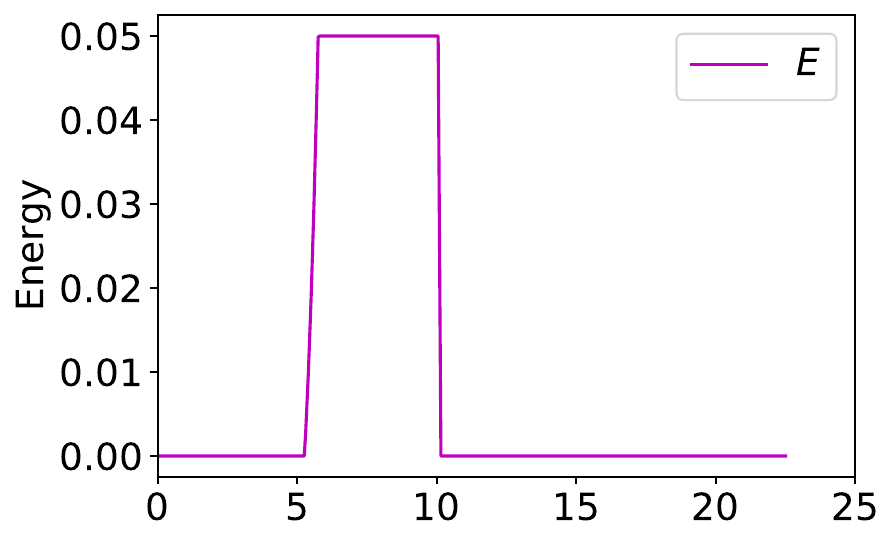}
}
\vspace{-3pt}
\\
\subfloat[]
{
\begin{tikzpicture}
\node[inner sep=0pt] (pica)[label ={[rotate=90,left=-1pt,anchor =south]180:\scriptsize $k_v=5$}]{
\includegraphics[trim={0.5cm 0cm 0cm 0cm,clip},width=0.22\textwidth]{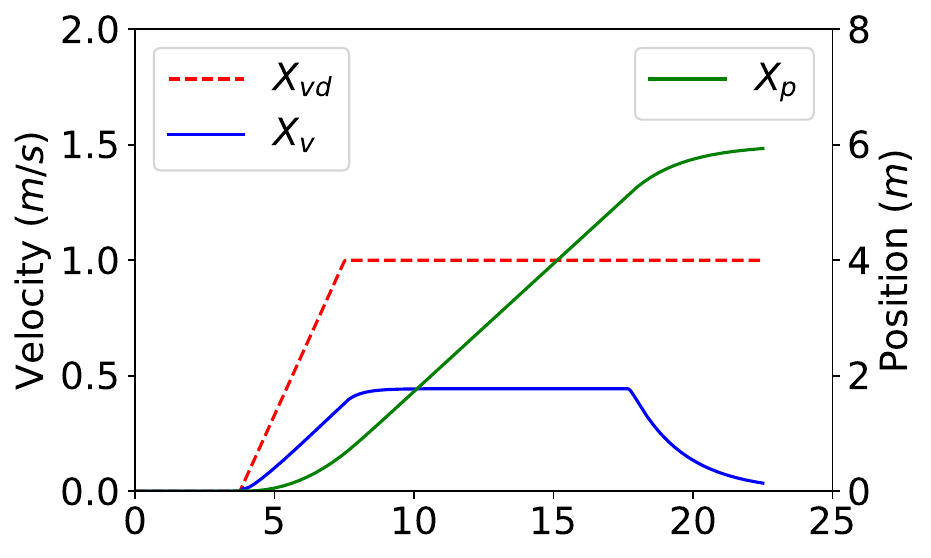}};
\end{tikzpicture}
}
\subfloat[]{
\begin{tikzpicture}
\node[inner sep=0pt]{\label{Fig:JCF_obs_f}
\includegraphics[trim={0.5cm 0cm 0cm 0cm,clip},width=0.22\textwidth]{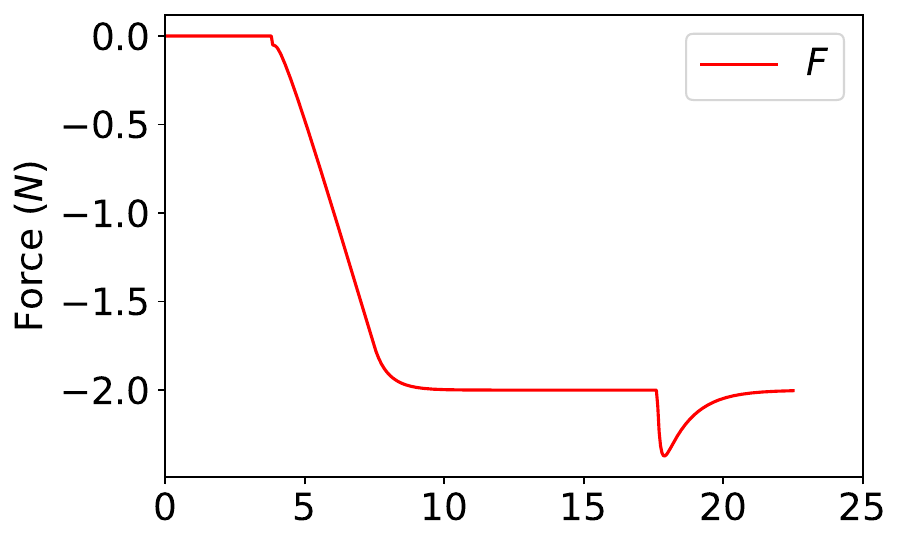}};
\draw[red, thick] (0.6,-0.8) -- (1.1,-0.8)--(1.1, -0.3)node[above] {$J$}--(0.6, -0.3)--(0.6,-0.8);
\end{tikzpicture}
}
\subfloat[]{
\includegraphics[width=0.22\textwidth,trim={0cm 0cm 0cm 0cm}]{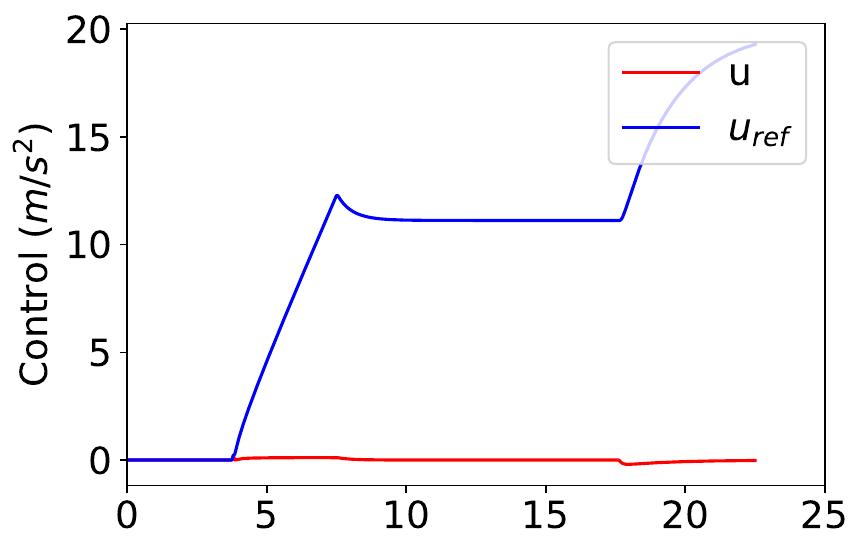}
}
\subfloat[]{
\includegraphics[width=0.22\textwidth,trim={0cm 0cm 0cm 0cm}]{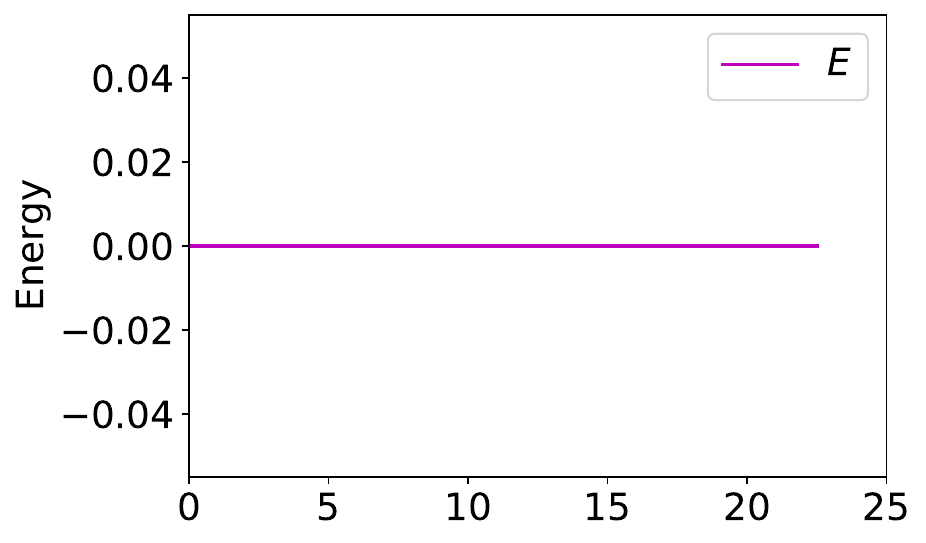}
}
\caption{\small Results of the JCF design when there is a wall at \unit[$6$]{$m$} away. The first row shows results with $k_v=1$ while the second row with $k_v=5$. For each column, it respectively shows the state of the UAV, the force feedback $F$, the generated control $u$ and
the reference control $u_{\text{ref}}$, and the energy of the system. }
\label{Fig:JCF_F_obs}
\vspace{-5pt}
\end{figure*}

The results for the two proposed designs with a wall-shaped
obstacle are shown in Fig.~\ref{Fig:SCF_obs} and Fig.~\ref{Fig:JCF_F_obs}. If the UAV is far away from the obstacle, the results for both designs are very similar to the no-obstacle condition. However, both designs (either with $k_v = 1$ or $k_v = 5$) render repulsive force feedback as the UAV gets close to the obstacle (e.g. from \unit[$10$]{$s$ to \unit[$13$]{$s$}} in Fig.~\ref{Fig:SCF_obs_b} and  Fig.~~\ref{Fig:SCF_obs_f}, from \unit[$17$]{$s$ to \unit[$20$]{$s$}} in Fig.~\ref{Fig:JCF_obs_f}, which are highlighted in red boxes F, G, and J). Also, the UAV stops before colliding with the obstacle, as noted by the red box E in Fig.~\ref{Fig:SCF_obs_a} and F in Fig.~\ref{Fig:JCF_obs_a}, indicating that the rendered control $u$ from both SCF and JCF designs can guarantee safety.

\begin{figure*}[!ht]
\centering
\subfloat[\scriptsize]
{
\begin{tikzpicture}
\node[inner sep=0pt] (pica){
\includegraphics[trim={0cm 0cm 0cm 0cm,clip},width=0.23\textwidth]{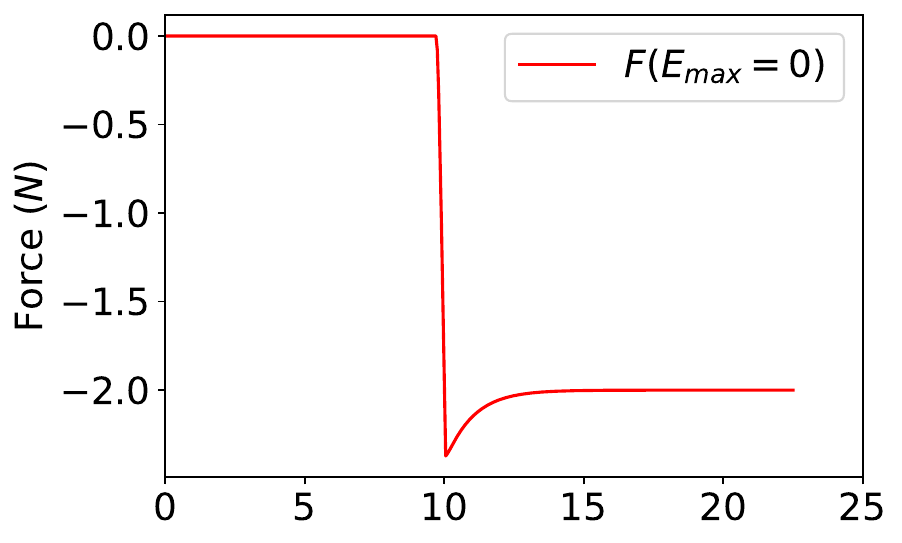}};
\end{tikzpicture}
}
\subfloat[\scriptsize]{
\includegraphics[width=0.23\textwidth,trim={0cm 0cm 0cm 0cm}]{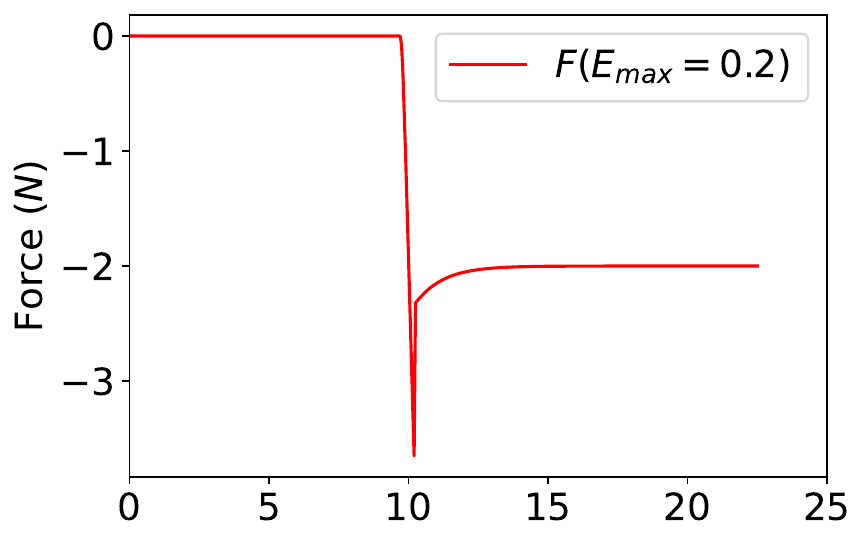}
}
\subfloat[\scriptsize]
{
\begin{tikzpicture}
\node[inner sep=0pt] (pica){
\includegraphics[trim={0cm 0cm 0cm 0cm,clip},width=0.23\textwidth]{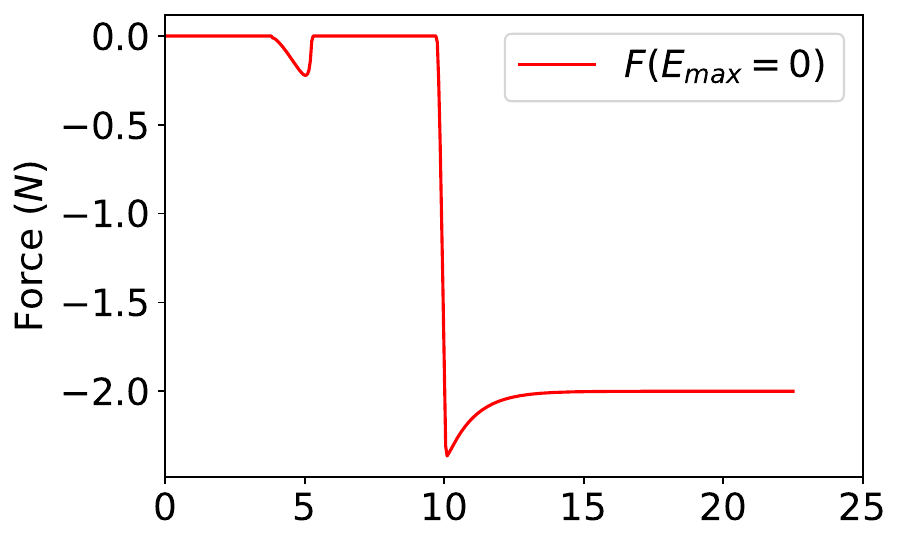}};
\end{tikzpicture}
}
\subfloat[\scriptsize]{
\includegraphics[width=0.23\textwidth,trim={0cm 0cm 0cm 0cm}]{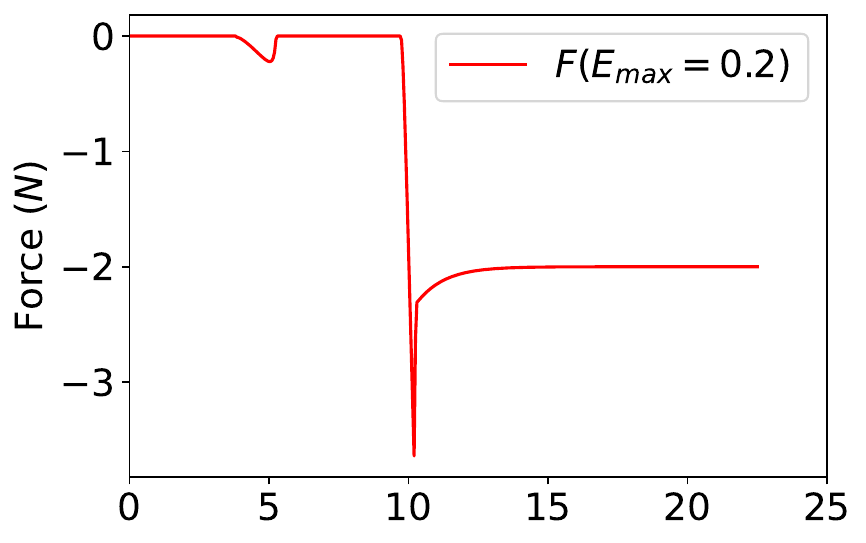}
}
\caption{\small Force feedback $F$ with different maximum energy limits $E_{max}$ (0 or 0.2) when there is an obstacle. (a) and (b) are results from SCF, (c) and (d) are from the JCF design.}
\label{Fig:Force_energy}
\end{figure*}

Fig.~\ref{Fig:Force_energy} shows the results of force feedback $F$ with respect to different maximum allowable energy in the designed energy tank. We find that different values of the maximum energy $E_{max}$ can result in different force feedback responses. For both designs, the peak magnitude of the force increases as the value of $E_{max}$ goes up, which aligns well with our previous findings in \citep{zhang2021stable}. Therefore, we can consider $E_{max}$ as a parameter to determine how aggressive the force will be in a very short time period when the UAV approaches the obstacle. Accordingly, $E_{max}$ can be designed and adjusted according to the user's preference; this is reminiscent of the impulsive force approach (initially proposed by \cite{salcudean1997emulation}), which can improve the perceived stiffness of the constraint.

Given the result of the one-dimensional experiment, we can summarize that
\begin{itemize}
    \item both designs can guarantee the safety and stability of the system;
    \item JCF introduces more modifications of the control input $u$ and provides an ``inertia-like" resistance force regardless of the value of $k_v$;
    \item for both SCF and JCF, a greater value of $E_{max}$ results in a sharper and higher peak of the force feedback.
\end{itemize}

\subsubsection{A comparison with the traditional passivity constraint}\label{sec:passivity}
Existing methods do not apply to our setup (see Table~\ref{table_summary}), and cannot be easily extended (since their design is tightly integrated with the assumption of HSC operations). Nevertheless, we implemented a baseline using the traditional passivity constraint \citep{khalil2002nonlinear}, which is one of the main principles used by previous work. Specifically, we replaced our $\cL_2$ stability constraint in the SCF design with the output passivity constraint:
\begin{equation}
    x_{vd}\transpose F \leq k_vx_v\transpose u + k\norm{F}^2.
\end{equation}
It is important to note that we use the same storage function as described in \eqref{eq:storage function}.
Fig.~\ref{Fig:passivity_j} indicates that the passivity constraint tends to be too conservative. For example, the force feedback after \unit[$15$]{s} is zero, while some response from the system would still be expected since the desired velocity $x_{vd}$ is directed toward the obstacle. In contrast, our $\cL_2$ constraint is less conservative, and thus provides more flexibility to design the force feedback, as shown in Fig.~\ref{Fig:SCF_obs_b} and Fig.~\ref{Fig:SCF_obs_f}.

\subsection{Two-dimensional experiment}

\subsubsection{Experimental setup}\label{simulation_setup} In this experiment, instead of using a preset input, a human user navigates the drone using a 3D Systems Touch Haptic Device, as shown in Fig.~\ref{Fig:setup}. To highlight the differences between the SCF and JCF approaches, we compare the responses of the two by replaying the same user input. The quadrotor UAV had a radius of \unit[$0.25$]{m} and is equipped with a forward-facing camera and a bottom-facing camera. The view from the cameras is displayed on a 24-inch computer monitor. The communication between the haptic interface and CoppeliaSim is performed via the Robot Operating System (ROS) middleware. The height of the UAV above the floor is fixed and the user only has control of the robot's horizontal position and yaw angle. The user controls the robot from a first-person perspective so that moving the stylus forward (i.e. away from the user) would result in a motion in the direction of the UAV's front-facing camera, moving the stylus to the left would result in the UAV banking left, and so on. The yaw angle of the UAV is controlled using the two buttons on the stylus of the haptic device, with one button commanding a counterclockwise rotation and the other commanding a clockwise rotation.
The stylus command is mapped to the UAV's commanded velocity $x_{vd}$ through a constant of \unit[$2$]{$\frac{m/s}{cm}$} (i.e. \unit[$1$]{$cm$} displacement corresponds to a desired velocity of \unit[$2$]{$m/s$}). The rate of yaw rotation is \unit[$\frac{\pi}{4}$]{rad/s} when a button is pressed. The supplementary video contains additional details on how the two-dimensional experiment is conducted.

In this experiment, we approximate the rectangle-shaped obstacles as super-ellipses \citep{barr1981}. Specifically, the control barrier function for each obstacle is of the form:
\begin{equation}
    h_i(x)= \left(\frac{x_{p1} - x_{o,i}}{a}\right)^{2a/r} + \left(\frac{x_{p2} - y_{o,i}}{b}\right)^{2b/r} -1,
\end{equation}
where $\smallbmat{x_{p1}\\x_{p2}}$ is the position of the UAV, $\smallbmat{x_{o,i}\\y_{o,i}}$ is the center position of the $i$-th obstacle in the two-dimensional setting. $2a$ and $2b$ represent the length and width of the approximated rectangle with a rounded corner radius $r$, respectively. In the implementation, we use $a = 4.5$, $b = 1.5$, and $r = 0.5$ for all three obstacles.

\subsubsection{Results and discussion}

\begin{figure*}[!ht]
\centering
\subfloat[\scriptsize SCF, $k_v = 1$]{\label{Fig:SCF_Kf_1}
\begin{tikzpicture}
\node[inner sep=0pt] (pica){
\includegraphics[trim={0cm 0cm 0cm 0cm,clip},width=0.45\textwidth]{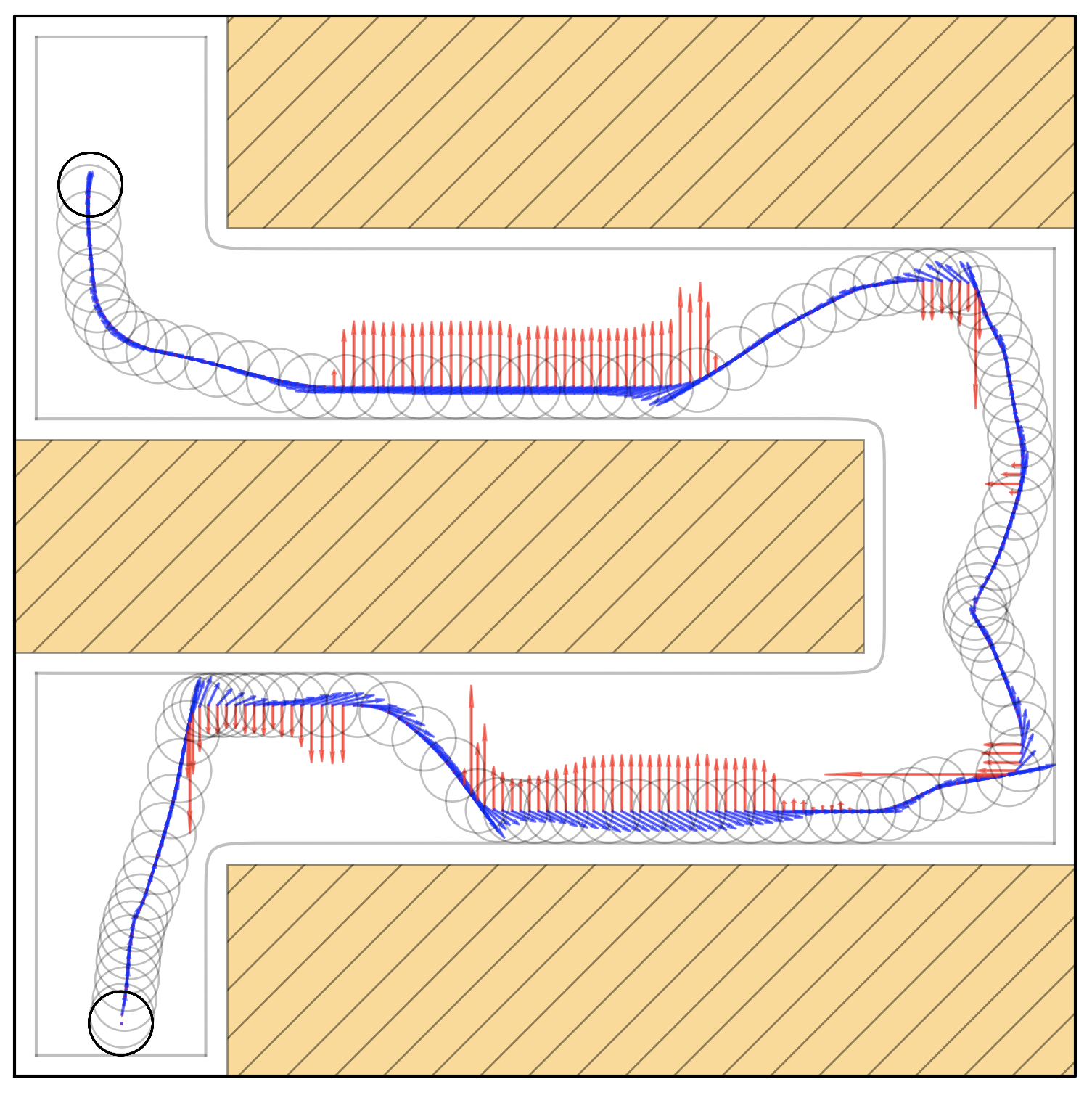}};
\draw[red, thick] (-0.2,-2) -- (1.2,-2)--(1.2, -1.3)node[above] {$K$}--(-0.2, -1.3)--(-0.2,-2);
\end{tikzpicture}
}
\subfloat[\scriptsize SCF, $k_v = 2$]{\label{Fig:SCF_Kf_2}
\includegraphics[width=0.45\textwidth,trim={0cm 0cm 0cm 0cm}]{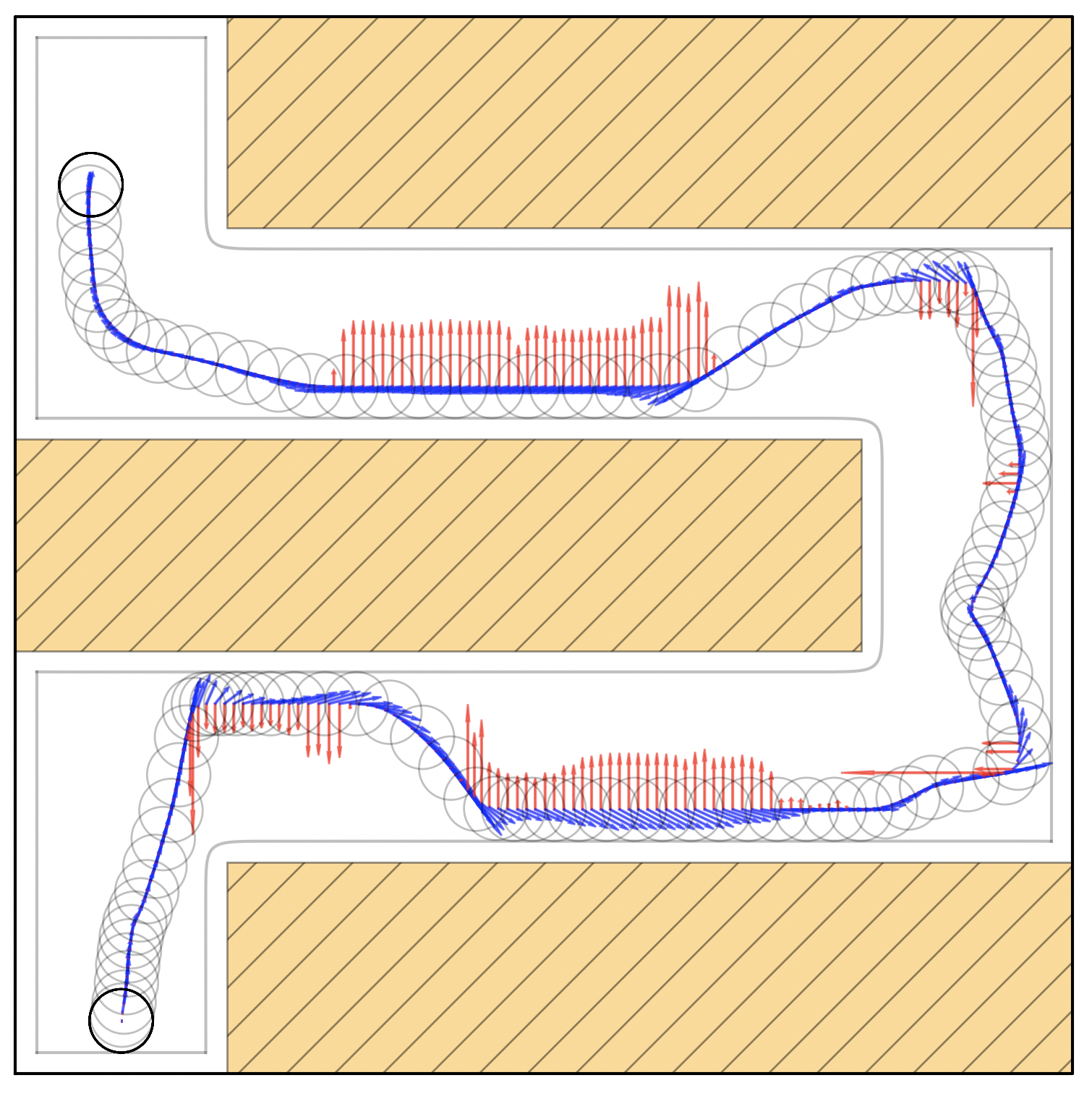}
}\\
\subfloat[\scriptsize JCF, $k_v = 1$]{\label{Fig:JCF_Kf_1}
\begin{tikzpicture}
\node[inner sep=0pt] (pica){
\includegraphics[trim={0cm 0cm 0cm 0cm,clip},width=0.45\textwidth]{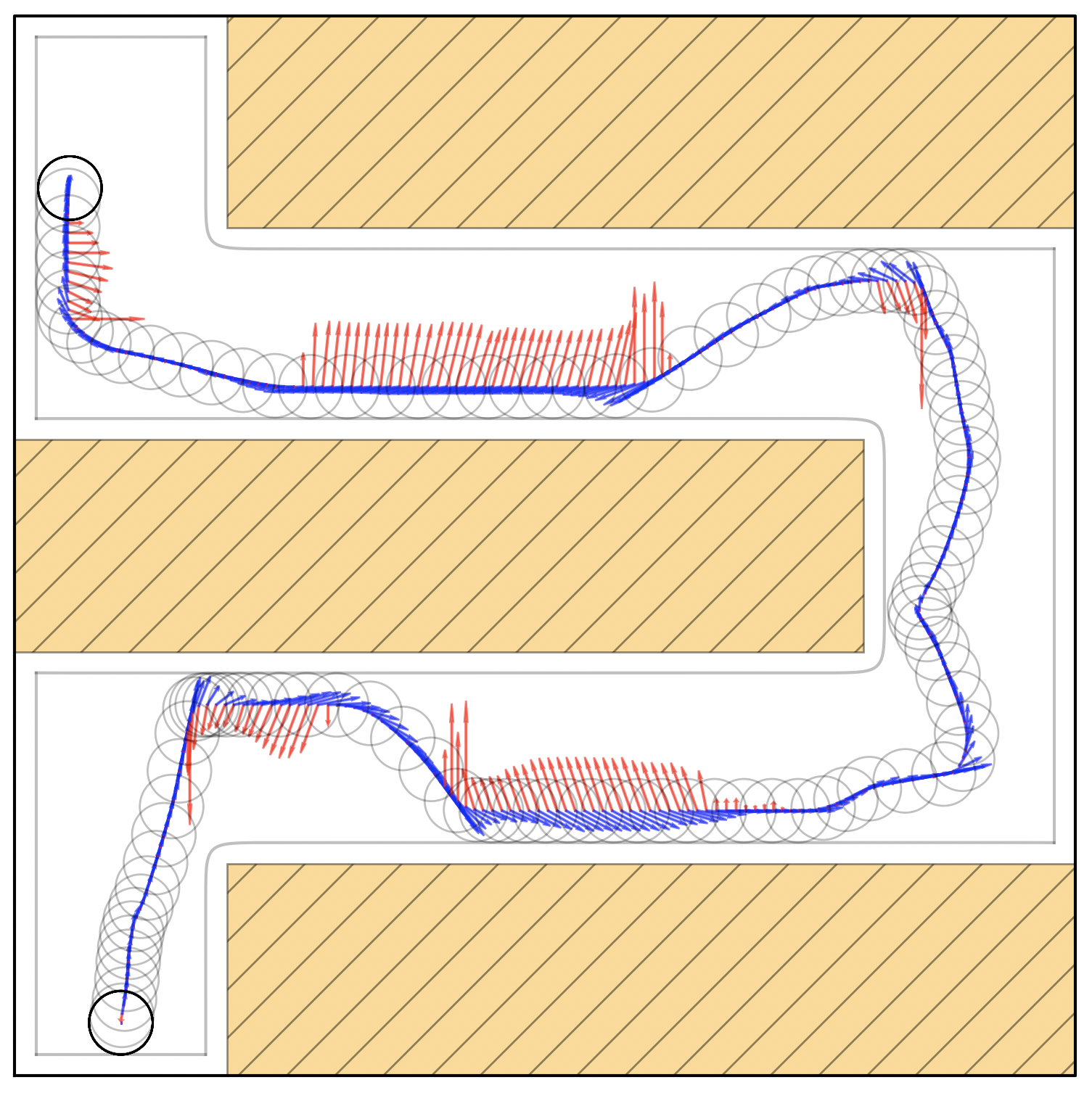}};
\draw[red, thick] (-0.2,-2) -- (1.2,-2)--(1.2, -1.3)node[above] {$L$}--(-0.2, -1.3)--(-0.2,-2);
\draw[red,thick,dashed](2.3,1.5) circle (0.7)node[right, xshift=0.6cm] {$N$};
\end{tikzpicture}
}
\subfloat[\scriptsize JCF, $k_v = 2$]{\label{Fig:JCF_Kf_2}
\begin{tikzpicture}
\node[inner sep=0pt] (pica){
\includegraphics[trim={0cm 0cm 0cm 0cm,clip},width=0.45\textwidth]{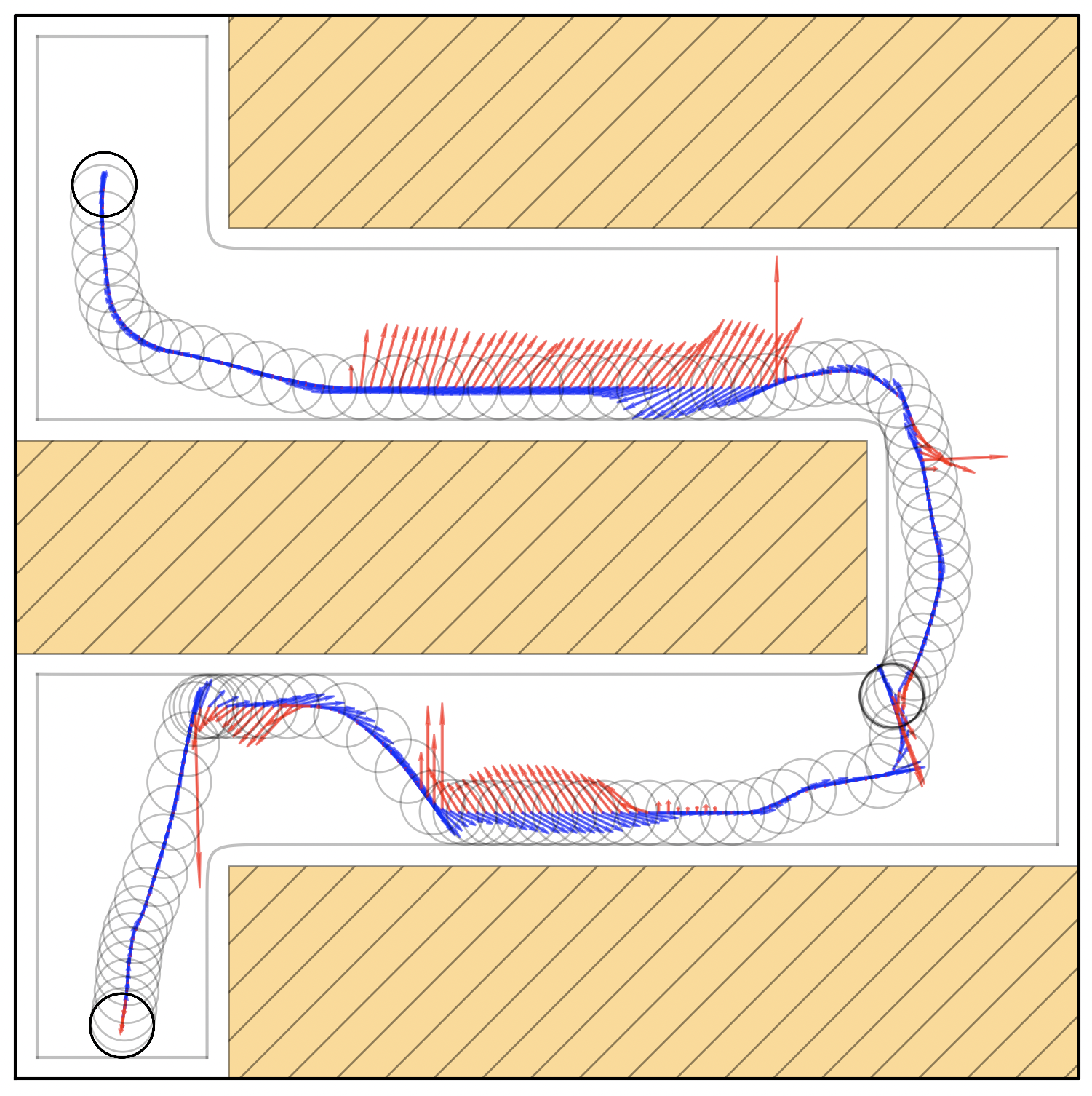}};
\draw[red, thick] (-0.4,-2) -- (0.8,-2)--(0.8, -1.3)node[above] {$M$}--(-0.4, -1.3)--(-0.4,-2);
\draw[red,thick,dashed](2.3,1.5) circle (0.7)node[right, xshift=0.6cm] {$O$};
\end{tikzpicture}
}
\caption{\small Results of the force feedback $F$ for SCF and JCF with respect to the same control input (i.e. same desired velocity from the user). Red boxes show the differences between the two designs.}
\label{Fig:2D_simulation}
\end{figure*}

Fig.~\ref{Fig:2D_simulation} shows the  results of the two-dimensional experiment. In Fig.~\ref{Fig:2D_simulation}, the yellow rectangles represent the obstacles, the gray circles reflect the shape of the UAV, and the red and blue arrows represent the user's control input and the force feedback, respectively. From Fig.~\ref{Fig:SCF_Kf_1} and Fig.~\ref{Fig:SCF_Kf_2}, we can find that there is no obvious difference when changing the value of $k_v$ under the SCF design. However, when comparing Fig.~\ref{Fig:JCF_Kf_1} and Fig.~\ref{Fig:JCF_Kf_2} with Fig.~\ref{Fig:SCF_Kf_1} (as indicated by red boxes K, L and M), we can observe that when applying JCF the force feedback is, in general, not perpendicular to the obstacle and a component of the force feedback is directed against the direction of the velocity. 
Comparing Fig.~\ref{Fig:JCF_Kf_1} and Fig.~\ref{Fig:JCF_Kf_2}, together with the fact that we are using the same user input, we can conclude that a greater value of $k_v$ introduces more significant changes to the trajectories of the UAV (e.g. the trajectories that are highlighted by dashed red circles N and O in the figures), therefore, resulting in different responses of the force feedback. 

Given the result of this two-dimensional experiment, we can conclude that
\begin{itemize}
    \item JCF generates obvious resistant force feedback as compared with SCF;
    \item  SCF follows the CBF-generated safe reference trajectory very closely, while JCF introduces more modifications to the trajectory; a greater value of $k_v$ results in larger modifications.
\end{itemize}
These results are consistent with the findings of the one-dimensional experiment.

\subsection{Hardware validation}\label{sec:hardware_exp}

To validate the viability of our proposed methodology, we execute the SCF technique on an actual quadrotor UAV. Our experiment consists of two trials, mirroring the one-dimensional study outlined in Sec.~\ref{subsec:1-d}. For the sake of simplicity, we affix the target to the center of the video frame in both trials, employing PID control exclusively in the x-axis (sway) and z-axis (heave). Consequently, force feedback and control signals are exclusively exerted in the forward and backward directions.

In the initial trial, the drone is directed toward the target using a predefined rate control signal, gradually increasing from $0$ to \unit[$20$]{$cm/s$}. In the subsequent trial, a human operator pilots the drone using a haptic interface, with the stylus's position on the interface being directly correlated to the rate control signal.

\subsubsection{Hardware setup}
 In this physical implementation, we solve the optimization problem through a desktop computer (8-core, 16GB RAM, Ubuntu 16.04 system), which is the same as our simulation. A DJI Tello drone is employed in the experiment, as it can take velocity control commands, which is necessary in our control framework. The communication between the drone and the desktop is based on Tello's local Wi-Fi. The visual feedback to the user is displayed on a 24-inch monitor. For the perception, we use the AprilTag library \citep{AprilTag} to detect the tag and use the translation of the tag in the image frame as an estimation of the distance to the wall.
 The detailed experimental setup can be seen in Fig.~\ref{fig:hardware_setup}.
 \begin{figure}[ht]
  \centering
  \includegraphics[width=0.9\columnwidth,trim={0cm 0cm 0cm 0cm,clip}]{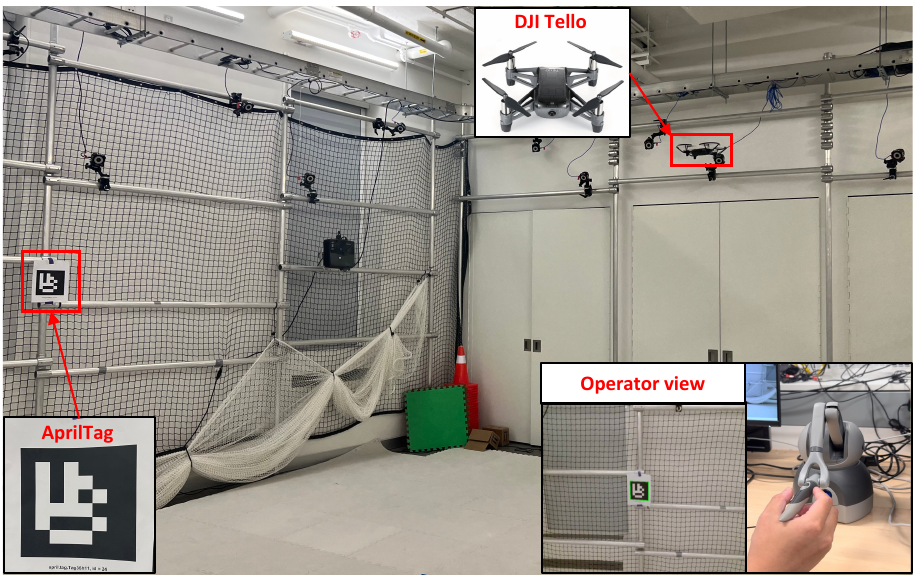}
  \caption{\small Hardware experimental setup.}
 \vspace{-5pt}
  \label{fig:hardware_setup}
\end{figure}
 In the trial of the human user control, a 3D System Touch is employed as the haptic interface, which is also the same setup as described in \ref{simulation_setup}.
 \subsubsection{Results and discussion}
 \begin{figure*}[ht]
\centering
\subfloat[\scriptsize]
{
\begin{tikzpicture}
\node[inner sep=0pt] (pica){
\includegraphics[trim={0cm 0cm 0cm 0cm,clip},width=0.23\textwidth]{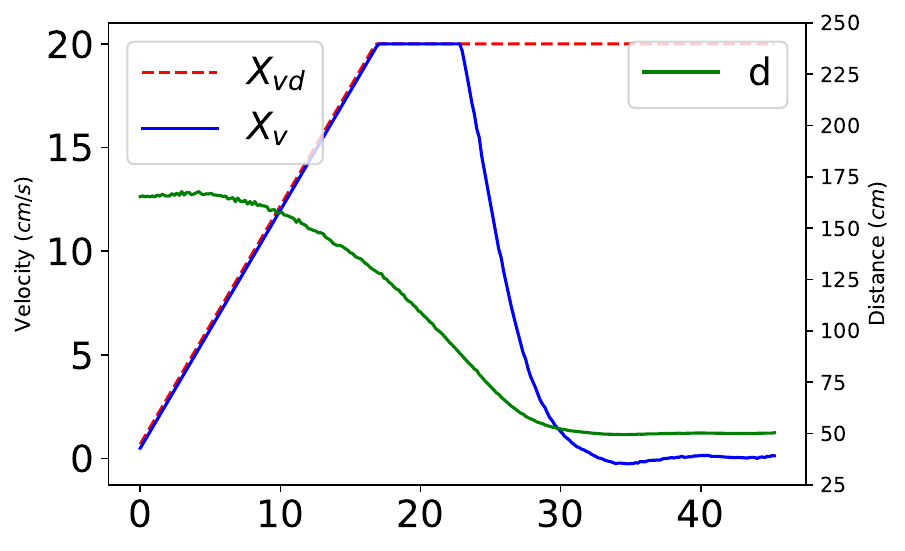}};
\end{tikzpicture}
}
\subfloat[\scriptsize]{
\begin{tikzpicture}
\node[inner sep=0pt] (b){\label{Fig:hardware_rate_b}
\includegraphics[width=0.23\textwidth,trim={0cm 0cm 0cm 0cm}]{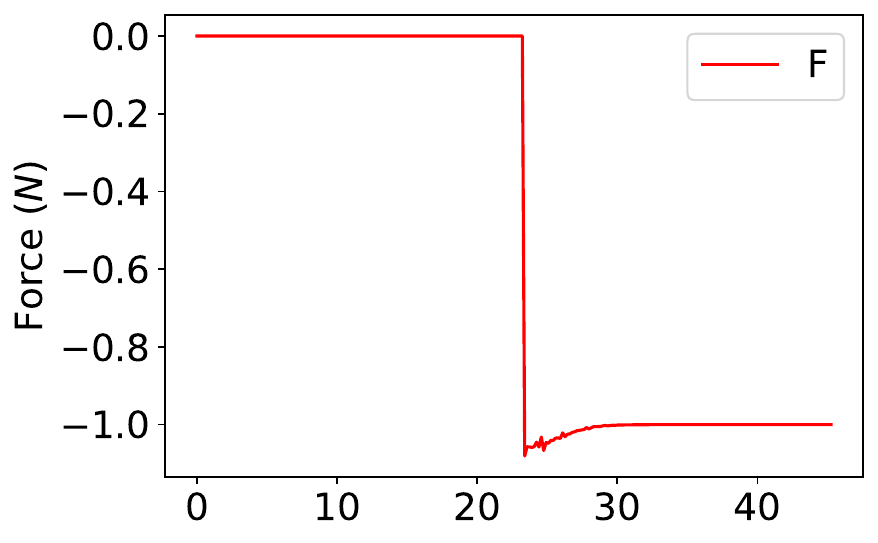}};
\draw[red, thick] (0.2,-0.85) -- (0.7,-0.85)--(0.7, -0.3)node[above] {$P$}--(0.2, -0.3)--(0.2,-0.85);
\end{tikzpicture}

}
\subfloat[\scriptsize]
{
\begin{tikzpicture}
\node[inner sep=0pt] (pica){
\includegraphics[trim={0cm 0cm 0cm 0cm,clip},width=0.23\textwidth]{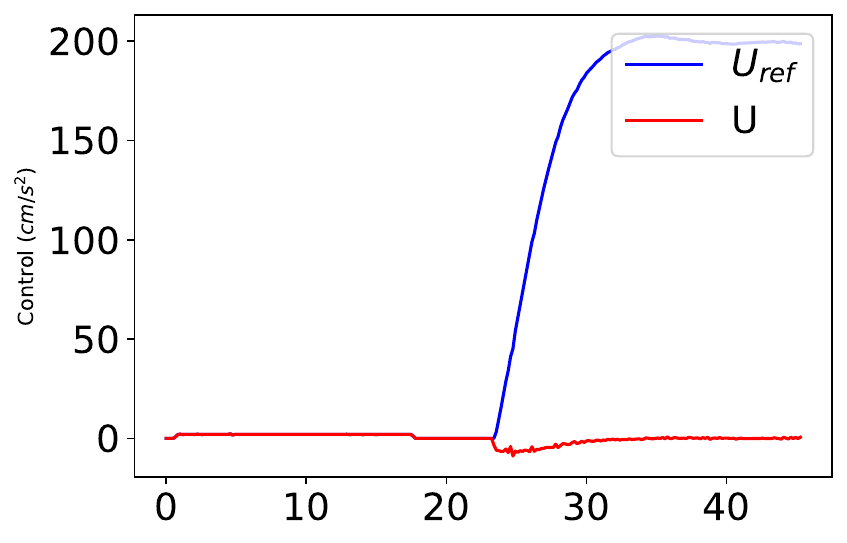}};
\end{tikzpicture}
}
\subfloat[\scriptsize]{
\includegraphics[width=0.23\textwidth,trim={0cm 0cm 0cm 0cm}]{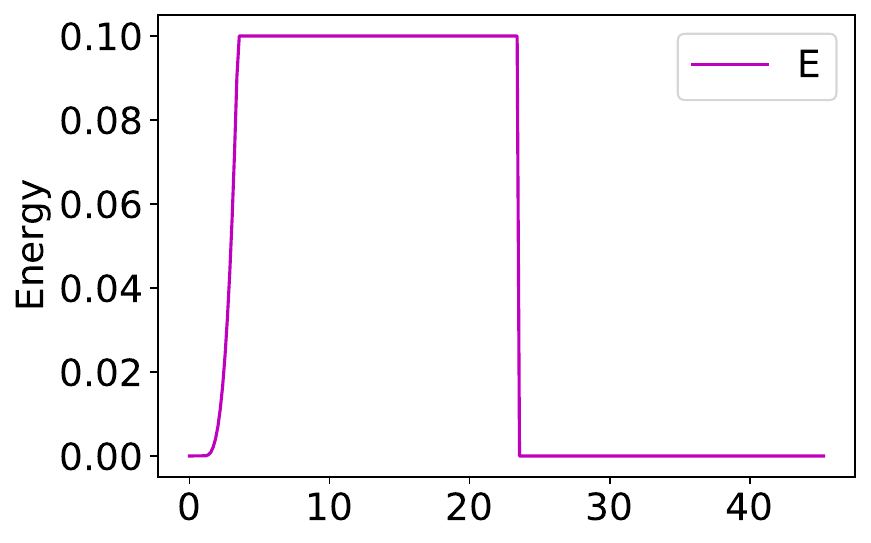}
}
\caption{\small Results of the hardware implementation from the predefined rate control signal.}
\label{Fig:hardware_rate}
\vspace{-1cm}
\end{figure*}
 \begin{figure*}[ht]
\centering
\subfloat[\scriptsize]
{
\begin{tikzpicture}
\node[inner sep=0pt] (pica){
\includegraphics[trim={0cm 0cm 0cm 0cm,clip},width=0.23\textwidth]{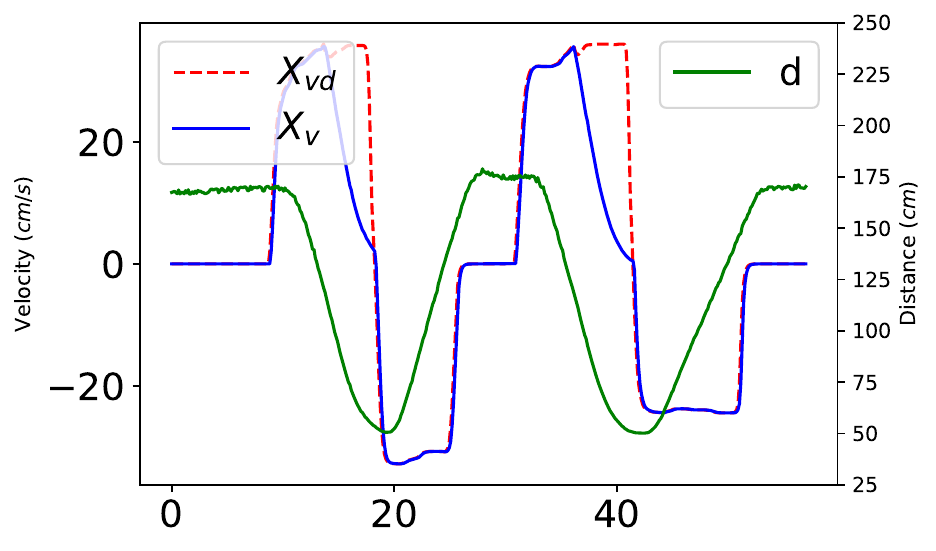}};
\draw[red, thick] (0.4,0) -- (0.7,0)node[right] {$Q$}--(0.7, 0.95)--(0.4, 0.95)--(0.4, 0);
\end{tikzpicture}
}
\subfloat[\scriptsize]{
\includegraphics[width=0.23\textwidth,trim={0cm 0cm 0cm 0cm}]{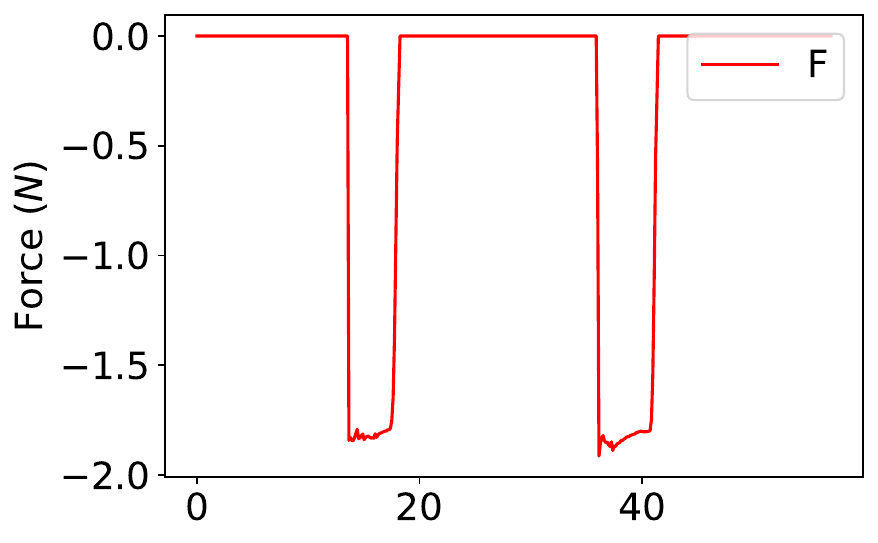}
}
\subfloat[\scriptsize]
{
\begin{tikzpicture}
\node[inner sep=0pt] (pica){
\includegraphics[trim={0cm 0cm 0cm 0cm,clip},width=0.23\textwidth]{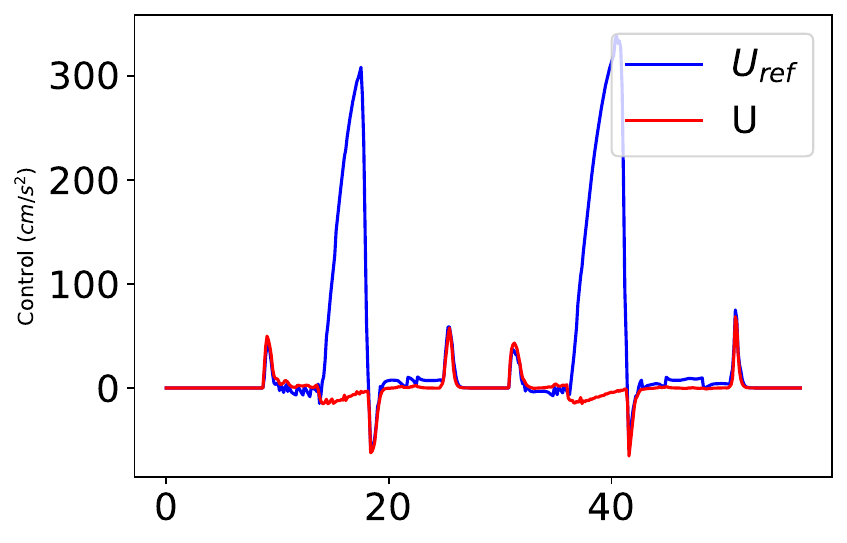}};
\end{tikzpicture}
}
\subfloat[\scriptsize]{
\includegraphics[width=0.23\textwidth,trim={0cm 0cm 0cm 0cm}]{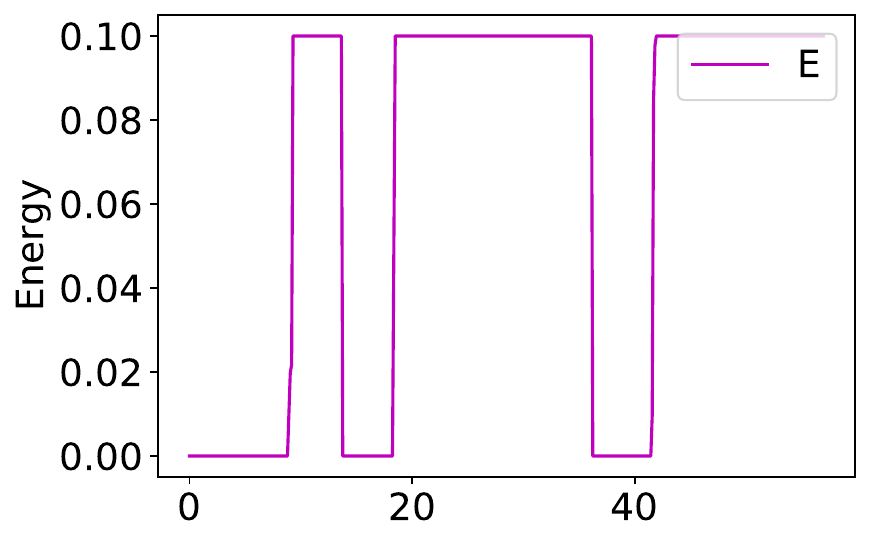}
}
\caption{\small Results of the hardware implementation from the user's control signal using a haptic interface.}
\label{Fig:hardware_user}
\end{figure*}
 For both trials, we record the states of the UAV, the generated force feedback $F$, the generated control $u$ and the reference control $u_{\text{ref}}$, and the system energy $E$.
 Fig.~\ref{Fig:hardware_rate} shows the results for the trial of the predefined rate control and Fig.~\ref{Fig:hardware_user} shows the results obtained from the user's direct control commands. We can observe that for both trials, the results have the same properties as summarized in our one-dimensional simulations. For instance, the force reaches a peak, then goes to a constant as the drone approaches the wall, which is highlighted by the red box P in Fig.~\ref{Fig:hardware_rate}. Additionally, as the user pilots the drone to the wall, the velocity of the drone gradually decreases to zero, ensuring safety (as indicated by the red box Q in Fig.~\ref{Fig:hardware_user}). During the experiment, we found that the delays from the communications and the robot's actions didn't affect the performance of our approach as long as the parameters in the method were properly adjusted. For example, the state estimation of the drone can be improved by tuning the refresh rate of the control loop. These observations affirm the practicality of our system assumptions, leading to the conclusion that the proposed approaches are indeed feasible for real-world implementation.

\section{Conclusions and Future work}
In this paper, we focus on both safety and stability of a haptic teleoperation system. We propose two novel approaches, SCF and JCF, that design the control command $u$ and force feedback $F$ under a Haptic Shared Autonomy (HSA) framework,  which is different from the Haptic Shared Control (HSC) in our previous work \citep{zhang2021stable}.  In both SCF and JCF, we use Control Barrier Function (CBF) constraints to generate a control $u_{\text{cbf}}$ safely; we achieve stability by limiting the force $F$ through a condition that implies a small $\cL_2$-gain approach. The effectiveness of the proposed designs is rigorously evaluated through experimental simulations. In these experiments, we describe the differences between the two designs concerning the responses to the same user commands, so that the system's responses can be designed by considering the properties of each method. Additionally, practical feasibility is demonstrated through testing on a physical quadrotor UAV, showcasing the applicability and viability of the proposed methods in real-world implementation.

The current work is developed on the framework of pre-defined CBFs. In the future, we will further investigate the application of our method on physical UAVs using CBFs that are adaptive to varying scenarios.

\begin{appendices}

\section {The quadratic constraint is not active}\label{appendix_1}
In the case that the quadratic constraint is not active, the Lagrangian of the constrained problem is 
\begin{multline}
    L(u,\lambda)=\norm{u -u_{\textrm{ref}}}^{2}\\
    +\lambda(x\transpose_v\partial^2_{x_p} h x_v + (\partial_{x_p} h)\transpose u+ k_1h + k_2(\partial_{x_p} h)\transpose x_v)
\end{multline}

We need to make $\nabla_{u,\lambda} L =0$. Taking the derivatives with respect to $u$ and $\lambda$, we get:
\begin{equation}
    \nabla_u L = 2(u-u_{\textrm{ref}}) + \partial_{x_p} h \lambda
\end{equation}
and
\begin{equation}
    \nabla_\lambda L = x\transpose_v\partial^2_{x_p} h x_v+ (\partial_{x_p} h)\transpose u+ k_1h + k_2(\partial_{x_p} h)\transpose x_v
\end{equation}
By calculating $\nabla_u L =0$ and $\nabla_\lambda L =0$, we can get $\lambda$:
\begin{multline}
     \lambda = \\ \frac{2(\partial_{x_p} h)\transpose u_{\textrm{ref}}+x\transpose_v\partial^2_{x_p} h x_v + k_1h + k_2(\partial_{x_p} h)\transpose x_v}{(\partial_{x_p} h)\transpose \partial_{x_p} h}
\end{multline}
   
then we can get $u$:
\begin{multline}
    u = (I-\frac{\partial_{x_p} h(\partial_{x_p} h)\transpose}{\norm{\partial_{x_p} h}^2})u_{\textrm{ref}}\\ - \frac{(x\transpose_v\partial^2_{x_p} h x_v + k_1h + k_2(\partial_{x_p} h)\transpose x_v)}{\norm{\partial_{x_p} h}^2}\partial_{x_p}h 
\end{multline}

\section{Only the quadratic constraint is active}\label{appendix_2}
The Lagrangian of the constrained problem is 
\begin{multline}
    L(F,u,\lambda) = \norm{u -u_{\textrm{ref}}}^2 + \norm{F-(u-u_{\textrm{ref}})}^2\\ +
    \lambda (\norm{F}^2 - \frac{1}{k^2}( \frac{2kE}{\Delta_t} +x_{vd}\transpose x_{vd} -2kk_vx_v\transpose u))\\
    = 2\norm{u-u_{\textrm{ref}}}^2 + (1+\lambda)\norm{F}^2 - 2F\transpose(u -u_{\textrm{ref}})\\-
    \lambda\frac{1}{k^2}( \frac{2kE}{\Delta_t} +x_{vd}\transpose x_{vd}) +
\lambda \frac{2k_v}{k}x_v\transpose u
\end{multline}

Take the derivatives with respect to $u$, $F$ and $\lambda$, we get:
\begin{equation}
    \nabla_u L = 4(u -u_{\textrm{ref}})-2F+\frac{2k_v}{k}x_v \lambda,
\end{equation}
\begin{equation}
    \nabla_F L = 2(1+\lambda)F - 2(u-u_{\textrm{ref}}),
\end{equation}
\begin{equation}
    \nabla_\lambda L = \norm{F}^2 - \frac{1}{k^2}( \frac{2kE}{\Delta_t} +x_{vd}\transpose x_{vd}) + \frac{2k_v}{k}x_v\transpose u,
\end{equation}
then we can solve $u$, $F$, and $\lambda$.

To simplify the problem, we make $C_1 =\frac{1}{k^2}(\frac{2kE}{\Delta_t}+x_{vd}\transpose x_{vd})$ and $C_2 = \frac{2k_v}{k}x_v $, then we have:
\begin{equation}
    4(u -u_{\textrm{ref}})-2F+C_2 \lambda=0
\end{equation}
\begin{equation}
     (1+\lambda)F - (u-u_{\textrm{ref}}) = 0
\end{equation}   
\begin{equation}\label{34}
    F\transpose F -C_1 +C_2u = 0
\end{equation}
the we can get:
\begin{equation}
   F = \frac{-C_2\lambda}{2+4\lambda}
\end{equation}
\begin{equation}
   u = u_{\textrm{ref}} - \frac{C_2\lambda(1+\lambda)}{2+4\lambda}
\end{equation}
we substitute $F$ and $u$ to \eqref{34}, then we get:
\begin{equation}
    (\frac{C_2\lambda}{2+4\lambda})^2 -C_1 + C_2(u_{\textrm{ref}} - \frac{C_2\lambda(1+\lambda)}{2+4\lambda})=0
\end{equation}
\begin{multline}
     4C_2^2\lambda^3 + (5C_2^2 + 16C_1 -16C_2u_{\textrm{ref}})\lambda^2 \\
    (2C_2^2 + 16C_1 -16C_2u_{\textrm{ref}})\lambda \\ + 4(C_1 -C_2u_{\textrm{ref}})=0
\end{multline}
This results in a third order polynomial equation, we can solve $\lambda$ and
then obtain $F$ and $u$. We should note that $2+4\lambda= 0$ is a special case of our method using the Lagrangian Multiplier. When $x_2=0$, we have $u = u_{\textrm{ref}}$, and $F=0$. When $x_v \neq 0$, it has no solutions for $F$ and $u$.

\section {The quadratic constraint and one linear constraint are active}\label{appendix_3}
In this case, we can also use the Lagrangian multiplier to solve this problem. $u'_{\textrm{ref}}$, $u''_{\textrm{ref}}$ and $u_\parallel$ are all known parameters. Then we have:

\begin{multline}
    L(u', F', F'', \lambda) = \norm{ u' - u'_{\textrm{ref}}}^2 + \norm{F'-( u' -
u'_{\textrm{ref}})}^2\\ + \norm{F''-( u_\parallel-u''_{\textrm{ref}}) }^2+ 
    \lambda(\norm{F'}^2+\norm{F''}^2 - \\ \frac{1}{k^2}( \frac{2kE}{\Delta_t} +x_{vd}\transpose x_{vd} -2kk_vx_v\transpose (\partial_{x_p}h u_\parallel +U_\bot u')))
\end{multline}

Take the derivatives with respect to $u'$, $F'$, $F''$ and $\lambda$, we get:
\begin{equation}
    \nabla_{u'} L = 4(u' -u'_{\textrm{ref}})-2F'+\frac{2k_v}{k}x_v\transpose U_\bot \lambda,
\end{equation}
\begin{equation}
    \nabla_{F'} L = 2(1+\lambda)F' - 2(u'-u'_{\textrm{ref}}),
\end{equation}
\begin{equation}
    \nabla_{F''} L = 2(1+\lambda)F'' - 2(u_\parallel-u''_{\textrm{ref}}),
\end{equation}
\begin{multline}
    \nabla_\lambda L = \norm{F'}^2+\norm{F''}^2 - \frac{1}{k^2}( \frac{2kE}{\Delta_t} +x_{2d}\transpose x_{vd} \\-2kk_vx_v\transpose (\partial_{x_p}h u_\parallel +U_\bot u')),
\end{multline}
then we make:
\begin{equation}
    4(u' -u'_{\textrm{ref}})-2F'+\frac{2k_v}{k}x_v\transpose U_\bot\lambda =0
\end{equation}
\begin{equation}
    2(1+\lambda)F' - 2(u'-u'_{\textrm{ref}}) =0
\end{equation}
\begin{equation}
 2(1+\lambda)F'' - 2(u_\parallel-u''_{ref})=0
\end{equation}
\begin{multline}\label{43}
    \norm{F'}^2+\norm{F''}^2 - \frac{1}{k^2}( \frac{2kE}{\Delta_t} +x_{vd}\transpose x_{vd} \\-2kk_vx_v\transpose (\partial_{x_p}h u_\parallel +U_\bot u'))=0
\end{multline}
then we can solve $u'$, $F'$, $F''$ and $\lambda$. We make $C_1 =\frac{1}{k^2}(\frac{2kE}{\Delta_t}+x_{vd}\transpose x_{vd})$ and $C_2 = \frac{2k_v}{k}x_2\transpose U_\bot $, then we have:
\begin{equation}
F' = \frac{-C_2\lambda}{2+4\lambda}
\end{equation}
\begin{equation}
    F'' = \frac{u_\parallel-u''_{\textrm{ref}}}{1+\lambda}
\end{equation}
\begin{equation}
    u'= u'_{\textrm{ref}} - \frac{C_2\lambda(1+\lambda)}{2+4\lambda}
\end{equation}
then we substitute $u'$, $F'$ and $F''$ to \eqref{43}, we have:
\begin{multline}
    (\frac{-C_2\lambda}{2+4\lambda})^2 + (\frac{u_\parallel-u''_{\textrm{ref}}}{1+\lambda})^2
    -C_1 + \frac{2k_v}{k}x_v\transpose \partial_{x_p}h u_\parallel +\\ C_2(u'_{\textrm{ref}} - \frac{C_2\lambda(1+\lambda)}{2+4\lambda}) =0
\end{multline}
this will form a 5th-order polynomial equation, we can solve $\lambda$. Here we make $C_3= (u_\parallel-u''_{\textrm{ref}})^2$, $C_4 = \frac{2k_v}{k}x_v\transpose \partial_{x_p}h u_\parallel -C_1 + C_2u'_{\textrm{ref}}$, then we have:
\begin{multline}
    4C_2^2\lambda^5 + (13C_2^2 -16C_4)\lambda^4 + (16C_2^2 -48C_4)\lambda^3\\ +(9C_2^2 -16C_3 -52C_4)\lambda^2\\ + (2C_2^2 -16C_3 -24C_4)\lambda -4C_3-4C_4 = 0
\end{multline}

We note that $2+4\lambda= 0$ and $1+\lambda=0$ are special cases of our method using the Lagrangian Multiplier. When $2+4\lambda= 0$ and $x_v=0$, we have $F'=0$, $F''= 2(u_\parallel-u''_{\textrm{ref}})$ and $u'= u'_{\textrm{ref}}$. When $2+4\lambda= 0$ and $x_v \neq 0$, there are no solutions; hence we can deduce that $x_v\neq 0\implies \lambda\neq 0.5$. When $1+ \lambda = 0$, it has solutions only if $u = u_{\textrm{ref}}$, $F' = -\frac{k_v}{k}x_v\transpose
U_\bot $.

\end{appendices}

\bibliography{biblio/IEEEfull,biblio/IEEEConfFull,biblio/OtherFull,biblio/references, biblio/haptics}

\begin{thebibliography}{}
\renewcommand{\doi}[1]{\url{https://doi.org/#1}}
\bibcommenthead

\bibitem [\protect \citeauthoryear {%
Abbott%
, Marayong%
\BCBL {}\ \BBA {} Okamura%
}{%
Abbott%
\ \protect \BOthers {.}}{%
{\protect \APACyear {2007}}%
}]{%
abbott2007haptic}
\APACinsertmetastar {%
abbott2007haptic}%
\begin{APACrefauthors}%
Abbott, J.J.%
, Marayong, P.%
\BCBL {} Okamura, A.M.%
\end{APACrefauthors}%
\unskip\
\newblock
\APACrefYearMonthDay{2007}{}{}.
\newblock
{\BBOQ}\APACrefatitle {Haptic virtual fixtures for robot-assisted manipulation} {Haptic virtual fixtures for robot-assisted manipulation}.{\BBCQ}
\newblock
 \APACrefbtitle {Robotics research} {Robotics research}\ (\BPGS\ 49--64).
\newblock
\APACaddressPublisher{}{Springer}.
\PrintBackRefs{\CurrentBib}

\bibitem [\protect \citeauthoryear {%
Abbott%
\ \BBA {} Okamura%
}{%
Abbott%
\ \BBA {} Okamura%
}{%
{\protect \APACyear {2003}}%
}]{%
abbott2003analysis}
\APACinsertmetastar {%
abbott2003analysis}%
\begin{APACrefauthors}%
Abbott, J.J.%
\BCBT {}\ \BBA {} Okamura, A.M.%
\end{APACrefauthors}%
\unskip\
\newblock
\APACrefYearMonthDay{2003}{}{}.
\newblock
{\BBOQ}\APACrefatitle {Analysis of virtual fixture contact stability for telemanipulation} {Analysis of virtual fixture contact stability for telemanipulation}.{\BBCQ}
\newblock
 \APACrefbtitle {{IEEE} International Conference on Robotics and Automation} {{IEEE} international conference on robotics and automation}\ (\BVOL~3, \BPGS\ 2699--2706).
\PrintBackRefs{\CurrentBib}

\bibitem [\protect \citeauthoryear {%
Abbott%
\ \BBA {} Okamura%
}{%
Abbott%
\ \BBA {} Okamura%
}{%
{\protect \APACyear {2005}}%
}]{%
abbott2006stable}
\APACinsertmetastar {%
abbott2006stable}%
\begin{APACrefauthors}%
Abbott, J.J.%
\BCBT {}\ \BBA {} Okamura, A.M.%
\end{APACrefauthors}%
\unskip\
\newblock
\APACrefYearMonthDay{2005}{}{}.
\newblock
{\BBOQ}\APACrefatitle {Stable forbidden-region virtual fixtures for bilateral telemanipulation} {Stable forbidden-region virtual fixtures for bilateral telemanipulation}.{\BBCQ}
\newblock
\APACjournalVolNumPages{Journal of Dynamic Systems, Measurement, and Control}{128}{1}{53-64,}
\newblock

\newblock

\PrintBackRefs{\CurrentBib}

\bibitem [\protect \citeauthoryear {%
Adams%
\ \BBA {} Hannaford%
}{%
Adams%
\ \BBA {} Hannaford%
}{%
{\protect \APACyear {1999}}%
}]{%
adams1999stable}
\APACinsertmetastar {%
adams1999stable}%
\begin{APACrefauthors}%
Adams, R.J.%
\BCBT {}\ \BBA {} Hannaford, B.%
\end{APACrefauthors}%
\unskip\
\newblock
\APACrefYearMonthDay{1999}{}{}.
\newblock
{\BBOQ}\APACrefatitle {Stable haptic interaction with virtual environments} {Stable haptic interaction with virtual environments}.{\BBCQ}
\newblock
\APACjournalVolNumPages{{IEEE} Transactions on Robotics and Automation}{15}{3}{465--474,}
\newblock

\newblock

\PrintBackRefs{\CurrentBib}

\bibitem [\protect \citeauthoryear {%
Ames%
\ \protect \BOthers {.}}{%
Ames%
\ \protect \BOthers {.}}{%
{\protect \APACyear {2019}}%
}]{%
Ames2019}
\APACinsertmetastar {%
Ames2019}%
\begin{APACrefauthors}%
Ames, A.D.%
, Coogan, S.%
, Egerstedt, M.%
, Notomista, G.%
, Sreenath, K.%
\BCBL {} Tabuada, P.%
\end{APACrefauthors}%
\unskip\
\newblock
\APACrefYearMonthDay{2019}{}{}.
\newblock
{\BBOQ}\APACrefatitle {Control barrier functions: Theory and applications} {Control barrier functions: Theory and applications}.{\BBCQ}
\newblock
 \APACrefbtitle {European Control Conference (ECC)} {European control conference (ecc)}\ (\BPGS\ 3420--3431).
\PrintBackRefs{\CurrentBib}

\bibitem [\protect \citeauthoryear {%
Ames%
, Grizzle%
\BCBL {}\ \BBA {} Tabuada%
}{%
Ames%
\ \protect \BOthers {.}}{%
{\protect \APACyear {2014}}%
}]{%
Ames2014}
\APACinsertmetastar {%
Ames2014}%
\begin{APACrefauthors}%
Ames, A.D.%
, Grizzle, J.W.%
\BCBL {} Tabuada, P.%
\end{APACrefauthors}%
\unskip\
\newblock
\APACrefYearMonthDay{2014}{}{}.
\newblock
{\BBOQ}\APACrefatitle {Control barrier function based quadratic programs with application to adaptive cruise control} {Control barrier function based quadratic programs with application to adaptive cruise control}.{\BBCQ}
\newblock
 \APACrefbtitle {53rd IEEE Conference on Decision and Control} {53rd ieee conference on decision and control}\ (\BPGS\ 6271--6278).
\PrintBackRefs{\CurrentBib}

\bibitem [\protect \citeauthoryear {%
Barr%
}{%
Barr%
}{%
{\protect \APACyear {1981}}%
}]{%
barr1981}
\APACinsertmetastar {%
barr1981}%
\begin{APACrefauthors}%
Barr, A.H.%
\end{APACrefauthors}%
\unskip\
\newblock
\APACrefYearMonthDay{1981}{}{}.
\newblock
{\BBOQ}\APACrefatitle {Superquadrics and angle-preserving transformations} {Superquadrics and angle-preserving transformations}.{\BBCQ}
\newblock
\APACjournalVolNumPages{IEEE Computer graphics and Applications}{1}{1}{11--23,}
\newblock

\newblock

\PrintBackRefs{\CurrentBib}

\bibitem [\protect \citeauthoryear {%
Bertsekas%
}{%
Bertsekas%
}{%
{\protect \APACyear {1999}}%
}]{%
Bertsekas99}
\APACinsertmetastar {%
Bertsekas99}%
\begin{APACrefauthors}%
Bertsekas, D.%
\end{APACrefauthors}%
\unskip\
\newblock
\APACrefYear{1999}.
\newblock
\APACrefbtitle {Nonlinear Programming} {Nonlinear programming}.
\newblock
\APACaddressPublisher{}{Athena Scientific}.
\PrintBackRefs{\CurrentBib}

\bibitem [\protect \citeauthoryear {%
Bowyer%
, Davies%
\BCBL {}\ \BBA {} y Baena%
}{%
Bowyer%
\ \protect \BOthers {.}}{%
{\protect \APACyear {2013}}%
}]{%
bowyer2013active}
\APACinsertmetastar {%
bowyer2013active}%
\begin{APACrefauthors}%
Bowyer, S.A.%
, Davies, B.L.%
\BCBL {} y Baena, F.R.%
\end{APACrefauthors}%
\unskip\
\newblock
\APACrefYearMonthDay{2013}{}{}.
\newblock
{\BBOQ}\APACrefatitle {Active constraints/virtual fixtures: A survey} {Active constraints/virtual fixtures: A survey}.{\BBCQ}
\newblock
\APACjournalVolNumPages{{IEEE} Transactions on Robotics and Automation}{30}{1}{138--157,}
\newblock

\newblock

\PrintBackRefs{\CurrentBib}

\bibitem [\protect \citeauthoryear {%
Brandt%
\ \BBA {} Colton%
}{%
Brandt%
\ \BBA {} Colton%
}{%
{\protect \APACyear {2010}}%
}]{%
Brandt2010}
\APACinsertmetastar {%
Brandt2010}%
\begin{APACrefauthors}%
Brandt, A.M.%
\BCBT {}\ \BBA {} Colton, M.B.%
\end{APACrefauthors}%
\unskip\
\newblock
\APACrefYearMonthDay{2010}{}{}.
\newblock
{\BBOQ}\APACrefatitle {Haptic collision avoidance for a remotely operated quadrotor {UAV} in indoor environments} {Haptic collision avoidance for a remotely operated quadrotor {UAV} in indoor environments}.{\BBCQ}
\newblock
 \APACrefbtitle {IEEE Intl. Conf. on Systems Man and Cybernetics} {Ieee intl. conf. on systems man and cybernetics}\ (\BPGS\ 2724--2731).
\PrintBackRefs{\CurrentBib}

\bibitem [\protect \citeauthoryear {%
Broad%
, Murphey%
\BCBL {}\ \BBA {} Argall%
}{%
Broad%
\ \protect \BOthers {.}}{%
{\protect \APACyear {2018}}%
}]{%
broad2018learning}
\APACinsertmetastar {%
broad2018learning}%
\begin{APACrefauthors}%
Broad, A.%
, Murphey, T.%
\BCBL {} Argall, B.%
\end{APACrefauthors}%
\unskip\
\newblock
\APACrefYearMonthDay{2018}{}{}.
\newblock
{\BBOQ}\APACrefatitle {Learning models for shared control of human-machine systems with unknown dynamics} {Learning models for shared control of human-machine systems with unknown dynamics}.{\BBCQ}
\newblock
\APACjournalVolNumPages{Robotics: Science and Systems Proceedings}{}{}{,}
\newblock

\newblock

\PrintBackRefs{\CurrentBib}

\bibitem [\protect \citeauthoryear {%
Hannaford%
\ \BBA {} Ryu%
}{%
Hannaford%
\ \BBA {} Ryu%
}{%
{\protect \APACyear {2002}}%
}]{%
hannaford2002time}
\APACinsertmetastar {%
hannaford2002time}%
\begin{APACrefauthors}%
Hannaford, B.%
\BCBT {}\ \BBA {} Ryu, J\BHBI H.%
\end{APACrefauthors}%
\unskip\
\newblock
\APACrefYearMonthDay{2002}{}{}.
\newblock
{\BBOQ}\APACrefatitle {Time-domain passivity control of haptic interfaces} {Time-domain passivity control of haptic interfaces}.{\BBCQ}
\newblock
\APACjournalVolNumPages{{IEEE} Transactions on Robotics and Automation}{18}{1}{1--10,}
\newblock

\newblock

\PrintBackRefs{\CurrentBib}

\bibitem [\protect \citeauthoryear {%
Joly%
\ \BBA {} Andriot%
}{%
Joly%
\ \BBA {} Andriot%
}{%
{\protect \APACyear {1995}}%
}]{%
joly1995imposing}
\APACinsertmetastar {%
joly1995imposing}%
\begin{APACrefauthors}%
Joly, L.D.%
\BCBT {}\ \BBA {} Andriot, C.%
\end{APACrefauthors}%
\unskip\
\newblock
\APACrefYearMonthDay{1995}{}{}.
\newblock
{\BBOQ}\APACrefatitle {Imposing motion constraints to a force reflecting telerobot through real-time simulation of a virtual mechanism} {Imposing motion constraints to a force reflecting telerobot through real-time simulation of a virtual mechanism}.{\BBCQ}
\newblock
 \APACrefbtitle {{IEEE} International Conference on Robotics and Automation} {{IEEE} international conference on robotics and automation}\ (\BVOL~1, \BPGS\ 357--362).
\PrintBackRefs{\CurrentBib}

\bibitem [\protect \citeauthoryear {%
Khalil%
}{%
Khalil%
}{%
{\protect \APACyear {2002}}%
}]{%
khalil2002nonlinear}
\APACinsertmetastar {%
khalil2002nonlinear}%
\begin{APACrefauthors}%
Khalil, H.K.%
\end{APACrefauthors}%
\unskip\
\newblock
\APACrefYear{2002}.
\newblock
\APACrefbtitle {Nonlinear systems} {Nonlinear systems}\ (\BVOL~3).
\newblock
\APACaddressPublisher{}{Prentice hall Upper Saddle River}.
\PrintBackRefs{\CurrentBib}

\bibitem [\protect \citeauthoryear {%
Kosuge%
, Itoh%
, Fukuda%
\BCBL {}\ \BBA {} Otsuka%
}{%
Kosuge%
\ \protect \BOthers {.}}{%
{\protect \APACyear {1995}}%
}]{%
kosuge1995tele}
\APACinsertmetastar {%
kosuge1995tele}%
\begin{APACrefauthors}%
Kosuge, K.%
, Itoh, T.%
, Fukuda, T.%
\BCBL {} Otsuka, M.%
\end{APACrefauthors}%
\unskip\
\newblock
\APACrefYearMonthDay{1995}{}{}.
\newblock
{\BBOQ}\APACrefatitle {Tele-manipulation system based on task-oriented virtual tool} {Tele-manipulation system based on task-oriented virtual tool}.{\BBCQ}
\newblock
 \APACrefbtitle {{IEEE} International Conference on Robotics and Automation} {{IEEE} international conference on robotics and automation}\ (\BVOL~1, \BPGS\ 351--356).
\PrintBackRefs{\CurrentBib}

\bibitem [\protect \citeauthoryear {%
Lam%
, Boschloo%
, Mulder%
\BCBL {}\ \BBA {} Van~Paassen%
}{%
Lam%
\ \protect \BOthers {.}}{%
{\protect \APACyear {2009}}%
}]{%
Lam2009}
\APACinsertmetastar {%
Lam2009}%
\begin{APACrefauthors}%
Lam, T.M.%
, Boschloo, H.W.%
, Mulder, M.%
\BCBL {} Van~Paassen, M.M.%
\end{APACrefauthors}%
\unskip\
\newblock
\APACrefYearMonthDay{2009}{}{}.
\newblock
{\BBOQ}\APACrefatitle {Artificial force field for haptic feedback in {UAV} teleoperation} {Artificial force field for haptic feedback in {UAV} teleoperation}.{\BBCQ}
\newblock
\APACjournalVolNumPages{IEEE Trans. on Systems, Man, and Cybernetics-Part A: Systems and Humans}{39}{6}{1316--1330,}
\newblock

\newblock

\PrintBackRefs{\CurrentBib}

\bibitem [\protect \citeauthoryear {%
Lam%
, Mulder%
\BCBL {}\ \BBA {} Van~Paassen%
}{%
Lam%
\ \protect \BOthers {.}}{%
{\protect \APACyear {2007}}%
}]{%
lam2007collision}
\APACinsertmetastar {%
lam2007collision}%
\begin{APACrefauthors}%
Lam, T.M.%
, Mulder, M.%
\BCBL {} Van~Paassen, M.%
\end{APACrefauthors}%
\unskip\
\newblock
\APACrefYearMonthDay{2007}{}{}.
\newblock
{\BBOQ}\APACrefatitle {Collision avoidance in UAV tele-operation with time delay} {Collision avoidance in uav tele-operation with time delay}.{\BBCQ}
\newblock
 \APACrefbtitle {2007 IEEE International Conference on Systems, Man and Cybernetics} {2007 ieee international conference on systems, man and cybernetics}\ (\BPGS\ 997--1002).
\PrintBackRefs{\CurrentBib}

\bibitem [\protect \citeauthoryear {%
Lee%
, Franchi%
, Giordano%
, Son%
\BCBL {}\ \BBA {} B{\"u}lthoff%
}{%
Lee%
\ \protect \BOthers {.}}{%
{\protect \APACyear {2011}}%
}]{%
lee2011haptic}
\APACinsertmetastar {%
lee2011haptic}%
\begin{APACrefauthors}%
Lee, D.%
, Franchi, A.%
, Giordano, P.R.%
, Son, H.I.%
\BCBL {} B{\"u}lthoff, H.H.%
\end{APACrefauthors}%
\unskip\
\newblock
\APACrefYearMonthDay{2011}{}{}.
\newblock
{\BBOQ}\APACrefatitle {Haptic teleoperation of multiple unmanned aerial vehicles over the internet} {Haptic teleoperation of multiple unmanned aerial vehicles over the internet}.{\BBCQ}
\newblock
 \APACrefbtitle {{IEEE} International Conference on Robotics and Automation} {{IEEE} international conference on robotics and automation}\ (\BPGS\ 1341--1347).
\PrintBackRefs{\CurrentBib}

\bibitem [\protect \citeauthoryear {%
Lee%
\ \BBA {} Huang%
}{%
Lee%
\ \BBA {} Huang%
}{%
{\protect \APACyear {2010}}%
}]{%
lee2010passive}
\APACinsertmetastar {%
lee2010passive}%
\begin{APACrefauthors}%
Lee, D.%
\BCBT {}\ \BBA {} Huang, K.%
\end{APACrefauthors}%
\unskip\
\newblock
\APACrefYearMonthDay{2010}{}{}.
\newblock
{\BBOQ}\APACrefatitle {Passive-set-position-modulation framework for interactive robotic systems} {Passive-set-position-modulation framework for interactive robotic systems}.{\BBCQ}
\newblock
\APACjournalVolNumPages{{IEEE} Transactions on Robotics and Automation}{26}{2}{354--369,}
\newblock

\newblock

\PrintBackRefs{\CurrentBib}

\bibitem [\protect \citeauthoryear {%
Li%
, Kapoor%
\BCBL {}\ \BBA {} Taylor%
}{%
Li%
\ \protect \BOthers {.}}{%
{\protect \APACyear {2005}}%
}]{%
li2005constrained}
\APACinsertmetastar {%
li2005constrained}%
\begin{APACrefauthors}%
Li, M.%
, Kapoor, A.%
\BCBL {} Taylor, R.H.%
\end{APACrefauthors}%
\unskip\
\newblock
\APACrefYearMonthDay{2005}{}{}.
\newblock
{\BBOQ}\APACrefatitle {A constrained optimization approach to virtual fixtures} {A constrained optimization approach to virtual fixtures}.{\BBCQ}
\newblock
 \APACrefbtitle {{IEEE} International Conference on Intelligent Robots and Systems} {{IEEE} international conference on intelligent robots and systems}\ (\BPGS\ 1408--1413).
\PrintBackRefs{\CurrentBib}

\bibitem [\protect \citeauthoryear {%
McCarley%
\ \BBA {} Wickens%
}{%
McCarley%
\ \BBA {} Wickens%
}{%
{\protect \APACyear {2005}}%
}]{%
mccarley2005human}
\APACinsertmetastar {%
mccarley2005human}%
\begin{APACrefauthors}%
McCarley, J.S.%
\BCBT {}\ \BBA {} Wickens, C.D.%
\end{APACrefauthors}%
\unskip\
\newblock
\APACrefYearMonthDay{2005}{}{}.
\newblock
\APACrefbtitle {Human factors implications of UAVs in the national airspace} {Human factors implications of uavs in the national airspace}\ \APACbVolEdTR{}{\BTR{}\ \BNUM\ AHFD-05-05/FAA-05-01}.
\newblock
\APACaddressInstitution{}{Aviation Human Factors Division, Savoy, IL}.
\PrintBackRefs{\CurrentBib}

\bibitem [\protect \citeauthoryear {%
Mersha%
, Stramigioli%
\BCBL {}\ \BBA {} Carloni%
}{%
Mersha%
\ \protect \BOthers {.}}{%
{\protect \APACyear {2013}}%
}]{%
mersha2013bilateral}
\APACinsertmetastar {%
mersha2013bilateral}%
\begin{APACrefauthors}%
Mersha, A.Y.%
, Stramigioli, S.%
\BCBL {} Carloni, R.%
\end{APACrefauthors}%
\unskip\
\newblock
\APACrefYearMonthDay{2013}{}{}.
\newblock
{\BBOQ}\APACrefatitle {On bilateral teleoperation of aerial robots} {On bilateral teleoperation of aerial robots}.{\BBCQ}
\newblock
\APACjournalVolNumPages{{IEEE} Transactions on Robotics and Automation}{30}{1}{258--274,}
\newblock

\newblock

\PrintBackRefs{\CurrentBib}

\bibitem [\protect \citeauthoryear {%
Nguyen%
\ \BBA {} Sreenath%
}{%
Nguyen%
\ \BBA {} Sreenath%
}{%
{\protect \APACyear {2016}}%
}]{%
Nguyen2016}
\APACinsertmetastar {%
Nguyen2016}%
\begin{APACrefauthors}%
Nguyen, Q.%
\BCBT {}\ \BBA {} Sreenath, K.%
\end{APACrefauthors}%
\unskip\
\newblock
\APACrefYearMonthDay{2016}{}{}.
\newblock
{\BBOQ}\APACrefatitle {Exponential control barrier functions for enforcing high relative-degree safety-critical constraints} {Exponential control barrier functions for enforcing high relative-degree safety-critical constraints}.{\BBCQ}
\newblock
 \APACrefbtitle {American Control Conference (ACC)} {American control conference (acc)}\ (\BPGS\ 322--328).
\PrintBackRefs{\CurrentBib}

\bibitem [\protect \citeauthoryear {%
Niemeyer%
, Preusche%
\BCBL {}\ \BBA {} Hirzinger%
}{%
Niemeyer%
\ \protect \BOthers {.}}{%
{\protect \APACyear {2008}}%
}]{%
Niemeyer2008}
\APACinsertmetastar {%
Niemeyer2008}%
\begin{APACrefauthors}%
Niemeyer, G.%
, Preusche, C.%
\BCBL {} Hirzinger, G.%
\end{APACrefauthors}%
\unskip\
\newblock
\APACrefYearMonthDay{2008}{}{}.
\newblock
{\BBOQ}\APACrefatitle {Telerobotics} {Telerobotics}.{\BBCQ}
\newblock
 \APACrefbtitle {Springer handbook of robotics} {Springer handbook of robotics}\ (\BPGS\ 741--757).
\PrintBackRefs{\CurrentBib}

\bibitem [\protect \citeauthoryear {%
Niemeyer%
\ \BBA {} Slotine%
}{%
Niemeyer%
\ \BBA {} Slotine%
}{%
{\protect \APACyear {2004}}%
}]{%
niemeyer2004}
\APACinsertmetastar {%
niemeyer2004}%
\begin{APACrefauthors}%
Niemeyer, G.%
\BCBT {}\ \BBA {} Slotine, J\BHBI J.E.%
\end{APACrefauthors}%
\unskip\
\newblock
\APACrefYearMonthDay{2004}{}{}.
\newblock
{\BBOQ}\APACrefatitle {Telemanipulation with time delays} {Telemanipulation with time delays}.{\BBCQ}
\newblock
\APACjournalVolNumPages{The International Journal of Robotics Research}{23}{9}{873--890,}
\newblock

\newblock

\PrintBackRefs{\CurrentBib}

\bibitem [\protect \citeauthoryear {%
Olson%
}{%
Olson%
}{%
{\protect \APACyear {2011}}%
}]{%
AprilTag}
\APACinsertmetastar {%
AprilTag}%
\begin{APACrefauthors}%
Olson, E.%
\end{APACrefauthors}%
\unskip\
\newblock
\APACrefYearMonthDay{2011}{}{}.
\newblock
{\BBOQ}\APACrefatitle {AprilTag: A robust and flexible visual fiducial system.} {Apriltag: A robust and flexible visual fiducial system.}{\BBCQ}
\newblock
 \APACrefbtitle {Proc.\ 2011 IEEE International Conference on Robotics and Automation (ICRA)} {Proc.\ 2011 ieee international conference on robotics and automation (icra)}\ (\BPG~3400-3407).
\PrintBackRefs{\CurrentBib}

\bibitem [\protect \citeauthoryear {%
Omari%
, Hua%
, Ducard%
\BCBL {}\ \BBA {} Hamel%
}{%
Omari%
\ \protect \BOthers {.}}{%
{\protect \APACyear {2013}}%
}]{%
Omari2013}
\APACinsertmetastar {%
Omari2013}%
\begin{APACrefauthors}%
Omari, S.%
, Hua, M\BHBI D.%
, Ducard, G.%
\BCBL {} Hamel, T.%
\end{APACrefauthors}%
\unskip\
\newblock
\APACrefYearMonthDay{2013}{}{}.
\newblock
{\BBOQ}\APACrefatitle {Bilateral haptic teleoperation of {VTOL UAVs}} {Bilateral haptic teleoperation of {VTOL UAVs}}.{\BBCQ}
\newblock
 \APACrefbtitle {{IEEE} International Conference on Robotics and Automation} {{IEEE} international conference on robotics and automation}\ (\BPGS\ 2393--2399).
\PrintBackRefs{\CurrentBib}

\bibitem [\protect \citeauthoryear {%
Payandeh%
\ \BBA {} Stanisic%
}{%
Payandeh%
\ \BBA {} Stanisic%
}{%
{\protect \APACyear {2002}}%
}]{%
payandeh2002application}
\APACinsertmetastar {%
payandeh2002application}%
\begin{APACrefauthors}%
Payandeh, S.%
\BCBT {}\ \BBA {} Stanisic, Z.%
\end{APACrefauthors}%
\unskip\
\newblock
\APACrefYearMonthDay{2002}{}{}.
\newblock
{\BBOQ}\APACrefatitle {On application of virtual fixtures as an aid for telemanipulation and training} {On application of virtual fixtures as an aid for telemanipulation and training}.{\BBCQ}
\newblock
 \APACrefbtitle {Symposium on Haptic Interfaces for Virtual Environment and Teleoperator Systems} {Symposium on haptic interfaces for virtual environment and teleoperator systems}\ (\BPGS\ 18--23).
\PrintBackRefs{\CurrentBib}

\bibitem [\protect \citeauthoryear {%
Rifa{\"\i}%
, Hua%
, Hamel%
\BCBL {}\ \BBA {} Morin%
}{%
Rifa{\"\i}%
\ \protect \BOthers {.}}{%
{\protect \APACyear {2011}}%
}]{%
rifai2011haptic}
\APACinsertmetastar {%
rifai2011haptic}%
\begin{APACrefauthors}%
Rifa{\"\i}, H.%
, Hua, M\BHBI D.%
, Hamel, T.%
\BCBL {} Morin, P.%
\end{APACrefauthors}%
\unskip\
\newblock
\APACrefYearMonthDay{2011}{}{}.
\newblock
{\BBOQ}\APACrefatitle {Haptic-based bilateral teleoperation of underactuated unmanned aerial vehicles} {Haptic-based bilateral teleoperation of underactuated unmanned aerial vehicles}.{\BBCQ}
\newblock
\APACjournalVolNumPages{IFAC Proceedings Volumes}{44}{1}{13782--13788,}
\newblock

\newblock

\PrintBackRefs{\CurrentBib}

\bibitem [\protect \citeauthoryear {%
Rosenberg%
}{%
Rosenberg%
}{%
{\protect \APACyear {1993}}%
}]{%
rosenberg1993virtual}
\APACinsertmetastar {%
rosenberg1993virtual}%
\begin{APACrefauthors}%
Rosenberg, L.B.%
\end{APACrefauthors}%
\unskip\
\newblock
\APACrefYearMonthDay{1993}{}{}.
\newblock
{\BBOQ}\APACrefatitle {Virtual fixtures: Perceptual tools for telerobotic manipulation} {Virtual fixtures: Perceptual tools for telerobotic manipulation}.{\BBCQ}
\newblock
 \APACrefbtitle {{IEEE} virtual reality annual international symposium} {{IEEE} virtual reality annual international symposium}\ (\BPGS\ 76--82).
\PrintBackRefs{\CurrentBib}

\bibitem [\protect \citeauthoryear {%
Ryu%
, Preusche%
, Hannaford%
\BCBL {}\ \BBA {} Hirzinger%
}{%
Ryu%
\ \protect \BOthers {.}}{%
{\protect \APACyear {2005}}%
}]{%
ryu2005time}
\APACinsertmetastar {%
ryu2005time}%
\begin{APACrefauthors}%
Ryu, J\BHBI H.%
, Preusche, C.%
, Hannaford, B.%
\BCBL {} Hirzinger, G.%
\end{APACrefauthors}%
\unskip\
\newblock
\APACrefYearMonthDay{2005}{}{}.
\newblock
{\BBOQ}\APACrefatitle {Time domain passivity control with reference energy following} {Time domain passivity control with reference energy following}.{\BBCQ}
\newblock
\APACjournalVolNumPages{{IEEE} Transactions on Control Systems Technology}{13}{5}{737--742,}
\newblock

\newblock

\PrintBackRefs{\CurrentBib}

\bibitem [\protect \citeauthoryear {%
Salcudean%
\ \BBA {} Vlaar%
}{%
Salcudean%
\ \BBA {} Vlaar%
}{%
{\protect \APACyear {1997}}%
}]{%
salcudean1997emulation}
\APACinsertmetastar {%
salcudean1997emulation}%
\begin{APACrefauthors}%
Salcudean, S.E.%
\BCBT {}\ \BBA {} Vlaar, T.D.%
\end{APACrefauthors}%
\unskip\
\newblock
\APACrefYearMonthDay{1997}{}{}.
\newblock
{\BBOQ}\APACrefatitle {On the emulation of stiff walls and static friction with a magnetically levitated input/output device} {On the emulation of stiff walls and static friction with a magnetically levitated input/output device}.{\BBCQ}
\newblock
\APACjournalVolNumPages{{ASME} Journal of Dynamic Systems, Measurement, and Control}{}{}{,}
\newblock

\newblock

\PrintBackRefs{\CurrentBib}

\bibitem [\protect \citeauthoryear {%
Schwarting%
, Alonso-Mora%
, Pauli%
, Karaman%
\BCBL {}\ \BBA {} Rus%
}{%
Schwarting%
\ \protect \BOthers {.}}{%
{\protect \APACyear {2017}}%
}]{%
schwarting2017parallel}
\APACinsertmetastar {%
schwarting2017parallel}%
\begin{APACrefauthors}%
Schwarting, W.%
, Alonso-Mora, J.%
, Pauli, L.%
, Karaman, S.%
\BCBL {} Rus, D.%
\end{APACrefauthors}%
\unskip\
\newblock
\APACrefYearMonthDay{2017}{}{}.
\newblock
{\BBOQ}\APACrefatitle {Parallel autonomy in automated vehicles: Safe motion generation with minimal intervention} {Parallel autonomy in automated vehicles: Safe motion generation with minimal intervention}.{\BBCQ}
\newblock
 \APACrefbtitle {Proc.\ 2017 IEEE International Conference on Robotics and Automation (ICRA)} {Proc.\ 2017 ieee international conference on robotics and automation (icra)}\ (\BPGS\ 1928--1935).
\PrintBackRefs{\CurrentBib}

\bibitem [\protect \citeauthoryear {%
Selvaggio%
, Giordano%
, Ficuciello%
\BCBL {}\ \BBA {} Siciliano%
}{%
Selvaggio%
\ \protect \BOthers {.}}{%
{\protect \APACyear {2019}}%
}]{%
selvaggio2019passive}
\APACinsertmetastar {%
selvaggio2019passive}%
\begin{APACrefauthors}%
Selvaggio, M.%
, Giordano, P.R.%
, Ficuciello, F.%
\BCBL {} Siciliano, B.%
\end{APACrefauthors}%
\unskip\
\newblock
\APACrefYearMonthDay{2019}{}{}.
\newblock
{\BBOQ}\APACrefatitle {Passive task-prioritized shared-control teleoperation with haptic guidance} {Passive task-prioritized shared-control teleoperation with haptic guidance}.{\BBCQ}
\newblock
 \APACrefbtitle {{IEEE} International Conference on Robotics and Automation} {{IEEE} international conference on robotics and automation}\ (\BPGS\ 430--436).
\PrintBackRefs{\CurrentBib}

\bibitem [\protect \citeauthoryear {%
Stramigioli%
, Mahony%
\BCBL {}\ \BBA {} Corke%
}{%
Stramigioli%
\ \protect \BOthers {.}}{%
{\protect \APACyear {2010}}%
}]{%
stramigioli2010novel}
\APACinsertmetastar {%
stramigioli2010novel}%
\begin{APACrefauthors}%
Stramigioli, S.%
, Mahony, R.%
\BCBL {} Corke, P.%
\end{APACrefauthors}%
\unskip\
\newblock
\APACrefYearMonthDay{2010}{}{}.
\newblock
{\BBOQ}\APACrefatitle {A novel approach to haptic tele-operation of aerial robot vehicles} {A novel approach to haptic tele-operation of aerial robot vehicles}.{\BBCQ}
\newblock
 \APACrefbtitle {{IEEE} International Conference on Robotics and Automation} {{IEEE} international conference on robotics and automation}\ (\BPGS\ 5302--5308).
\PrintBackRefs{\CurrentBib}

\bibitem [\protect \citeauthoryear {%
Turro%
\ \BBA {} Khatib%
}{%
Turro%
\ \BBA {} Khatib%
}{%
{\protect \APACyear {2001}}%
}]{%
turro2001haptically}
\APACinsertmetastar {%
turro2001haptically}%
\begin{APACrefauthors}%
Turro, N.%
\BCBT {}\ \BBA {} Khatib, O.%
\end{APACrefauthors}%
\unskip\
\newblock
\APACrefYearMonthDay{2001}{}{}.
\newblock
{\BBOQ}\APACrefatitle {Haptically augmented teleoperation} {Haptically augmented teleoperation}.{\BBCQ}
\newblock
 \APACrefbtitle {Experimental Robotics VII} {Experimental robotics vii}\ (\BPGS\ 1--10).
\newblock
\APACaddressPublisher{}{Springer}.
\PrintBackRefs{\CurrentBib}

\bibitem [\protect \citeauthoryear {%
Xu%
\ \BBA {} Sreenath%
}{%
Xu%
\ \BBA {} Sreenath%
}{%
{\protect \APACyear {2018}}%
}]{%
Xu2018}
\APACinsertmetastar {%
Xu2018}%
\begin{APACrefauthors}%
Xu, B.%
\BCBT {}\ \BBA {} Sreenath, K.%
\end{APACrefauthors}%
\unskip\
\newblock
\APACrefYearMonthDay{2018}{}{}.
\newblock
{\BBOQ}\APACrefatitle {Safe Teleoperation of Dynamic UAVs Through Control Barrier Functions} {Safe teleoperation of dynamic uavs through control barrier functions}.{\BBCQ}
\newblock
 \APACrefbtitle {Proc.\ 2018 IEEE International Conference on Robotics and Automation (ICRA)} {Proc.\ 2018 ieee international conference on robotics and automation (icra)}\ (\BPGS\ 7848--7855).
\PrintBackRefs{\CurrentBib}

\bibitem [\protect \citeauthoryear {%
Zhang%
\ \BBA {} Tron%
}{%
Zhang%
\ \BBA {} Tron%
}{%
{\protect \APACyear {2021}}%
}]{%
zhang2021stable}
\APACinsertmetastar {%
zhang2021stable}%
\begin{APACrefauthors}%
Zhang, D.%
\BCBT {}\ \BBA {} Tron, R.%
\end{APACrefauthors}%
\unskip\
\newblock
\APACrefYearMonthDay{2021}{}{}.
\newblock
{\BBOQ}\APACrefatitle {Stable Haptic Teleoperation of UAVs via Small L 2 Gain and Control Barrier Functions} {Stable haptic teleoperation of uavs via small l 2 gain and control barrier functions}.{\BBCQ}
\newblock
 \APACrefbtitle {2021 IEEE/RSJ Intl. Conf. on Intelligent Robots and Systems (IROS)} {2021 ieee/rsj intl. conf. on intelligent robots and systems (iros)}\ (\BPGS\ 8352--8357).
\PrintBackRefs{\CurrentBib}

\bibitem [\protect \citeauthoryear {%
Zhang%
, Tron%
\BCBL {}\ \BBA {} Khurshid%
}{%
Zhang%
\ \protect \BOthers {.}}{%
{\protect \APACyear {2021}}%
}]{%
zhang2021haptic}
\APACinsertmetastar {%
zhang2021haptic}%
\begin{APACrefauthors}%
Zhang, D.%
, Tron, R.%
\BCBL {} Khurshid, R.P.%
\end{APACrefauthors}%
\unskip\
\newblock
\APACrefYearMonthDay{2021}{}{}.
\newblock
{\BBOQ}\APACrefatitle {Haptic Feedback Improves Human-Robot Agreement and User Satisfaction in Shared-Autonomy Teleoperation} {Haptic feedback improves human-robot agreement and user satisfaction in shared-autonomy teleoperation}.{\BBCQ}
\newblock
 \APACrefbtitle {2021 IEEE Intl. Conf. on Robotics and Automation (ICRA)} {2021 ieee intl. conf. on robotics and automation (icra)}\ (\BPGS\ 3306--3312).
\PrintBackRefs{\CurrentBib}

\bibitem [\protect \citeauthoryear {%
Zhang%
, Yang%
\BCBL {}\ \BBA {} Khurshid%
}{%
Zhang%
\ \protect \BOthers {.}}{%
{\protect \APACyear {2020}}%
}]{%
zhang2020haptic}
\APACinsertmetastar {%
zhang2020haptic}%
\begin{APACrefauthors}%
Zhang, D.%
, Yang, G.%
\BCBL {} Khurshid, R.P.%
\end{APACrefauthors}%
\unskip\
\newblock
\APACrefYearMonthDay{2020}{}{}.
\newblock
{\BBOQ}\APACrefatitle {Haptic Teleoperation of UAVs Through Control Barrier Functions} {Haptic teleoperation of uavs through control barrier functions}.{\BBCQ}
\newblock
\APACjournalVolNumPages{IEEE Trans. on Haptics}{13}{1}{109--115,}
\newblock

\newblock

\PrintBackRefs{\CurrentBib}

\end{thebibliography}

\end{document}